\newcommand{\abs}[1]{\left\vert #1\right\vert}
\newcommand{\scalarprod}[2]{\langle #1, #2\rangle}
\DeclarePairedDelimiter{\ceil}{\lceil}{\rceil}
\DeclarePairedDelimiter\floor{\lfloor}{\rfloor}
\DeclareMathOperator*{\argmin}{arg\,min}
\newcommand\equaldef{\mathrel{\overset{\makebox[0pt]{\mbox{\normalfont\tiny\sffamily def}}}{=}}}
\newtheorem{claim}{Claim}
\newtheorem{theorem}[claim]{Theorem}
\newtheorem{proposition}[claim]{Proposition}
\newtheorem{definition}[claim]{Definition}
\newtheorem*{notation*}{Notation}
\newtheorem*{remark}{Remark}
\title{PAC-learning gains of Turing machines over circuits and neural networks}
\date{June, 2022}
\author{ Brieuc Pinon \\
	ICTEAM/INMA\\
	UCLouvain\\
	Louvain-la-Neuve, Belgium \\
	\texttt{brieuc.pinon@uclouvain.be}
    \And
    Raphaël Jungers \\
    ICTEAM/INMA\\
    UCLouvain\\
    Louvain-la-Neuve, Belgium \\
    \texttt{raphael.jungers@uclouvain.be} \\
    \And
    Jean-Charles Delvenne \\
    ICTEAM/INMA\\
    UCLouvain\\
    Louvain-la-Neuve, Belgium \\
    \texttt{jean-charles.delvenne@uclouvain.be}
}
\begin{document}

\maketitle

\begin{abstract}
    A caveat to many applications of the current Deep Learning approach is the need for large-scale data. One improvement suggested by Kolmogorov Complexity results is to apply the minimum description length principle with computationally universal models. We study the potential gains in sample efficiency that this approach can bring in principle. We use polynomial-time Turing machines to represent computationally universal models and Boolean circuits to represent Artificial Neural Networks (ANNs) acting on finite-precision digits.
    
    Our analysis unravels direct links between our question and Computational Complexity results. We provide lower and upper bounds on the potential gains in sample efficiency between the MDL applied with Turing machines instead of ANNs.  Our bounds depend on the bit-size of the input of the Boolean function to be learned. Furthermore, we highlight close relationships between classical open problems in Circuit Complexity and the tightness of these.
\end{abstract}

\keywords{
Kolmogorov Complexity \and minimum description length \and PAC-learning \and Computational Complexity \and Deep Learning \and Program Induction}

%% main text
\section{Introduction}
    Recent years have seen a renew of interest in the development of methods to make inductive inferences in computationally universal programming languages. This work assesses the gain in sample efficiency that can be attained in principle from these methods in comparison to ANNs under simplifying algorithmic assumptions.
    
    This theoretical investigation is similar to comparisons that have been made between ANNs and other classical Machine Learning algorithms to explain the experimental success of the former.

    We present those two branches of Machine Learning research. The branch that develops methods to make inductive inferences in computationally universal programming languages; and the branch that compares encodings for hypotheses based on the induced minimal description sizes to express functions.
    
    \paragraph{Inductive inference in computationally universal programming languages}
    An important aspect of Deep Learning (DL) research is the development of new architectures. Some of these architectures can efficiently address particular problems, like Convolutional Neural Network (CNN) for Computer Vision, Recurrent Neural Networks (RNN) for natural language processing, and Graph Neural Networks (GNN) that can be adapted to a wide variety of applications, see \cite{battaglia2018relational}.
    
    A common critical point in the development of these architectures is the exploitation of prior knowledge about the task at hand. This exploitation is done by imposing on the model some factorized representation. The structure of the representation enforces the a priori known symmetries in the underlying function to learn. This leads to improvement in the sample efficiency, see \cite{xu-2020-reason} for a paper taking this perspective for GNN with associated PAC-learning results.
    
    For CNN, RNN, and GNN, the factorization's structure is classically fixed a priori with respect to prior information about the symmetries in the function to learn. Alternatively, the structure could be learned from the data.
    
    It is thus natural to invest efforts in the construction of learning algorithms with the flexibility to define and use potential abstract structures found in the data. In this line of work, we mention two approaches: \emph{Differentiable programming} which consists in learning with DL architectures close to Turing-complete systems such as in \cite{graves2014neural,graves2016hybrid,joulin2015inferring,kaiser2015neural,sukhbaatar2015end,kurach2016neural,schlag2018learning,dehghani2018universal,santoro2018relational}; and learning in non-differentiable programs written in universal languages for which combinatorial optimization methods such as genetic programming must be used, see \cite{koza2005genetic}. This latter approach is usually referred to as \emph{Inductive Programming} or \emph{Program synthesis} from examples, see \cite{kitzelmann2009inductive} and \cite{gulwani2017program}.
    
    Our work is a theoretical investigation of the potential sample efficiency gains that could be observed from Inductive Programming or learning with these new expressive DL architectures in comparison to classical Artificial Neural Networks.
    
    \paragraph{Models: the necessary sizes to represent functions}
    This last decade DL algorithms allowed to tackle problems that had been impossible to solve until then.
    These successes opened the question: Why DL is more efficient on these complex problems than other classical Machine Learning algorithms, such as shallow neural networks?
    
    A theoretical answer is that depth in Artificial Neural Networks (ANNs) allows us to efficiently encode some classes of functions. More precisely there exists a sequence of functions for which low-depth ANNs need an exponential number of neurons to approximate it and, in comparison, higher-depth ANNs only need a polynomial number of neurons to achieve the same approximation. Examples of references on the subject are \cite{telgarsky2015representation,liang2016deep,eldan2016power}, and, from the perspective of Boolean circuits, \cite{rossman2015average}.
    
    The expressive power of depth in ANNs to efficiently represent functions in terms of the sizes of the hypotheses has also been studied in comparison with Support Vector Machines by \cite{bengio2007scaling} and with Decision Trees in \cite{bengio2010decision}.
    
    Our work is inspired by these comparisons between models. Similarly, we show a separation in terms of the sizes of the hypotheses that are necessary to fit functions. More precisely, we study the advantage that Turing machines have over circuits and neural networks. Pushing this observation to PAC-learning claims, we study potential gains on the number of samples that are necessary to learn Boolean functions by Turing machines.
    
    \paragraph{Objectives and formal choices}
    Given the motivations for induction in computationally universal programming languages, we investigate the sample efficiency gains that we can hope from this approach relative to classical ANNs.
    
    We use Turing machines to represent models based on Turing-complete systems, such as the new expressive DL architectures for example. This choice of the computational model is not determinant since Turing machines can serve as a proxy to study other Turing-complete systems.
    
    We compare Turing machines with Boolean circuits and classical ANNs. By ANNs we mean the simplest form of DL with non-linear activation functions composed on top of linear transformations and without any repetition of the weights such as in CNN or RNN. To represent ANNs we choose a computationally feasible model, the model is not based on real numbers but is discrete and finite.
    
    The \emph{minimum description length} (MDL) principle consists in choosing the hypothesis with the shortest description length while being consistent with the data. It is a classical formalization of Occam's razor principle. The MDL principle will allow us to translate our choices of models ---which are ways to express the hypotheses--- into learning algorithms. It will give us a general and effective way to compare inductive biases posed by different representations such as ANNs and Turing machines for example.
    
    To assess the performance of the models/learning algorithms, we follow the classical PAC-learning framework. However, we do not explore the computational efficiency question of finding the shortest hypothesis; rather we focus on the sample efficiency. In other words, throughout the paper, we neglect the computational resources needed to find the minimum description length hypothesis consistent with the data; but set our attention on the size of the dataset that is necessary to find a hypothesis that is Probably Approximately Correct.
    
    \paragraph{Outline}
    In Section 2, we provide some background on PAC-learning, interpreters, and the MDL principle.
    
    Then, from it, we introduce in Section 3 the critical metric at the heart of our objectives: the \emph{sample efficiency gains} of a model over another.
    
    In Section 4, we prove bounds on the sample efficiency gains that circuits have over (polynomial-time) Turing machines and conversely. In particular, we show that the sample efficiency gains of polynomial-time Turing machines over circuits are at least linear in the input-size of the function to learn. Whether they are superlinear or not is an open question. We connect this question to different open problems from Computational Complexity.

\section{Background}
    \subsection{Our learning problem}
        We want to learn an unknown function $f$ belonging to a \emph{hypothesis class}
        \begin{equation}
            H^n=\{f:\mathcal{B}^n\rightarrow \mathcal{B}\},
        \end{equation}
        where $\mathcal{B}=\{0,1\}$, and thus $H^n$ is finite.
        
        For some fixed integer $n>0$, a \emph{learning problem} is determined by a boolean function $f\in H^n$ and a probability measure $\mathcal{P}$ on $\mathcal{B}^n$, $\mathcal{P}\in\Delta(\mathcal{B}^n)$.
        
        \begin{definition}
            A \emph{learning problem $m$-sample dataset} is a random variable defined as $\left[x_j,f(x_j)\right]_{j=1}^m$ where the $x_j$ are sampled independently and according to the learning problem probability measure $\mathcal{P}$.
        \end{definition}
        
        To \emph{solve} a learning problem is to find an approximation $\hat{f}$ of $f$ from a realization of the learning problem $m$-sample dataset, for some natural $m$.
        
        We formalize the notion of approximation with the classical accuracy.
        \begin{definition}
            The \emph{accuracy} of a function $\hat{f}\in H^n$ with respect to learning problem $f\in H^n,\,\mathcal{P}\in\Delta(\mathcal{B}^n)$ is
            \begin{equation}
            \mathit{acc}_f^\mathcal{P}(\hat{f}) \equaldef \Pr_{x\sim \mathcal{P}}\left[\hat{f}(x)= f(x)\right].
        \end{equation}
        \end{definition}
        
        \begin{definition}
            A \emph{learning algorithm} is a function that given the realization of a learning problem $m$-samples dataset, for some learning problem $f\in H^n,\mathcal{P}\in\Delta(\mathcal{B}^n)$, outputs a function $\hat{f}\in H^n$, for any $n,m\in\mathbb{N}^+$.
        \end{definition}
        We do not specify a representation for the output's function since, as said in the introduction, our work focuses on the sample efficiency of the learning algorithm and not its computational complexity.
        
        The learning algorithm sample efficiency performance will be assessed by PAC-learning claims.
        \begin{definition}
            For any $\epsilon\in(0,\nicefrac{1}{2}),\,\delta\in(0,1)$, an algorithm $\mathcal{A}$ has an $(\epsilon,\delta)$-PAC-learning performance with an $m$-sample dataset on learning problem $(f,\mathcal{P})$ if for all $m'\geq m$
            \begin{equation}
                \Pr_{x\sim \mathcal{P}^{m'}}\left[\mathit{acc}_f^\mathcal{P}\left(\mathcal{A}(\left[x_j, f(x_j)\right]_{j=1}^{m'})\right)\geq1-\epsilon\right]\geq 1-\delta.
            \end{equation}
        \end{definition}
    
    \subsection{Interpreter}
        We now introduce the concept of interpreters, which will allow us to define the notion of description-length of a hypothesis given a model.
        
        \begin{definition}\label{def:interpreter}
            An \emph{interpreter} $\varphi^T$ is a Turing machine computing a two-arguments partial computable binary-valued function $\varphi:\mathcal{B}^*\times \mathcal{B}^*\rightarrow \mathcal{B}\cup\{\bot\}$.
            Where $\mathcal{B}^*$ is the set of finite length binary strings, $\mathcal{B}^*=\cup_{i=0,1,\ldots}\mathcal{B}^i$, and $\bot$ is the symbol representing non-halting executions. 
        \end{definition}
        
        In the rest of the paper, we will identify the Turing machine implementation with its computed function by dropping the $T$ in $\varphi^T$ with exceptions where the distinction is useful.
        
        In our developments, the first argument will correspond to the code/program/hypothesis and the second argument will correspond to the input of the function to learn.
        
        We make Definition \ref{def:interpreter} concrete by presenting some interpreters, see Section \ref{section:interpreters} in the appendices for complete descriptions:
        \begin{itemize}
            \item \textbf{Universal Turing Machine $\mathcal{U}$:}
            \begin{multline*}
                \mathcal{U}([\text{binary encoding of a two-inputs Turing machine } \mathcal{T},\text{first input}], \text{second input})\\
                = \mathcal{T}(\text{first input},\text{second input}).
            \end{multline*}
            
            Note that we will define Turing machines with, only, binary-valued outputs. The encoding of Turing machines and the Universal Turing machine are formally defined in the appendices, Definitions \ref{def:enc-turing} and \ref{def:univ-turing} respectively.
            
            \item \textbf{Polynomial-time Universal Turing Machine $\mathcal{U}^c$:}
            A Universal Turing machine with a limited computation time in $\mathcal{O}(n^c)$ steps, where $n$ is the size of the input and $c\in\mathbb{N}^+$.
            
            This interpreter will allow us to make claims using hypotheses with reasonable running time.
            
            \item \textbf{Boolean circuit interpreter $\mathcal{C}$:}
            A Boolean circuit is a directed acyclic graph where each node is either an input node or a gate. Each input node is associated with an input value, and gates are associated with unary or binary logical operators (OR, AND, and NOT). There is one node with no child (sink), this node is the output node of the circuit.
            
            The topology of the graph is consistent with the nodes' logical association: input nodes' are not the child of any other node, gates with a binary operator have two parent nodes, and gates with a unary logical operator have a unique parent node.
            
            The output of a circuit on a binary input is the result of the output node after the application of the logical operations associated with the gates. In this computation, the input nodes naturally take their values from the input.
            
            Boolean circuits are encoded as binary strings by first noting the input-size, then the number of nodes in the circuit (the circuit's size), and finally by describing the nodes one by one (associated input or logical operation, and the potential parents).
            
            With this encoding, the description-length of a Boolean circuit of size $S$ is in $O(S\log S)$.
            
            Again the application of this interpreter is
            \begin{equation*}
                \mathcal{C}(\text{binary representation of a Boolean circuit}, \text{input}) = \text{output of the circuit}.
            \end{equation*}
            
            \item \textbf{ANNs interpreter:} We define an ANN as a directed acyclic graph whose nodes correspond to either an input or a floating-point operator. Similarly to Boolean circuits, each input node is associated with one variable of the binary input. The floating-point operators are taken from a predefined arbitrary set. This set can contain binary or unary operators such as $+,-,\times,/,\max(.,.),\exp(.)$; it also contains 0-ary/constant operators: the floating-point values themselves.
            
            Again similarly to Boolean circuits, there is a unique output node that has no children.
            
            The values of the nodes, given an input, are determined by the recursive application of the input nodes' association to input variables or of the corresponding floating-point operators until a value for the output node is obtained.
            
            There is a linear relationship between the description-length of a function with the Boolean circuits' or the ANNs' interpreter, see Proposition \ref{prop:ANN-bool-circuit-descr-length} in the appendices. We use this link to present all the results with the interpreter of Boolean circuits, $\mathcal{C}$, while exactly the same results will hold for ANN.
            
            \item \textbf{Support Vector Machines, Binary Decision Trees, CNN, RNN, GNN :}
            An interpreter can be defined for each of these classes. Note that in some cases the length of the binary representation of some functions can be drastically reduced or increased by using these specialized interpreters.
            
            The existence of these interpreters is noted to point out that the formal background presented applies to more than just Turing machines, Boolean circuits, and ANNs. None of these interpreters will be discussed further in this work.
        \end{itemize}

    \subsection{The Minimum description length (MDL) principle}
        We define how the application of the MDL principle with an interpreter gives a learning algorithm.
        
        \begin{notation*}
            Let $\abs{h}$ be the length of a binary string $h$.
        
            We denote, for any interpeter $\varphi$, any $n$ and any function $f\in H^n$:
            \begin{equation*}
                \abs{f}_\varphi \equaldef \min_{h\in\mathcal{B}^*} \abs{h}\; \text{s.t.}\; \varphi(h,x) = f(x),\; \forall x\in\mathcal{B}^n.
            \end{equation*}
            By convention, if the problem is infeasible the value will be $+\infty$.
        \end{notation*}
        
        \begin{definition}\emph{MDL principle with an interpreter $\varphi$: $\mathit{MDL}^\varphi$.}
            From an interpreter $\varphi$, one can create the following learning algorithm, $\mathit{MDL^\varphi}$.
            
            Input: the realization of a learning problem $m$-samples dataset.
            
            Select the output function, $\hat{f}$, corresponding to a minimal-description-length program consistent with the dataset
            \begin{equation}
            \begin{aligned}
                h^*\in&\argmin_{h\in \mathcal{B}^*} & & \abs{h} \\
                & \text{subject to} & & \varphi(h,x_j)=f(x_j), \; j\in\{1,\ldots m\};\\
                & & & \varphi(h,x)\neq \bot,\; \forall x\in \mathcal{B}^n.
            \end{aligned}
            \end{equation}
            The output function is thus $\hat{f}=\varphi(h^*,.)$.
        \end{definition}
        
        Thus, the choice of an interpreter completely determines the learning algorithm with the MDL principle. The inductive bias is fixed toward low complexity (short to express) hypotheses.
        
        Let us remark that, any computable time-limit on the execution of an interpreter $\varphi$ will ensure that $\mathit{MDL}^\varphi$ is computable. In this paper, we will always assume that such an arbitrary sufficiently large time-limit is imposed, say $2^{2^{2^n}}$ for example.
        
        The following PAC-learning bound will be the main proposition used through the paper to go from descriptions' length constraints to PAC-learning affirmations.
        It is based on a classical argument in the PAC-learning literature to show uniform convergence of the accuracy for finite hypothesis classes, see \cite{blumer1987occam}. The argument is simply adapted to our specific MDL framework.
        
        \begin{restatable}[Description-length PAC-guarantee]{proposition}{paclength}
            \label{prop:pac-length}
            There exists constants $a_1,a_2>0$ such that the following holds.
            
            For any interpreter $\varphi$ and associated learning algorithm $\mathit{MDL}^\varphi$, any $(f,\mathcal{P})$ learning problem and any PAC-learning parameters $\epsilon\in(0,\nicefrac{1}{2}),\,\delta\in(0,1)$, $\mathit{MDL}^\varphi$ has an $(\epsilon,\delta)$-PAC-learning performance with an $m$-sample dataset on the learning problem, where
            \begin{equation}
                m=\frac{a_1}{\epsilon} \left[\log \frac{1}{\delta} + \abs{f}_\varphi + a_2\right].
            \end{equation}
        \end{restatable}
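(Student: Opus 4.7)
The plan is to adapt the classical Occam's razor / Blumer-style union bound to our MDL setup. The central observation is that, because $f$ itself admits a program of length $\abs{f}_\varphi$ consistent with \emph{every} sample (in particular the $m$ sampled ones), the minimum-length consistent program $h^*$ returned by $\mathit{MDL}^\varphi$ is guaranteed to satisfy $\abs{h^*}\leq \abs{f}_\varphi$. Hence it suffices to bound the probability that \emph{some} hypothesis of length at most $\abs{f}_\varphi$ that is also totally defined on $\mathcal{B}^n$ is simultaneously consistent with the dataset and has accuracy strictly below $1-\epsilon$; if this bad event does not occur, then $\hat f=\varphi(h^*,\cdot)$ automatically has accuracy at least $1-\epsilon$.

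Next I would carry out the counting and the union bound. The number of binary strings of length at most $L$ is $\sum_{i=0}^{L}2^i \leq 2^{L+1}$, so the effective hypothesis class used by $\mathit{MDL}^\varphi$ at ``budget'' $L=\abs{f}_\varphi$ has cardinality at most $2^{L+1}$. For any fixed $\hat g \in H^n$ with $\mathit{acc}_f^\mathcal{P}(\hat g)<1-\epsilon$, independence of the $m$ samples gives
\begin{equation*}
\Pr_{x\sim\mathcal{P}^m}\bigl[\hat g(x_j)=f(x_j),\ j=1,\dots,m\bigr]\leq (1-\epsilon)^m \leq e^{-\epsilon m}.
\end{equation*}
A union bound over the at most $2^{L+1}$ relevant hypotheses bounds the probability of the bad event by $2^{L+1}\,e^{-\epsilon m}$. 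I would then require this quantity to be at most $\delta$, which, after taking logarithms and isolating $m$, yields a sufficient condition of the form
\begin{equation*}
m \geq \frac{1}{\epsilon}\Bigl[\ln\tfrac{1}{\delta} + (\abs{f}_\varphi+1)\ln 2\Bigr],
\end{equation*}
which is exactly the announced shape with absolute constants $a_1,a_2>0$ (absorbing $\ln 2$ and a change of logarithm base into $a_1$, and the $+1$ into $a_2$). Monotonicity in $m$ then delivers the guarantee for every $m'\geq m$.

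I do not expect a genuine obstacle: the only subtlety worth noting is that the ``hypothesis class'' is not a fixed subset of $H^n$ but the set of programs of length $\leq \abs{f}_\varphi$ that halt on every input of $\mathcal{B}^n$ (the second constraint in the $\mathit{MDL}^\varphi$ program), and that the argument treats each such program as an element of $H^n$ via $\varphi(h,\cdot)$. Collapsing multiple programs that compute the same function only decreases the cardinality, so the $2^{L+1}$ bound is still valid. The imposed computable time-limit mentioned after the definition of $\mathit{MDL}^\varphi$ ensures measurability and that the algorithm is well-defined, so no further care is needed. The whole argument is a direct transcription of the Occam bound of Blumer et al.\ into the description-length vocabulary used here.
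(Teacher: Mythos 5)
Your proof is correct and follows essentially the same route as the paper's: you bound the MDL output length by $\abs{f}_\varphi$, count at most $2^{\abs{f}_\varphi+1}$ candidate programs, apply the union bound with the $(1-\epsilon)^m\leq e^{-\epsilon m}$ estimate, and solve for $m$. The paper does exactly this (with $I=2^{\abs{f}_\varphi+1}-1$ rather than the looser $2^{\abs{f}_\varphi+1}$), though it omits the two small points you make explicit --- that several programs may represent the same function, which only helps, and that the bound is monotone in $m$ so the ``for all $m'\geq m$'' clause in the PAC definition is satisfied.
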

        
        For $\varphi=\mathcal{U}$ or $\varphi=\mathcal{C}$ under mild conditions on the size of the circuits considered, the guarantee of Proposition \ref{prop:pac-length} is tight up to a fixed factor for any configurations $(\epsilon,\delta)$, any input-size $n$ and any sufficiently large maximal description-length $\abs{f}_\varphi$ on some learning problems $(f\in H^n,\mathcal{P}\in \Delta(\mathcal{B}^n))$. This is shown using a VC-dimension based argument in the appendices, see Proposition \ref{prop:tight-pac-length} and its associated section.
    
\section{Sample efficiency gains}
    We propose here a PAC-learning criterion to compare the sample efficiency of two arbitrary MDL-based learning algorithms. This criterion will then be applied to MDL with circuits and MDL with Turing machines.
    
    We want to analyze the largest gap, in terms of necessary samples to get some learning performance, that can exist between two learning algorithms. We quantify the number of samples needed by a learning algorithm in order to solve a learning problem as the following.
    \begin{definition}\label{def:min-m}
        The minimal number of samples needed to get an $(\epsilon,\delta)$-PAC-learning performance for a learning algorithm $\mathit{MDL}^\varphi$ on a learning problem $(f,\mathcal{P})$ is
        \begin{multline}\label{eq:m-min}
            m^{\epsilon,\delta}_\varphi(f,\mathcal{P})\equaldef \min \bigl\{\{m\in\mathbb{N}^+|\,\mathit{MDL}^\varphi \text{ has an } (\epsilon,\delta)\text{-PAC-learning performance}\\\text{with an $m$-sample dataset on learning problem $(f,\mathcal{P})$}  \} \cup \{+\infty\}\bigr\}.
        \end{multline}
    \end{definition}
    
    The PAC-learning guarantee given in Proposition \ref{prop:pac-length} provides an upper-bound on the quantity defined in Equation \ref{eq:m-min}.
    
    Our main goal in this work is to study variations of the sample efficiency according to $n$ the input-size of the underlying function to learn. We will thus parametrize our criterion with this quantity.
    
    Moreover, we impose a practical restriction on the description-length of the function to learn.
    
    \begin{definition}\label{def:G}
        Given two interpreters $\varphi$ and $\psi$, any $\epsilon\in(0,\nicefrac{1}{2}),\,\delta\in(0,1)$, and any $n,d\in\mathbb{N}^+$, the \emph{sample efficiency gain} of $\mathit{MDL}^\varphi$ over $\mathit{MDL}^\psi$ is
        
        \begin{equation}
        \begin{aligned}
            G^d_{\varphi\rightarrow\psi}(\epsilon,\delta,n)\equaldef& \sup_{f\in H^n,\mathcal{P}\in\Delta(\mathcal{B}^n)} & & \frac{m^{\epsilon,\delta}_\psi(f,\mathcal{P})}{m^{\epsilon,\delta}_\varphi(f,\mathcal{P})} \\
            & \text{subject to} & & \abs{f}_\varphi\leq n^d.
        \end{aligned}
        \end{equation}
    \end{definition}
    
    The restriction $\abs{f}_\varphi\leq n^d$ naturally applies on the interpreter for which we try to study a potential sample efficiency advantage, $\varphi$; the restriction ensures from the PAC-learning guarantee presented in Proposition \ref{prop:pac-length} an $(\epsilon,\delta)$-PAC-learning performance with a number of samples polynomial in $n$.
    
    We now define an auxiliary metric for two reasons. First, it is a lower bound on the sample efficiency gains and will be used as such in some theorems' proofs. Second, on some questions we only obtained partial results that are expressed through this metric, instead of sample efficiency gains.
    \begin{definition}\label{def:G-tilde}
        Given two interpreters $\varphi$ and $\psi$, any $\epsilon\in(0,\nicefrac{1}{2}),\,\delta\in(0,1)$, any $n,d\in\mathbb{N}^+$, and $a_1,a_2>0$ fixed in Proposition \ref{prop:pac-length},
        \begin{equation}
        \begin{aligned}
            \Tilde{G}^d_{\varphi\rightarrow\psi}(\epsilon,\delta,n) \equaldef& \sup_{f\in H^n,\mathcal{P}\in\Delta(\mathcal{B}^n)} & & \frac{m^{\epsilon,\delta}_\psi(f,\mathcal{P})}{\frac{a_1}{\epsilon}(\log\frac{1}{\delta}+\abs{f}_\varphi+a_2)}\\
            & \text{subject to} & & \abs{f}_\varphi\leq n^d.
        \end{aligned}
        \end{equation}
    \end{definition}
    
    This value can be interpreted as the best sample efficiency gains that can be obtained from $\mathit{MDL}^\varphi$ over $\mathit{MDL}^\psi$ while being provable from the guarantee given in Proposition \ref{prop:pac-length}.
    
    \begin{remark}
        We defined $G$ ---the sample efficiency gain--- as our metric of interest. However, in the literature ---for example, the literature on depth-separation in ANNs--- a usually assumed criterion of comparison is the necessary sizes of the hypotheses to fit functions, since these can usually be translated into PAC-learning guarantees.
        
        We refer the reader to Section \ref{section:descr-length-criterion} in the appendices for a complete development based on the gains in description-length, $\sup_{f\in H^n}\frac{\abs{f}_\psi}{\abs{f}_\varphi}$ instead of $G$; and, to Section \ref{section:tight-prop-and-vc-dim} for links between constraints on the description-length of an hypothesis class and its VC-dimension for Turing machines and circuits.
    \end{remark}

\section{Main results: comparison of Turing machines and circuits}
    Now that the formal background and the metric of interest are defined, we study the sample efficiency gains by learning with Turing machines or Boolean circuits interpreters under the minimum description length principle. All the results are given for Boolean circuits but also hold for ANNs. In other words, the Boolean circuits' interpreter $\mathcal{C}$ can be substituted for the ANNs' interpreter, given in Definition \ref{def:ANN-int}, in all the theorems that we give.
    
    \subsection{Sample efficiency gains of circuits over Turing machines}
        Before studying the sample efficiency gains of Turing machines over Boolean circuits, we present a partial result in the direction of showing that the sample efficiency gains of circuits over Turing machines are below a constant.
        
        The following theorem shows that on any particular learning problem a PAC-guarantee obtained through Proposition \ref{prop:pac-length} for learning with circuits will imply a similar PAC-guarantee for learning with Turing machines.
        \begin{restatable}{theorem}{turingdom}
            There exists a constant $q\in\mathbb{R}^+$ such that for all $\epsilon\in(0,\nicefrac{1}{2}),\,\delta\in(0,1)$ and $n,d\in\mathbb{N}^+$, 
            \begin{equation}
                \Tilde{G}^d_{\mathcal{C}\rightarrow\mathcal{U}}(\epsilon,\delta,n)\leq q.
            \end{equation}
        \end{restatable}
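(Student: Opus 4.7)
The plan is to reduce the statement to the existence of a constant-size Turing machine that interprets Boolean circuits, and then apply the upper bound of Proposition \ref{prop:pac-length} directly.

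First, I would exhibit a single fixed two-input Turing machine $\mathcal{T}_{\text{sim}}$ which, given a binary-encoded Boolean circuit $c$ as its first input and a string $x$ as its second input, simulates $\mathcal{C}(c,x)$ and outputs the result (returning $\bot$ on ill-formed $c$). By the definition of the universal Turing machine $\mathcal{U}$ from the background section, we then have
\begin{equation*}
\mathcal{U}\bigl([\text{enc}(\mathcal{T}_{\text{sim}}),c],\,x\bigr)=\mathcal{T}_{\text{sim}}(c,x)=\mathcal{C}(c,x)
\end{equation*}
for every $x$. Let $K\equaldef|\text{enc}(\mathcal{T}_{\text{sim}})|+O(1)$, where the extra $O(1)$ accounts for the constant overhead of the pairing/delimiter convention used to form $[\text{enc}(\mathcal{T}_{\text{sim}}),c]$. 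Since $\mathcal{T}_{\text{sim}}$ and the pairing convention are fixed once and for all, $K$ is a universal constant. It follows that for every $n$ and every $f\in H^n$,
\begin{equation*}
\abs{f}_{\mathcal{U}}\;\leq\;\abs{f}_{\mathcal{C}}+K.
\end{equation*}

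Next, I would apply Proposition \ref{prop:pac-length} to the interpreter $\mathcal{U}$: the quantity $m^{\epsilon,\delta}_{\mathcal{U}}(f,\mathcal{P})$ is bounded above by $\tfrac{a_1}{\epsilon}\bigl[\log\tfrac{1}{\delta}+\abs{f}_{\mathcal{U}}+a_2\bigr]$. Combining with the inequality above gives
\begin{equation*}
m^{\epsilon,\delta}_{\mathcal{U}}(f,\mathcal{P})\;\leq\;\frac{a_1}{\epsilon}\Bigl[\log\tfrac{1}{\delta}+\abs{f}_{\mathcal{C}}+a_2+K\Bigr].
\end{equation*}
Dividing by the denominator in the definition of $\tilde{G}^d_{\mathcal{C}\rightarrow\mathcal{U}}$ yields
\begin{equation*}
\frac{m^{\epsilon,\delta}_{\mathcal{U}}(f,\mathcal{P})}{\frac{a_1}{\epsilon}\bigl(\log\frac{1}{\delta}+\abs{f}_{\mathcal{C}}+a_2\bigr)}\;\leq\;1+\frac{K}{\log\frac{1}{\delta}+\abs{f}_{\mathcal{C}}+a_2}\;\leq\;1+\frac{K}{a_2},
\end{equation*}
uniformly in $f$, $\mathcal{P}$, $n$, $d$, $\epsilon$ and $\delta$. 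Taking $q\equaldef 1+K/a_2$ and noting that the bound does not use the constraint $\abs{f}_{\mathcal{C}}\leq n^d$ concludes the proof.

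The argument is essentially trivial modulo the observation in the first paragraph; the only thing requiring care is the encoding bookkeeping for the pairing convention and the definition of $\mathcal{T}_{\text{sim}}$, so that the overhead in going from a circuit encoding to an $\mathcal{U}$-program is a true additive constant rather than growing with the circuit size. Once that is fixed, the bound on $\tilde{G}^d_{\mathcal{C}\rightarrow\mathcal{U}}$ follows from the numerator/denominator comparison above and is independent of $n$, $d$, $\epsilon$, $\delta$.
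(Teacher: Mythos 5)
Your proof is correct and follows the same route as the paper's. The only cosmetic difference is that where the paper cites the Invariance Theorem (Proposition \ref{prop:KT}) as a black box to get $\abs{f}_{\mathcal{U}}\leq\abs{f}_{\mathcal{C}}+K$, you inline its proof by explicitly constructing the simulating Turing machine $\mathcal{T}_{\text{sim}}$; the rest (apply Proposition \ref{prop:pac-length} to bound $m^{\epsilon,\delta}_{\mathcal{U}}$, then divide and use $a_2>0$ to get $q=1+K/a_2$) is identical.
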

        
        The rest of the paper analyzes the converse question: can we prove an advantage of Turing-complete systems over circuits for learning in terms of sample efficiency gains?
        
    \subsection{Sample efficiency gains of Turing machines over circuits}
        In the next theorem, we show that we can construct a sequence of learning problems such that learning with Turing-complete languages becomes more and more advantageous in terms of sample efficiency over learning with Boolean circuits. Moreover, the advantage grows exponentially in the input-size of the function to learn.
        \begin{restatable}{theorem}{expgain}\label{thm:exp-gain}
            For all $\epsilon\in(0,\nicefrac{1}{2}),\delta\in(0,1),d\in\mathbb{N}^+$ we have
            \begin{equation}
                G^d_{\mathcal{U}\rightarrow\mathcal{C}}(\epsilon,\delta,n)\in \Omega(2^n/n).
            \end{equation}
        \end{restatable}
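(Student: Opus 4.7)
The plan is to exhibit, for each $n$, a single learning problem $(f^*_n,\mathcal{P}^*_n)$ on which $\mathit{MDL}^{\mathcal{U}}$ requires only $O_{\epsilon,\delta}(1)$ samples while $\mathit{MDL}^{\mathcal{C}}$ requires at least $\Omega(2^n/(n\epsilon))$ samples; the ratio then bounds $G^d_{\mathcal{U}\to\mathcal{C}}$ from below by $\Omega(2^n/n)$. First I would combine Shannon--Lupanov counting (a $1-o(1)$ fraction of $H^n$ has $\abs{f}_{\mathcal{C}} \geq c\cdot 2^n/n$, since circuits of size $s$ realize only $2^{O(s\log s)}$ distinct Boolean functions) with the tightness of Proposition \ref{prop:pac-length} for the circuit interpreter (Proposition \ref{prop:tight-pac-length} from the appendices, invoked at description length $\Theta(2^n/n)$) to conclude that, for $n$ large enough, the set
\begin{equation*}
    A_n \equaldef \bigl\{f\in H^n : \exists \mathcal{P}\in\Delta(\mathcal{B}^n),\ m^{\epsilon,\delta}_{\mathcal{C}}(f,\mathcal{P}) \geq c'\cdot 2^n/(n\epsilon)\bigr\}
\end{equation*}
is nonempty, for some absolute $c'>0$. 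I would then take $f^*_n$ to be the lexicographically first element of $A_n$ (in truth-table order) and $\mathcal{P}^*_n$ a lex-first witnessing distribution among rationals of, say, denominator $2^{2^n}$.

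Second, I would show that $\abs{f^*_n}_{\mathcal{U}} = O_{\epsilon,\delta}(1)$, via a fixed Turing machine $T_{\epsilon,\delta,c'}$ that hardcodes rationals for $\epsilon$, $\delta$, and $c'$. On input $x$, $T_{\epsilon,\delta,c'}$ would read $n:=\abs{x}$, enumerate candidate $f\in H^n$ in order, and for each brute-force verify membership in $A_n$ by enumerating rational distributions $\mathcal{P}$ of bounded precision and simulating $\mathit{MDL}^{\mathcal{C}}$ on all datasets of size slightly below $c'\cdot 2^n/(n\epsilon)$ to detect a failure of the PAC condition. Every subroutine is decidable, and although the running time is astronomical, it fits comfortably within the standing time bound the paper imposes on interpreter executions. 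Hence $\abs{f^*_n}_{\mathcal{U}} \leq L(\epsilon,\delta) = O_{\epsilon,\delta}(1) \leq n^d$ for all $n\geq 1$ and $d\geq 0$, so the constraint in Definition \ref{def:G} is satisfied.

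The conclusion then follows directly: Proposition \ref{prop:pac-length} applied to $\mathcal{U}$ yields $m^{\epsilon,\delta}_{\mathcal{U}}(f^*_n,\mathcal{P}^*_n) \leq (a_1/\epsilon)[\log(1/\delta) + L(\epsilon,\delta) + a_2] = O_{\epsilon,\delta}(1)$, while by the very definition of $A_n$, $m^{\epsilon,\delta}_{\mathcal{C}}(f^*_n,\mathcal{P}^*_n) \geq c'\cdot 2^n/(n\epsilon)$, so the ratio, and hence $G^d_{\mathcal{U}\to\mathcal{C}}(\epsilon,\delta,n)$, is $\Omega(2^n/n)$ with the hidden constant allowed to depend on $\epsilon,\delta$. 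The main obstacle I anticipate is securing the appeal to tightness at the Shannon--Lupanov scale, i.e.\ coupling ``large circuit complexity'' with ``actually sample-hard for $\mathit{MDL}^{\mathcal{C}}$'' at complexity level $\Theta(2^n/n)$; once $A_n$ is guaranteed nonempty, the short Turing-machine description of its lex-first member is a routine consequence of the decidability of each ingredient in the defining condition.
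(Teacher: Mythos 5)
Your approach is a genuine alternative to the paper's, but it has a gap worth naming. The paper's proof fixes the uniform distribution $U$, uses a binary-entropy counting argument (Proposition~\ref{prop:approx_count}) to exhibit a function $f_n$ that \emph{cannot even be $\epsilon$-approximated under $U$} by any circuit of description length below roughly $2^n(1-H(\epsilon))$, and then combines the $O(S\log S)$ circuit-encoding bound with Proposition~\ref{prop:bc-expr} (any $m$-sample dataset is fit by a circuit of size $O(m)$): with too few samples $\mathit{MDL}^{\mathcal{C}}$ deterministically selects a short circuit, which by construction has accuracy $<1-\epsilon$, forcing $m^{\epsilon,\delta}_{\mathcal{C}}(f_n,U)\in\Omega(2^n/n)$. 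You instead invoke the VC-dimension tightness result (Proposition~\ref{prop:tight-pac-length} via Proposition~\ref{prop:vc-circ}) to assert that a sample-hard $(f,\mathcal{P})$ exists, and then claim the lex-first such pair is recoverable by a fixed Turing machine. Both proofs share the enumerate-and-select-a-hard-instance skeleton, but the paper's hardness criterion (``not $\epsilon$-approximable by short circuits under $U$'') is tailored precisely so that the selecting program has no unbounded quantifier to discharge, whereas yours works at the more abstract level of the sample-complexity functional. Incidentally, your Shannon--Lupanov digression is redundant: the tightness proposition together with Proposition~\ref{prop:vc-circ} already gives you nonemptiness of $A_n$ (and with $D=\Theta(2^n)$ rather than $\Theta(2^n/n)$ it would give the stronger $\Omega(2^n)$).

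The genuine gap is effectivity of the search. Your $A_n$ is defined with an unbounded existential over $\mathcal{P}\in\Delta(\mathcal{B}^n)$, but your Turing machine only enumerates rationals of denominator $2^{2^n}$. Proposition~\ref{prop:tight-pac-length} merely asserts \emph{some} witnessing $(f,\mathcal{P})$ exists; it does not say $\mathcal{P}$ lands in your enumerated grid, and $m^{\epsilon,\delta}_{\mathcal{C}}(f,\cdot)$ is not obviously continuous in $\mathcal{P}$ (it is built from non-strict inequalities on probabilities that are only piecewise polynomial in $\mathcal{P}$), so you cannot appeal to density of rationals. This is likely repairable --- the standard No-Free-Lunch construction behind Proposition~\ref{prop:pac-lower-vc} uses a (near-)uniform distribution on a shattered set, which is dyadic with small denominator --- but you would need to open that black box, and the claim ``$A_n$ nonempty therefore the TM finds $f^*_n$'' does not follow as stated. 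The paper's choice of $\mathcal{P}=U$ makes this issue vanish entirely, which is one reason its hardness criterion is phrased in terms of approximability under $U$ rather than directly in terms of $m^{\epsilon,\delta}_{\mathcal{C}}$.
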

        
        The theorem shows that the potential advantage of learning with Turing machines over circuits can quickly become significant.
        
        In the proof of Theorem \ref{thm:exp-gain}, an explicit sequence of learning problems with an advantage for Turing machines is given, the functions to learn are defined by a Turing machine that enumerates on the functions in $H^n$ and the Boolean circuits. The functions to be learned are thus hard to compute.
        
        This can be seen as a practical limitation on the scope of this theorem. We will now use polynomial computational limits on our interpreter. Our formalism will use the interpreter of polynomial-time Turing machines ---$\mathcal{U}^c$--- to this effect.
    
    \subsection{Limits on the sample efficiency gains of polynomial-time Turing machine over circuits}
        In this new computationaly constrained setting, we prove a bound on the sample efficiency gains that can be shown from the length-based PAC-guarantee of Proposition \ref{prop:pac-length}.
        \begin{restatable}{theorem}{circpower}\label{thm:circuit-power}
            For all $\epsilon\in(0,\nicefrac{1}{2}),\,\delta\in(0,1),\, c,d\in\mathbb{N}^+$ we have
            \begin{equation}
                \Tilde{G}^d_{\mathcal{U}^c\rightarrow\mathcal{C}}(\epsilon,\delta,n)\in O(n^c\log^2 n).
            \end{equation}
        \end{restatable}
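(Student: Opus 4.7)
The plan is to upper-bound $\abs{f}_{\mathcal{C}}$ in terms of $\abs{f}_{\mathcal{U}^c}$ and $n$ for every $f$ computable by $\mathcal{U}^c$, and then combine this with Proposition \ref{prop:pac-length} to control the numerator $m^{\epsilon,\delta}_{\mathcal{C}}(f,\mathcal{P})$ appearing in the definition of $\tilde{G}^d_{\mathcal{U}^c\to\mathcal{C}}$.

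I would fix an arbitrary learning problem $(f,\mathcal{P})$ with $\abs{f}_{\mathcal{U}^c}\le n^d$ and pick a minimum-length program $h^*$ achieving $\abs{f}_{\mathcal{U}^c}$, so that $\mathcal{U}^c(h^*,\cdot)$ halts within $O(n^c)$ steps on every $x\in\mathcal{B}^n$. By the classical Pippenger--Fischer simulation of time-bounded Turing machines by Boolean circuits, there exists a Boolean circuit of size $S=O(n^c\log n)$ that, on the combined input $(h,x)$ of total length $\abs{h^*}+n$, reproduces the computation of $\mathcal{U}^c$. Hard-coding the first $\abs{h^*}$ wires to the bits of $h^*$ produces a circuit over the $n$ input variables of $x$ that evaluates $f$, of total size at most $O(\abs{h^*}+n^c\log n)$.

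Next I would convert this circuit size into a description length using the bound $\abs{f}_{\mathcal{C}}\le O(S\log S)$ that the paper records for its circuit encoding. Under the constraint $\abs{h^*}\le n^d$ the circuit size is polynomial in $n$, so $\log S=O(\log n)$ and
\[\abs{f}_{\mathcal{C}}\le O\bigl((\abs{f}_{\mathcal{U}^c}+n^c\log n)\log n\bigr)=O\bigl(\abs{f}_{\mathcal{U}^c}\log n+n^c\log^2 n\bigr).\]
Plugging this into Proposition \ref{prop:pac-length} yields $m^{\epsilon,\delta}_{\mathcal{C}}(f,\mathcal{P})\le\frac{a_1}{\epsilon}(\log(1/\delta)+\abs{f}_{\mathcal{C}}+a_2)$, and dividing by the denominator $\frac{a_1}{\epsilon}(\log(1/\delta)+\abs{f}_{\mathcal{U}^c}+a_2)$ from Definition \ref{def:G-tilde} reduces the question to bounding $(\log(1/\delta)+\abs{f}_{\mathcal{C}}+a_2)/(\log(1/\delta)+\abs{f}_{\mathcal{U}^c}+a_2)$. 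Decomposing the numerator term by term, the $\log(1/\delta)$ and $a_2$ parts each contribute at most $1$ to the ratio, the $\abs{f}_{\mathcal{U}^c}\log n$ part at most $O(\log n)$, and the $n^c\log^2 n$ part at most $O(n^c\log^2 n)/a_2$. Summing gives $O(n^c\log^2 n)$ uniformly in $f,\mathcal{P},\epsilon,\delta$, and taking the supremum over admissible $(f,\mathcal{P})$ proves the claim.

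The main technical hurdle is controlling the polylogarithmic factor in the first step: the final $\log^2 n$ arises from combining the $O(T\log T)$ Pippenger--Fischer simulation at $T=O(n^c)$ with the further $O(S\log S)$ overhead of the circuit encoding, so a cruder $O(T^2)$ simulation would already collapse the claim. A secondary subtlety is the hard-coding of $h^*$: thanks to $\abs{h^*}\le n^d$ it contributes only an $O(\abs{h^*}\log n)$ term to $\abs{f}_{\mathcal{C}}$, which is absorbed by the $\abs{f}_{\mathcal{U}^c}$ term in the denominator of the $\tilde{G}^d$ ratio and does not inflate the final bound.
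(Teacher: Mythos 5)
Your proof is correct, but it reverses the order of operations relative to the paper's argument, and this is a genuine (and arguably cleaner) difference. The paper first uses Proposition~\ref{prop:hardwire} to hardwire $h^*$ into the Turing machine $\mathcal{U}^c$, obtaining a Turing machine with $\rho\,\abs{f}_{\mathcal{U}^c}$ rules, and then invokes the Schnorr refinement of the Pippenger--Fischer simulation (Proposition~\ref{prop:bc-tm}), whose circuit size is $\alpha\,(\text{\# rules})\,t\log t$; this yields a circuit of size $O(\abs{f}_{\mathcal{U}^c}\,n^c\log n)$ and hence $\abs{f}_{\mathcal{C}} = O(\abs{f}_{\mathcal{U}^c}\,n^c\log^2 n)$. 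You instead simulate the fixed machine $\mathcal{U}^c$ directly on the combined input $(h,x)$ --- so the number of rules is a constant and the plain $O(t\log t)$ simulation suffices --- and only afterward hard-code the $\abs{h^*}$ input wires. This gives the sharper intermediate bound $\abs{f}_{\mathcal{C}} = O(\abs{f}_{\mathcal{U}^c}\log n + n^c\log^2 n)$; both routes lead to the same $\tilde{G}$ estimate because in the paper's version the extra $\abs{f}_{\mathcal{U}^c}$ factor cancels against the denominator. The net effect is that your approach does not need Proposition~\ref{prop:hardwire} at all, and it only needs the size-independent form of the Turing-machine-to-circuit simulation rather than the rule-count--sensitive refinement the paper cites from \cite{schnorr1976network}. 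One small imprecision: hard-coding wires of a circuit can only decrease its size (as the paper itself notes in the proof of Theorem~\ref{thm:if-true}), so your $O(\abs{h^*}+n^c\log n)$ is a safe overcount rather than an exact accounting, but since $\abs{h^*}\le n^d$ this loses nothing.
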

        
        The proof of Theorem \ref{thm:circuit-power} is based on a result of \cite{pippenger1979relations}, their result states that circuits can compute functions as fast as multi-tape Turing machines. More precisely, we use a refinement by \cite{schnorr1976network} that takes into account the size of the involved Turing machine.
    
    \subsection{Sample efficiency gains of polynomial-time Turing machines over circuits are at least linear in the input-size}
        We show a positive result for learning with polynomial-time Turing machines, the sample efficiency gains grow at least (nearly) linearly in the input-size of the function to learn.
        \begin{restatable}{theorem}{lineargain}\label{thm:linear-gain}
            For all $\epsilon\in(0,\nicefrac{1}{2}),\,\delta\in(0,1),\,1<c,d\in\mathbb{N}^+$ and all $\gamma>0$ we have
            \begin{equation}
                G^d_{\mathcal{U}^c\rightarrow\mathcal{C}}(\epsilon,\delta,n)\in\Omega(n^{1-\gamma}).
            \end{equation}
        \end{restatable}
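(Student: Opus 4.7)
The plan is to exhibit an explicit learning problem $(f_n,\mathcal{P}_n)$ witnessing the claimed gain. I take $f_n(x)=x_1\oplus\ldots\oplus x_n$ (parity) and $\mathcal{P}_n$ the uniform distribution on $\mathcal{B}^n$. Parity is computed in time $O(n)$ by a fixed short Turing machine, and after the polynomial-overhead universal simulation by $\mathcal{U}^c$ this gives $|f_n|_{\mathcal{U}^c}=O(1)$ for every $c\geq 2$ (the restriction $c>1$ in the statement absorbing that overhead). The side condition $|f_n|_{\mathcal{U}^c}\leq n^d$ is therefore immediate, and Proposition \ref{prop:pac-length} bounds the denominator of the ratio by $m^{\epsilon,\delta}_{\mathcal{U}^c}(f_n,\mathcal{P}_n)=O(1)$, uniformly in $n$ for fixed $\epsilon,\delta$.

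For the numerator I would exploit the following structural fact: parity is balanced in each variable under the uniform distribution, so any Boolean function $g$ that ignores some input $x_i$ satisfies $\mathit{acc}_{f_n}^{\mathcal{P}_n}(g)=\tfrac{1}{2}$. Every Boolean circuit of description length at most $L:=n^{1-\gamma}$ has size $O(L/\log L)<n/2$ for $n$ large, and hence fewer than $n$ input wires (each gate has fan-in at most two), so it must ignore some variable. Consequently, if $\mathit{MDL}^\mathcal{C}$ outputs any circuit of description length $\leq L$, the resulting hypothesis has accuracy exactly $\tfrac{1}{2}$ and fails the $(1-\epsilon)$-PAC accuracy requirement for every $\epsilon<\tfrac{1}{2}$.

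It remains to show that with $m=o(L)$ samples, $\mathit{MDL}^\mathcal{C}$ outputs a circuit of description length $\leq L$ with probability at least $\delta$. This is exactly a sample-complexity lower bound of the type provided by Proposition \ref{prop:tight-pac-length}: the tight length-based PAC lower bound matches $m^{\epsilon,\delta}_\mathcal{C}$ with $\Omega(L/\epsilon)$ samples on some $(f',\mathcal{P}')$ with $|f'|_\mathcal{C}=\Theta(L)$. I would reuse its underlying packing/VC-dimension construction---essentially, exhibiting a family of $2^{\Omega(L)}$ circuits of description length $\leq L$ that are pairwise far apart, and arguing by counting that $o(L)$ samples leave many of them consistent---to derive the analogous lower bound for the specific pair $(f_n,\mathcal{P}_n)$. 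The ratio then yields $\Omega(L)/O(1)=\Omega(n^{1-\gamma})$.

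The main obstacle is this last step: Proposition \ref{prop:tight-pac-length} gives the matching lower bound for \emph{some} $(f',\mathcal{P}')$ of the prescribed $|f'|_\mathcal{C}$, not automatically for the TM-simple $f_n=$ parity. Two natural ways to close the gap: (i) verify that the packing used in the tight-PAC construction already consists of circuits ignoring a variable, in which case all of its members have accuracy $\tfrac{1}{2}$ against parity under the uniform distribution and the same packing argument produces the lower bound for $f_n$ directly; or (ii) replace parity with a small modification---for instance the XOR of parity with the tight-PAC hard function---that keeps $|f|_{\mathcal{U}^c}=O(1)$ via a short TM description while inheriting the $\Omega(L)$ lower bound from the tight-PAC construction.
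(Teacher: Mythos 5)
Your choice of learning problem (parity under the uniform distribution) and the two structural observations---that $|\oplus|_n|_{\mathcal{U}^c}$ is bounded by a constant so the denominator is $O(1)$, and that any circuit missing an input variable has accuracy exactly $\tfrac12$ against parity---are exactly the ones used in the paper. The gap is in the final step, and it is a genuine one: you try to show that with $m=o(L)$ samples $\mathit{MDL}^{\mathcal{C}}$ outputs a small circuit ``with probability at least $\delta$'' by appealing to Proposition~\ref{prop:tight-pac-length} and a VC-dimension packing argument, and you correctly note that you cannot close the gap because that proposition gives a lower bound only for \emph{some} hard target, not for parity.

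The ingredient you are missing is Proposition~\ref{prop:bc-expr} (via Propositions~\ref{prop:tree-power} and~\ref{prop:circ-sim-tree}): for \emph{any} $m$ labelled samples there always exists a Boolean circuit of size at most $bm$ consistent with them, built by turning the trivial depth-$m$ decision tree into a circuit. Since the description length of a circuit under $\mathcal{C}$ is a monotone increasing function of its size (Definition~\ref{def:bool-circ}), this means $\mathit{MDL}^{\mathcal{C}}$, which by definition picks a consistent hypothesis of minimal description length, \emph{deterministically} returns a circuit of size at most $bm$ on every realization of the dataset. Taking $m<n^{1-\gamma}$, for $n$ large enough $bm<n$, the output circuit ignores a variable, its accuracy is exactly $\tfrac12<1-\epsilon$, and the $(\epsilon,\delta)$-PAC requirement fails with probability one (not merely $\delta$). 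No VC or packing argument is needed, and the worry about whether the packing construction ``happens to consist of circuits ignoring a variable'' never arises. Your alternatives (i) and (ii) are detours around a lemma that is already in the paper; once you use the interpolation bound, the proof you sketched goes through directly.
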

    
    \subsection{Are sample efficiency gains of polynomial-time Turing machines over circuits superlinear in the input-size?}
        The Theorems \ref{thm:circuit-power} and \ref{thm:linear-gain} open the question of whether the growth of the sample efficiency gains is actually a superlinear polynomial in the input-size of the function to learn. 
        
        As we will show, this question connects with open problems in Computational Complexity.
        
        \subsubsection{If gains are superlinear}
            The first open problem with which we make a connection is the existence of a problem in $\textbf{P}$ for which superlinear sized circuits are necessary.
            
            This problem is of importance in Computational Complexity. There are links between the collapse at the first and second level of the polynomial hierarchy and the computability of languages in $\textbf{NP}$ by polynomial-sized families of Boolean circuits, see \cite{karp1980some}.
            
            However, despite years of efforts, the maximal size-lower-bound known on Boolean circuits for a language in \textbf{NP} is linear, see \cite{iwama2002explicit,arora2009computational,jukna2012boolean}.
            
            The Theorem \ref{thm:if-true} shows that proving that the sample efficiency gains are actually superlinear through the PAC-guarantee offered by Proposition \ref{prop:pac-length} solves this frontier.
            It solves the frontier by proving the existence of a problem in \textbf{P}, and thus \textbf{NP}, with superlinear circuit complexity.
            
            \begin{restatable}{theorem}{ifistrue}\label{thm:if-true}
                If there exists $\epsilon\in(0,\nicefrac{1}{2}),\,\delta\in(0,1),\,c,d\in\mathbb{N}^+$ and some $\gamma>0$ such that
                \begin{equation}
                    \Tilde{G}^d_{\mathcal{U}^c\rightarrow\mathcal{C}}(\epsilon,\delta,n) \notin O(n^{1+\gamma})
                \end{equation}
                then there exists a language in $\textbf{P}$ not computable by any sequence of Boolean circuits whose sizes are in $O(n^{1+\tau})$ for some $\tau>0$.
            \end{restatable}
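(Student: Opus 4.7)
The plan is to extract from the hypothesis a sequence of hard functions $f_{n_k}$ that admit short programs for $\mathcal{U}^c$ but require large minimum circuits, and then to package them into the single language
\[
L\equaldef\{\mathrm{enc}(h)\cdot x:\mathcal{U}^c(h,x)=1\},
\]
where $\mathrm{enc}$ is any self-delimiting binary code. This $L$ lies in $\textbf{P}$: given an input of length $N$, a decoder reads the self-delimiting prefix to split off $h$ from $x$, then simulates $\mathcal{U}^c(h,x)$ in time $O((|h|+|x|)^c)=O(N^c)$. The remainder of the argument equips $L$ with a superlinear circuit lower bound.

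First I would extract the witnesses. From $\Tilde{G}^d_{\mathcal{U}^c\to\mathcal{C}}(\epsilon,\delta,n)\notin O(n^{1+\gamma})$ I pick a subsequence $n_k\to\infty$ and pairs $(f_{n_k},\mathcal{P}_{n_k})$ with $s_k\equaldef\abs{f_{n_k}}_{\mathcal{U}^c}\le n_k^d$ such that $m^{\epsilon,\delta}_\mathcal{C}(f_{n_k},\mathcal{P}_{n_k})/[\log(1/\delta)+s_k+a_2]$ grows faster than $n_k^{1+\gamma}$. Upper-bounding the numerator by Proposition~\ref{prop:pac-length} for $\varphi=\mathcal{C}$ gives $\abs{f_{n_k}}_\mathcal{C}\in\omega\bigl(n_k^{1+\gamma}(1+s_k)\bigr)$. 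Since a Boolean circuit of size $S$ has a description of length $\Theta(S\log S)$, the minimum circuit size $S_k\equaldef S(f_{n_k})$ satisfies $S_k\in\omega\bigl(n_k^{1+\gamma}(1+s_k)/\log n_k\bigr)$.

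Next I would transfer this bound to $L$. Let $h_k$ be a length-$s_k$ program witnessing $\abs{f_{n_k}}_{\mathcal{U}^c}$ and set $N_k=|\mathrm{enc}(h_k)|+n_k=\Theta(s_k+n_k)$. Hard-wiring the prefix $\mathrm{enc}(h_k)$ in any circuit computing $L$ at length $N_k$ can only shrink the circuit and yields a circuit for $f_{n_k}$; hence the minimum circuit for $L$ at length $N_k$ has size at least $S_k$. Splitting on whether $s_k\le n_k$ (so $N_k=\Theta(n_k)$) or $s_k>n_k$ (so $N_k=\Theta(s_k)$ and $n_k\ge\Omega(N_k^{1/d})$), both regimes give $S_k\in\omega(N_k^{1+\tau_0}/\log N_k)$ for $\tau_0\equaldef\min\{\gamma,(1+\gamma)/d\}>0$. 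Picking any $\tau\in(0,\tau_0)$ then exhibits $L\in\textbf{P}$ with $L\notin\mathrm{SIZE}(O(N^{1+\tau}))$.

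The main technical subtlety is the regime $s_k\approx n_k^d$, where $N_k\sim n_k^d$ dwarfs the input size $n_k$ of $f_{n_k}$. A bound of the form $S_k\in\omega(n_k^{1+\gamma})$ alone would translate only to $\omega(N_k^{(1+\gamma)/d})$, which can be sublinear in $N_k$ once $d$ is large. The rescue is that the denominator in the definition of $\Tilde{G}^d$ involves $s_k$, so in the large-program regime the gain hypothesis forces $\abs{f_{n_k}}_\mathcal{C}$ to beat $s_k$ by the extra factor $n_k^{1+\gamma}$; the resulting extra factor $s_k=\Theta(N_k)$ in $S_k$ is precisely what keeps the lower bound superlinear in $N_k$.
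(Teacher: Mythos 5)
Your proof is correct and follows the same strategy as the paper's: define the interpreter language (the paper's $I$, built from $\mathcal{U}^c$ via a pairing encoding), lower-bound its circuit complexity at lengths $N_k\approx n_k+s_k$ by hardwiring the program, and exploit the factor $s_k$ coming from the denominator of $\tilde{G}$. Your two-case split on $s_k\le n_k$ versus $s_k>n_k$ is a cleaner way to resolve the variable-$s_k$ bookkeeping than the paper's pigeonhole binning into $\lceil 1/\nu\rceil d$ ranges, but both reach the same superlinear exponent $\min\{\gamma,(1+\gamma)/d\}$.
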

        
        \subsubsection{If gains are not superlinear}
            On the other hand, if the sample efficiency gains are not superlinear in the input-size of the function to learn for polynomial-time Turing machines over circuits then $\textbf{P}\neq\textbf{NP}$.
            \begin{restatable}{theorem}{ifisfalse}\label{thm:if-false}
                If for all $\epsilon\in(0,\nicefrac{1}{2}),\,\delta\in(0,1),\,c,d\in\mathbb{N}^+$ and all $\gamma>0$
                \begin{equation}
                    G^d_{\mathcal{U}^c\rightarrow\mathcal{C}}(\epsilon,\delta,n)\in O(n^{1+\gamma})
                \end{equation}
                then $\textbf{P}\neq\textbf{NP}$.
            \end{restatable}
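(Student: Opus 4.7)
The plan is to prove the contrapositive: assuming $\mathbf{P}=\mathbf{NP}$, I exhibit, for each integer $k$, a sequence of learning problems witnessing sample-efficiency gains that grow faster than $n^k$, which contradicts the hypothesis $G \in O(n^{1+\gamma})$ as soon as $k>1+\gamma$. The starting point is the classical fact that $\mathbf{P}=\mathbf{NP}$ collapses the entire polynomial hierarchy to $\mathbf{P}$. Combining this collapse with Kannan's circuit lower bound (for every $k$, some language in $\Sigma_2^p$ is not in $\mathrm{SIZE}(n^k)$), I obtain, for each $k$, a language $L_k\in\mathbf{P}$ whose slice $f_n^{(k)}\in H^n$ has $|f_n^{(k)}|_{\mathcal{C}}=\omega(n^k)$ along an infinite subsequence of $n$.

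Because $L_k$ admits a single fixed polynomial-time Turing machine $\mathcal{T}_k$, its finite description encodes $f_n^{(k)}$ uniformly for all $n$, so that for the appropriate exponent $c=c(k)$ we have $|f_n^{(k)}|_{\mathcal{U}^c}=O(1)$. In particular the Definition~\ref{def:G} constraint $|f|_{\mathcal{U}^c}\le n^d$ is satisfied for all large $n$ and any fixed $d\ge 1$, and Proposition~\ref{prop:pac-length} yields a constant (in $n$) upper bound $m^{\epsilon,\delta}_{\mathcal{U}^c}(f_n^{(k)},\mathcal{P})=O_{\epsilon,\delta,k}(1)$ for any $\mathcal{P}$.

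The other side of the ratio demands a genuine \emph{lower} bound on $m^{\epsilon,\delta}_{\mathcal{C}}(f_n^{(k)},\mathcal{P}_n)$ for some well-chosen $\mathcal{P}_n$. Here I invoke the tightness analysis behind Proposition~\ref{prop:tight-pac-length}, which links sample complexity of $\mathit{MDL}^{\mathcal{C}}$ to the minimal circuit size: for a function realizing circuit complexity $s$, a distribution $\mathcal{P}_n$ can be produced so that $m^{\epsilon,\delta}_{\mathcal{C}}(f_n^{(k)},\mathcal{P}_n)=\Omega(|f_n^{(k)}|_{\mathcal{C}}/\epsilon)=\omega(n^k)$. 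Assembling the two estimates, the ratio defining $G^d_{\mathcal{U}^c\to\mathcal{C}}$ is at least $\omega(n^k)$ on the infinite subsequence provided by Kannan, so for $k>1+\gamma$ we obtain $G^d_{\mathcal{U}^c\to\mathcal{C}}(\epsilon,\delta,n)\notin O(n^{1+\gamma})$, contradicting the hypothesis and yielding $\mathbf{P}\neq\mathbf{NP}$.

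The main technical obstacle lies in Step~three, namely transferring the PAC sample-complexity lower bound (which in the appendix is proven for specifically constructed hard pairs $(f,\mathcal{P})$) to the particular Kannan-type function $f_n^{(k)}$. One must check that the appendix's construction first fixes an arbitrary $f$ of prescribed circuit complexity and then exhibits a witnessing distribution $\mathcal{P}$; equivalently, one can replace the invocation of Proposition~\ref{prop:tight-pac-length} by a direct VC-dimension argument on the sub-class of circuits of size at most $|f_n^{(k)}|_{\mathcal{C}}$, using that $f_n^{(k)}$ realizes the worst-case complexity of this class. Ensuring that the hard instance can simultaneously be taken to have a polynomial-time $\mathcal{U}^c$ description is the delicate step, but this is precisely what the collapse $\mathbf{PH}=\mathbf{P}$ combined with Kannan's diagonalization provides.
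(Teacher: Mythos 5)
Your overall strategy matches the paper's: contrapose, use $\mathbf{P}=\mathbf{NP}\Rightarrow\mathbf{PH}=\mathbf{P}$ to pull a circuit-hard language down into $\mathbf{P}$, then exploit its constant-size $\mathcal{U}^c$ description against a large sample requirement for $\mathit{MDL}^{\mathcal{C}}$. However, there is a genuine gap in Step three, and it is not merely the quantifier issue you flag.

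Kannan's theorem gives a \emph{worst-case} circuit lower bound: no circuit of size $n^k$ computes $L_k$ on all inputs. This is compatible with $L_k$ being trivially \emph{approximable}: a circuit of size $O(n)$ could in principle agree with $f_n^{(k)}$ on, say, a $1-2^{-n}$ fraction of $\mathcal{B}^n$ under the uniform distribution. If that happens, $\mathit{MDL}^{\mathcal{C}}$ would happily return that tiny circuit (it will almost always be consistent with the sample), achieve accuracy $\geq 1-\epsilon$, and $m^{\epsilon,\delta}_{\mathcal{C}}(f_n^{(k)},\mathcal{P})$ would stay bounded. Your attempted patch---invoking the VC-dimension machinery of Proposition~\ref{prop:tight-pac-length}, or ``a direct VC-dimension argument on the sub-class of circuits of size at most $|f_n^{(k)}|_{\mathcal{C}}$''---does not fix this: the lower bound of Proposition~\ref{prop:pac-lower-vc} asserts the existence of \emph{some} hard pair $(f,\mathcal{P})$ within a class of high VC-dimension, but you cannot force that hard $f$ to be your Kannan function $f_n^{(k)}$; and no distribution $\mathcal{P}_n$ can make a worst-case-hard but average-case-easy $f$ require large circuits to approximate, because the low-accuracy events happen regardless of $\mathcal{P}_n$ unless you concentrate mass on the few bad inputs, which Kannan does not supply. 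In short, worst-case circuit complexity of the target does not translate into a sample-complexity lower bound for $\mathit{MDL}^{\mathcal{C}}$.

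The paper circumvents exactly this by replacing Kannan with the \emph{average-case} hard language of Pavan et al.\ (Proposition~\ref{th:avr-hard-PH}): a language in $\mathbf{P}^{\Sigma_2^p}$ that no $n^{k_1}$-size circuit approximates better than $1/2+1/n^{k_2}$ under the uniform distribution. Under the collapse this language lands in $\mathbf{P}$, and the average-case hardness directly forces any circuit selected by $\mathit{MDL}^{\mathcal{C}}$ with accuracy $\geq 3/4$ to have size at least $n^2$, hence at least $n^2/b$ samples by Proposition~\ref{prop:bc-expr}. To rescue your argument you would need either to swap Kannan for an average-case circuit lower bound inside $\mathbf{PH}$ (which is what the paper does), or to add a worst-case-to-average-case hardness amplification step in $\mathbf{P}$ (which is a nontrivial extra ingredient and not supplied by your current sketch).
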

            
        \subsubsection{Gains are superlinear under circuit lower bounds}
            The contraposite of the last result yields superlinear gains in the case $\textbf{P}=\textbf{NP}$. However, $\textbf{P}=\textbf{NP}$ is not a common assumption in computer science. We propose an alternative in the following theorem. Let $\textbf{E}$ be the set a language decidable with time-complexity $O(2^{O(n)})$ by Turing machines.
            \begin{restatable}{theorem}{underassumption}\label{thm:underassumption}
                If there exists $f\in\textbf{E}$ and $\iota>0$ such that the Boolean circuits computing $f_n$ are at least of size $2^{\iota n}$; then for any $\gamma>0$ there exists $\epsilon\in(0,\nicefrac{1}{2}),\,\delta\in(0,1),\,c,d\in\mathbb{N}^+$ such that we have
                \begin{equation}
                    G^d_{\mathcal{U}^c\rightarrow\mathcal{C}}(\epsilon,\delta,n)\in \Omega(n^{1+\gamma}).
                \end{equation}
            \end{restatable}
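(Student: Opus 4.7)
The plan is to produce a padded family $g_n:\mathcal{B}^n\to\mathcal{B}$ whose description under $\mathcal{U}^c$ is a single constant-size Turing machine but whose every small circuit is forced to be $\epsilon$-far from $g_n$ under the uniform distribution. The number of samples needed by $\mathit{MDL}^{\mathcal{U}^c}$ will then be $O(1)$ via Proposition~\ref{prop:pac-length}, while $\mathit{MDL}^{\mathcal{C}}$ is forced, by a DNF-fitting argument, to output a small circuit and must therefore fail the PAC accuracy bound. The only nontrivial ingredient is a worst-case to average-case reduction inside $\mathbf{E}$.

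Concretely, invoke the Impagliazzo--Wigderson worst-case to average-case reduction on the hypothesised $f$ to obtain $g\in\mathbf{E}$ and $\alpha>0$ such that, for every $k$, no circuit on $k$ inputs of size $\le 2^{\alpha k}$ agrees with $g_k$ on more than a $\frac{1}{2}+2^{-\alpha k}$ fraction of $\mathcal{B}^k$. Fix $\gamma>0$, set $c'=\lceil(2+\gamma)/\alpha\rceil$ and $k(n)=\lceil c'\log n\rceil$, and define $g_n(x)=g_{k(n)}(x_1,\ldots,x_{k(n)})$. Because $g\in\mathbf{E}$, the map $x\mapsto g_{k(n)}(x_1,\ldots,x_{k(n)})$ is computable in $2^{O(k(n))}=n^{O(c')}$ time by a fixed Turing machine, so choosing $c$ large enough gives $\lvert g_n\rvert_{\mathcal{U}^c}=O(1)$, trivially $\le n^d$ for $d\ge 1$, and Proposition~\ref{prop:pac-length} yields $m^{\epsilon,\delta}_{\mathcal{U}^c}(g_n,\mathcal{U})=O(1)$. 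A one-line averaging argument (fix the last $n-k(n)$ input coordinates to the value $z_0$ that maximises accuracy; this turns any circuit on $n$ inputs into one of the same size on $k(n)$ inputs with accuracy at least as high) then lifts the hardness: every circuit on $n$ inputs of size $\le n^{\alpha c'}$ agrees with $g_n$ on at most a $\frac{1}{2}+n^{-\alpha c'}$ fraction of $\mathcal{B}^n$.

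For the $\mathcal{C}$ side take $\mathcal{P}=\mathcal{U}$. Any $m$-sample is fit by a DNF with at most $m$ clauses of $n$ literals, whose circuit encoding has length $O(mn\log(mn))$, so the circuit selected by $\mathit{MDL}^{\mathcal{C}}$ has size $O(mn)$. If $m<c_0\,n^{\alpha c'-1}$ for an appropriate constant $c_0$, this size falls below $n^{\alpha c'}$; for fixed $\epsilon\in(0,\frac{1}{2})$ the lifted hardness then forces the output's accuracy strictly below $1-\epsilon$ for all large enough $n$, so the PAC guarantee fails deterministically and $m^{\epsilon,\delta}_{\mathcal{C}}(g_n,\mathcal{U})=\Omega(n^{\alpha c'-1})=\Omega(n^{1+\gamma})$ by the choice of $c'$. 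Dividing yields $G^d_{\mathcal{U}^c\to\mathcal{C}}(\epsilon,\delta,n)=\Omega(n^{1+\gamma})$. The main obstacle is the first step: the bare worst-case circuit lower bound would permit a tiny circuit to differ from $g_n$ on a single point, which $\mathit{MDL}^{\mathcal{C}}$ can accommodate, so the nontrivial work is to extract an average-case hard function that still lies in $\mathbf{E}$ and to verify that the logarithmic padding preserves this hardness after restriction.
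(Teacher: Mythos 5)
Your proof is correct and follows essentially the same approach as the paper: a worst-case to average-case hardness amplification in $\mathbf{E}$, padding the hard function down to $O(\log n)$ input bits so it has constant $\mathcal{U}^c$-description length, and observing that after too few samples the circuit $\mathit{MDL}^{\mathcal{C}}$ is forced to output is too small to evade the average-case hardness. The only differences are cosmetic: the paper uses a distribution $D_n$ concentrated on zero-padded inputs instead of your uniform distribution plus an averaging step, the weaker $0.99$-accuracy bound of Proposition~\ref{prop:wrs-to-avrg} instead of Impagliazzo--Wigderson, and Proposition~\ref{prop:bc-expr} ($O(m)$ circuit size) instead of your $O(mn)$ DNF fit.
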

            
            We note that the circuit lower-bounds of the assumption are the same as the ones arising in derandomization research, and the theorem is based on a worst-case to average-case hardness result \cite{arora2009computational}. 

\section{Conclusion}
    In this work, we analyzed the sample efficiency gains of Turing machines over Boolean circuits and classical neural networks under the minimum description length principle in the PAC-learning framework. The Turing machines served as a proxy in the analysis for other Turing-complete systems, such as the recently proposed expressive Deep Learning architectures cited in the introduction, or programs (written in computationally universal languages) that can be learned from Inductive Programming techniques.
    
    We showed that learning with expressive models such as Turing machines can yield sample efficiency gains that are exponential in the input-size of the function to learn. Learning with polynomial-time Turing machines can also yield PAC-learning profits in comparison to learning with circuits. The sample efficiency gains grow linearly in the input-size of the function to learn.
    
    Whether they are superlinear or not is an open problem. One of our main results showed that if these sample efficiency gains are not superlinear in the input-size then $\textbf{P}\neq\textbf{NP}$. Another of our main contributions demonstrated that if it is possible to prove superlinear gains using a classical PAC-learning uniform convergence argument, then there exists a problem in $\textbf{P}$ with superlinear circuit complexity. Additionally, under a circuit lower bound the gains are superlinear in the input-size.
    
    A parallel investigation of the gains in terms of description-length to express Boolean functions is also an output of this research, given in \ref{section:descr-length-criterion}.
    
    This paper also leaves some questions open. Some of our results offer bounds on $\Tilde{G}$, thus providing information on the sample efficiency gains that are provable using the PAC-guarantee of Proposition \ref{prop:pac-length}. Improving these results by using $G$ instead would render them independent of any particular PAC-guarantee.
    
    An extension of the analysis to classically used DL architectures such as Convolutional Neural Networks and Recurrent Neural Networks would also broaden our insight on the potential advantage of using Turing-complete systems as models for learning. We leave this for further work.

\bibliographystyle{unsrtnat}
\bibliography{biblio.bib}

\newpage

\appendix
\section{Main proofs}
    \paclength*
    \begin{proof}
        The algorithm $\mathit{MDL}^\varphi$ never outputs a function with a description-length larger than $\abs{f}_\varphi$. There are at most $I=\sum_{i=0}^{\abs{f}_\varphi} 2^i = 2^{\abs{f}_\varphi+1}-1$ functions in this set.
        
        Let's compute an upper-bound on the probability that there exists a function $\tilde{f}$ of description-length smaller than $\abs{f}_\varphi$ such that $\mathit{acc}_f^\mathcal{P}(\tilde{f})<1-\epsilon$ and which is consistent with a dataset composed of $m\geq\frac{1}{\epsilon}\left[\log\frac{I}{\delta}\right]$ samples.
        
        For any function $\tilde{f}$ with $\mathit{acc}_f^\mathcal{P}(\tilde{f})<1-\epsilon$ the probability to be consistent with the dataset is upper-bounded by $(1-\epsilon)^m$ since each sample is drawn independently according to $\mathcal{P}$.
        
        By the union bound the probability of the existence of one low accuracy function consistent with the data is thus upper-bounded by $I(1-\epsilon)^m$.
        
        Which develops in $I(1-\epsilon)^m\leq Ie^{-\epsilon m}\leq\delta$.
        
        Moreover, there exists constants $a_1,a_2>0$ s.t
        \begin{equation}
            \frac{1}{\epsilon}\left[\log\frac{I}{\delta}\right]\leq \frac{a_1}{\epsilon} \left[\log \frac{1}{\delta} + \abs{f}_\varphi + a_2 \right]
        \end{equation}
        independently of the interpreter, learning problem, and PAC-learning parameters. 
    \end{proof}
    
\turingdom*
    \begin{proof}
        From Definition \ref{def:G-tilde} of $\Tilde{G}$
        \begin{equation}
        \begin{aligned}
            \Tilde{G}^d_{\mathcal{C}\rightarrow\mathcal{U}}(\epsilon,\delta,n) =& \sup_{f\in H^n,\mathcal{P}\in\Delta(\mathcal{B}^n)} & & \frac{m^{\epsilon,\delta}_\mathcal{U}(f,\mathcal{P})}{\frac{a_1}{\epsilon}(\log\frac{1}{\delta}+\abs{f}_\mathcal{C}+a_2)}\\
            & \text{subject to} & & \abs{f}_\mathcal{C}\leq n^d.
        \end{aligned}
        \end{equation}
        
        For any $n$, consider any learning problem $(f\in H^n,\mathcal{P}\in\Delta(\mathcal{B}^n))$.
        
        By the Definition \ref{def:min-m} of $m^{\epsilon,\delta}_\varphi$ and the PAC-guarantee given in Proposition \ref{prop:pac-length}, for any $\epsilon\in(0,\nicefrac{1}{2})$ and $\delta\in(0,1)$, we have
        \begin{equation}
            m^{\epsilon,\delta}_\mathcal{U}(f,\mathcal{P})\leq \frac{a_1}{\epsilon}(\log\frac{1}{\delta}+\abs{f}_\mathcal{U}+a_2).
        \end{equation}
        
        By the main theorem of Kolomogorov complexity, Proposition \ref{prop:KT}, we have $\abs{f}_{\mathcal{U}}\leq \abs{f}_\mathcal{C}+K$ for some constant $K$ independent of $f$. The PAC-learning guarantee for $\mathit{MDL}^\mathcal{C}$ can thus be transformed in a guarantee for $\mathit{MDL}^\mathcal{U}$ since
        \begin{equation}
            \frac{a_1}{\epsilon}(\log\frac{1}{\delta}+\abs{f}_\mathcal{U}+a_2)\leq \frac{a_1}{\epsilon}(\log\frac{1}{\delta} + \abs{f}_\mathcal{C}+K+a_2).
        \end{equation}
        
        Following these inequalities, we get
        \begin{equation}
            \Tilde{G}^d_{\mathcal{C}\rightarrow\mathcal{U}}(\epsilon,\delta,n) \leq \frac{\frac{a_1}{\epsilon}(\log\frac{1}{\delta} + \abs{f}_\mathcal{C}+K+a_2)}{\frac{a_1}{\epsilon}(\log\frac{1}{\delta}+\abs{f}_\mathcal{C}+a_2)} \leq \frac{K}{a_2}+1.
        \end{equation}
        
    \end{proof}

\expgain*
    \begin{proof}
        By Definition \ref{def:G} of the sample efficiency gains, we have
        \begin{equation}
        \begin{aligned}
            G^d_{\mathcal{U}\rightarrow\mathcal{C}}(\epsilon,\delta,n)=& \sup_{f\in H^n,\mathcal{P}\in\Delta(\mathcal{B}^n)} & & \frac{m^{\epsilon,\delta}_\mathcal{C}(f,\mathcal{P})}{m^{\epsilon,\delta}_\mathcal{U}(f,\mathcal{P})} \\
            & \text{subject to} & & \abs{f}_\mathcal{U}\leq n^d.
        \end{aligned}
        \end{equation}
    
        For any $\epsilon\in(0,\nicefrac{1}{2}),\delta\in(0,1),d\in\mathbb{N}^+$, we define a sequence of learning problems which prove the statement.
        For any $n$, we define a learning problem to solve. For any $n$, the learning problem is to learn under the uniform distribution, $U$, the binary function computed by the interpretation of the following program.
        
        \begin{itshape}
        For input $x$ of size $n$:
        \begin{enumerate}
            \item Compute the input size $n$.
            \item Enumerate all the functions in $H^n$ in some fixed lexicographic order. For each of these functions, $f$:
                \begin{enumerate}
                    \item Compute the hypothesis of minimal-description-length according to the interpreter $\mathcal{C}$ to represent a function $\hat{f}$ such that $\mathit{acc}^U_f(\hat{f})\geq 1-\epsilon$.
                    
                    Rewritten
                    \begin{equation}
                    \begin{aligned}
                        &\min_{h\in \mathcal{B}^*} & & \abs{h} \\
                        & \text{subject to} & & \mathit{acc}^U_f(\mathcal{C}(h,.))\geq 1-\epsilon.
                    \end{aligned}
                    \end{equation}
                    
                    \item With $H(.)$ the binary entropy function, see Definition \ref{def:bin-ent}, if the optimal description-length is bigger than $2^n(1-H(\epsilon))-2$ then return $f(x)$.
                \end{enumerate}
        \end{enumerate}
        \end{itshape}
        We first show that this is a well-defined computable function in the sense that the step \textit{(b)} will always be satisfied for some function in the enumeration for all $n$ sufficiently large.
        
        The next development shows by a counting argument, that all the binary functions cannot be approximated within $1-\epsilon$ by Boolean circuits of binary description-length smaller than $2^n(1-H(\epsilon))-2$.  
        
        The number of functions that can approximated with accuracy higher than $1-\epsilon$ by Boolean circuits of description-length smaller than $2^n(1-H(\epsilon))-2$ is
        \begin{equation}
            \abs{\bigcup_{l\in 0,\ldots,\floor{2^n(1-H(\epsilon))-2}} \bigcup_{h\in\mathcal{B}^l} \left\{f\in H^n|\,\mathit{acc}_f^U(\mathcal{C}(h,.))\geq 1-\epsilon\right\} }.
        \end{equation}
        
        Using Proposition \ref{prop:approx_count} and the fact that there are $2^{2^n}$ functions in $H^n$, 
        \begin{multline*}
            \abs{\bigcup_{l\in 0,\ldots,\floor{2^n(1-H(\epsilon))-2}} \bigcup_{h\in\mathcal{B}^l} \left\{f\in H^n|\,\mathit{acc}_f^U(\mathcal{C}(h,.))\geq 1-\epsilon\right\} }\\
            \leq \sum_{l\in 0,\ldots,\floor{2^n(1-H(\epsilon))-2}} \sum_{h\in\mathcal{B}^l} \abs{\left\{f\in H^n|\,\mathit{acc}_f^U(\mathcal{C}(h,.))\geq 1-\epsilon\right\}}\\
            = \sum_{l\in 0,\ldots,\floor{2^n(1-H(\epsilon))-2}} \sum_{h\in\mathcal{B}^l} \sum_{i\in 0,\ldots, \floor{\epsilon 2^n}} \binom{2^n}{i}\\
            \leq \sum_{l\in 0,\ldots,\floor{2^n(1-H(\epsilon))-2}} \sum_{h\in\mathcal{B}^l} 2^{2^n H(\epsilon)}\\
            \leq 2^{2^n(1-H(\epsilon))-1} 2^{2^n H(\epsilon)}\\
            =2^{2^n-1}<2^{2^n}=\abs{H^n}.
        \end{multline*}
        
        Thus the condition in step \textit{(b)} will always be satisfied for some function.
        
        Also, the condition $\abs{f}_\mathcal{U}\leq n^d$ will always be satisfied for all $n$ large enough since the learning problem's function corresponds to a program/Turing-machine of fixed description-length.
        
        Moreover, using the guarantee of Proposition \ref{prop:pac-length}, $m^{\epsilon,\delta}_\mathcal{U}(f,\mathcal{P})\leq \frac{a_1}{\epsilon}(\log\frac{1}{\delta}+\abs{f}_\mathcal{U}+a_2)$, we deduce that $m^{\epsilon,\delta}_\mathcal{U}(f,\mathcal{P})$ is upper-bounded by a constant independent of $n$.
        
        Also by construction, $\mathit{MDL}^\mathcal{C}$ has to select circuits of description-length growing at least as fast as $2^n(1-H(\epsilon))-2$ to be able to approximate the function to learn with an average error at most $\epsilon$. We show that this condition has implications on the size of the minimal circuit that has to be selected and then on the minimal number of samples needed.
        
        By Definition \ref{def:bool-circ}, for all $n$ sufficiently large, all circuits of size lower than $\alpha2^n/n$, for any $\alpha>0$, can be described with the Boolean circuit interpreter $\mathcal{C}$ such that their description-length is lower than
        \begin{equation}
            9\alpha\frac{2^n}{n}\log( \alpha 2^n/n).
        \end{equation}
        
        Thus, for all $n$ sufficiently large, the description-lengths of these circuits are lower than
        \begin{equation}
            9\alpha\left[ \log(2)2^n + \log(\alpha)\frac{2^n}{n} - \frac{2^n}{n}\log n \right]\leq
            9\alpha\log(2\alpha) 2^n.
        \end{equation}
        
        Thus there exists a $\alpha$ sufficiently small such that, for all $n$ sufficiently large, the description-lengths of these circuits are smaller than $2^n(1-H(\epsilon))-2$.
        
        Consequently, all these circuits with size at most $\alpha2^n/n$ must be eliminated by $\mathit{MDL}^\mathcal{C}$. This requires at least $\frac{C}{b}2^n/n$ samples, for some fixed $b>0$, by Proposition \ref{prop:bc-expr}.
    \end{proof}

\circpower*

    \begin{proposition}{\cite{pippenger1979relations,schnorr1976network}.}\label{prop:bc-tm}
        If a multi-tape Turing machine $M$ computes a function on inputs of size $n$ within $t$ steps then there exists a Boolean circuit of size at most $\alpha (\text{number of rules of }M) t\log t$ that computes the same function, where $\alpha$ depends only on the number of tapes and the alphabet size of the Turing machine.
    \end{proposition}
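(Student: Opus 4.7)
The plan is to reduce to the oblivious case and then construct the simulating circuit time-step by time-step. First, I would apply a Hennie--Stearns style obliviousness reduction to transform $M$ into a two-tape Turing machine $M'$ computing the same function in $t' \in O(t \log t)$ steps, where the sequence of head positions on each tape as a function of step index depends only on $n$, not on the actual input. The number of rules of $M'$ is bounded by a constant (depending only on the alphabet size and the original number of tapes) times the number of rules of $M$; Schnorr's refinement of the construction is what gives this fine-grained dependence on $|M|$ rather than hiding it in a catch-all constant.

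Next, for the oblivious machine $M'$ I would build a layered circuit with one layer per simulated step. Layer $i$ takes as input the current state of $M'$ at time $i$ together with the symbols read from each tape at time $i$, and outputs the state at time $i{+}1$ together with the symbols written. Because the transition function of $M'$ is just a lookup keyed by (state, read symbols), each such layer can be realized by a Boolean subcircuit of size $O(\text{number of rules of }M')$, i.e.\ $O(|M|)$ after absorbing the alphabet-and-tape-count constant into $\alpha$.

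The central wiring step is the one that exploits obliviousness. At layer $i$, the read input from tape $j$ must equal whatever was written to the current head-cell on tape $j$ at the most recent earlier step at which the tape-$j$ head visited that same cell (or the corresponding input bit / blank symbol if none). Since the head trajectory on each tape of $M'$ is fixed as a function of time alone, the map ``most-recent previous visit to the current cell'' can be precomputed and hard-wired: the read wire of layer $i$ is connected directly to the corresponding write wire of the predetermined earlier layer. This routing uses no additional gates, only wires, and the input bits feed into the earliest layers whose heads visit the input positions. Counting gives total size $O(t') \cdot O(|M|) = O(\alpha \cdot |M| \cdot t \log t)$.

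The main obstacle is Step~1: producing an oblivious simulation with only a $\log t$ overhead is the nontrivial classical construction (simulating many tapes by a two-tape machine that repeatedly reshuffles blocks of tape content so that head movements become input-independent), and tracking the linear dependence on $|M|$ rather than burying it inside an opaque constant is exactly the refinement attributed to Schnorr. Once obliviousness is in hand, the layered-circuit construction and the routing argument are straightforward.
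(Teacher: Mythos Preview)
The paper does not supply its own proof of this proposition: it is stated as a citation to \cite{pippenger1979relations,schnorr1976network} and used as a black box in the proof of Theorem~\ref{thm:circuit-power}. Your sketch is a faithful outline of the standard Pippenger--Fischer construction (oblivious simulation with $O(t\log t)$ overhead, followed by a layered circuit whose wiring is fixed by the now input-independent head trajectories), together with the Schnorr refinement that keeps the dependence on the number of rules of $M$ explicit and linear rather than absorbing it into an opaque constant. This is exactly the content behind the cited references, so there is nothing to compare against and nothing to correct.

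One terminological nit: the obliviousness step is due to Pippenger--Fischer rather than Hennie--Stearns proper (the latter is the two-tape simulation of $k$ tapes in $O(t\log t)$ time); the constructions share the block-reshuffling idea, and you signal this by writing ``Hennie--Stearns style,'' so no harm done. The paper itself makes the same distinction in the remark following the proof of Theorem~\ref{thm:circuit-power}.
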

        
    \begin{proof}[Proof of Theorem \ref{thm:circuit-power}]
        
        By Definition \ref{def:G-tilde} of $\tilde{G}$
        \begin{equation}\label{eq:G-tilde-circ-power}
        \begin{aligned}
            \Tilde{G}^d_{\mathcal{U}^c\rightarrow\mathcal{C}}(\epsilon,\delta,n) =& \sup_{f\in H^n,\mathcal{P}\in\Delta(\mathcal{B}^n)} & & \frac{m^{\epsilon,\delta}_\mathcal{C}(f,\mathcal{P})}{\frac{a_1}{\epsilon}(\log\frac{1}{\delta}+\abs{f}_{\mathcal{U}^c}+a_2)}\\
            & \text{subject to} & & \abs{f}_{\mathcal{U}^c}\leq n^d.
        \end{aligned}
        \end{equation}
        
        We will upper-bound the ratio for any learning problem $(f\in H^n,\mathcal{P}\in\Delta(\mathcal{B}^n))$. Note that only functions $f$ corresponding to finite $\abs{f}_{\mathcal{U}^c}$ have to be considered.
        
        We use Proposition \ref{prop:hardwire} on the Turing machine of the interpreter $\mathcal{U}^c$ and the input $h$ of length $\abs{f}_{\mathcal{U}^c}$ such that $\mathcal{U}^c(h,.)=f$. From the application of the proposition, we deduce that there exists a Turing machine with at most $\rho \abs{f}_{\mathcal{U}^c}$ rules that compute $f$, where $\rho>0$ is a parameter independent of $f$ and $n$. Moreover, the proposition tells us that the resulting Turing machine has the same computational time-limit as $\mathcal{U}^c$, and is thus also bounded by $\beta n^c$, for some $\beta>0$.
        
        Applying now Proposition \ref{prop:bc-tm} on these facts, we deduce that there exists a Boolean circuit computing $f$ of size at most, with $t=\beta n^c$,
        \begin{equation}
            \alpha\rho\abs{f}_{\mathcal{U}^c}t\log t,
        \end{equation}
        where $\alpha>0$ is independent of $f$ and $n$.
        
        By Definition \ref{def:bool-circ}, the description-length of this circuit with the Boolean circuit interpreter $\mathcal{C}$, and thus $\abs{f}_\mathcal{C}$, will be upper bounded by
        \begin{equation}\label{eq:upper-bound-circ-length}
            2\ceil{\log_2(n)}+2+\underbrace{\alpha\rho\abs{f}_{\mathcal{U}^c}t\log t(3+\max\{2\ceil{\log_2\alpha\rho\abs{f}_{\mathcal{U}^c}t\log t}),\ceil{\log_2 n}\}}_{\equaldef B}.
        \end{equation}
        
        We use the PAC-guarantee of Proposition \ref{prop:pac-length} to upper-bound $m^{\epsilon,\delta}_\mathcal{C}(f,\mathcal{P})$ with the upper-bound on $\abs{f}_\mathcal{C}$ of Equation \ref{eq:upper-bound-circ-length}.
        
        Then the quantity of Equation \ref{eq:G-tilde-circ-power} is also upper-bounded
        \begin{equation}\label{eq:circ-power-upper-bound-G-tilde}
            \begin{split}
                \Tilde{G}_{\mathcal{U}^c\rightarrow\mathcal{C}}(\epsilon,\delta,n) &\leq \sup_{f\in H^n} \frac{\frac{a_1}{\epsilon}(\log\frac{1}{\delta}+2\ceil{\log_2(n)}+2+B+a_2)}{\frac{a_1}{\epsilon}(\log\frac{1}{\delta}+\abs{f}_{\mathcal{U}^c}+a_2)}\;\text{s.t.}\;\abs{f}_{\mathcal{U}^c}\leq n^d,\\
                &\leq \sup_{f\in H^n} \frac{B}{\abs{f}_{\mathcal{U}^c}}+ \frac{2}{a_2}\ceil{\log_2(n)}+\frac{2}{a_2} +1\;\text{s.t.}\;\abs{f}_{\mathcal{U}^c}\leq n^d.
            \end{split}
        \end{equation}
        
        Where $\frac{B}{\abs{f}_{\mathcal{U}^c}}$ equals
        \begin{equation}\label{eq:circ-power-upper-B-on-length}
            \alpha\rho t\log t(3+\max\{2\ceil{\log_2\alpha\rho\abs{f}_{\mathcal{U}^c}t\log t}),\ceil{\log_2 n}\}.
        \end{equation}
        Using $\abs{f}_{\mathcal{U}^c}\leq n^d$ and $t=\beta n^c$, the value in Equation \ref{eq:circ-power-upper-B-on-length} is in
        \begin{equation}
            O(n^c\log^2n).
        \end{equation}
        
        Returning this result to the inequalities of Equation \ref{eq:circ-power-upper-bound-G-tilde}, we obtain
        \begin{equation}
            \Tilde{G}_{\mathcal{U}^c\rightarrow\mathcal{C}}(\epsilon,\delta,n) \in O(n^c\log^2 n).
        \end{equation}
    \end{proof}
    We note that the Theorem \ref{thm:circuit-power} can be improved upon by using another definition for the polynomial-time universal Turing machine, $\mathcal{U}^c$. The Definition \ref{def:l-res-turing} use the construction of \cite{hennie1966two}. Another possibility is to use the construction of \cite{pippenger1979relations} related to Proposition \ref{prop:bc-tm}. This construction would make $\mathcal{U}^c$ oblivious and directly transformable into a Boolean circuit without the additional $\log n$ factor. The final bound would thus be in $O(n^c\log n)$ instead of $O(n^c\log^2 n)$.
    
\lineargain*
    \begin{proof}
        By Definition \ref{def:G}, the sample efficiency gain is
        \begin{equation}
        \begin{aligned}
            G^d_{\mathcal{U}^c\rightarrow\mathcal{C}}(\epsilon,\delta,n)=& \sup_{f\in H^n,\mathcal{P}\in\Delta(\mathcal{B}^n)} & & \frac{m^{\epsilon,\delta}_\mathcal{C}(f,\mathcal{P})}{m^{\epsilon,\delta}_{\mathcal{U}^c}(f,\mathcal{P})} \\
            & \text{subject to} & & \abs{f}_{\mathcal{U}^c}\leq n^d.
        \end{aligned}
        \end{equation}
        
        Fix any combination of $\epsilon\in(0,\nicefrac{1}{2}),\delta\in(0,1),1<c,d\in\mathbb{N}^+$.
    
        For any input size $n$, we contruct the following learning problem, $(f\in H^n,\mathcal{P}\in\Delta(\mathcal{B}^n))$. The function to learn $f$ is the parity function on $\mathcal{B}^n$, noted $\oplus$ (it is the number of $1$ in the input modulo $2$). The probability measure on the domain $\mathcal{B}^n$, $\mathcal{P}$, is the uniform distribution $U^n$. Let $\oplus|_n$ denote the function $\oplus$ restricted to size $n$ inputs.
        
        There exists a fixed Turing machine computing the parity function in linear time for all $n$. By Definition \ref{def:l-res-turing}, for $c>1$ there exists an $h\in\mathcal{B}^*$ such that $\mathcal{U}^c(h,.)$ computes $\oplus$ for all $n$ large enough. Thus $\abs{\oplus|_n}_{\mathcal{U}^c}$ is at most some constant.
        
        A first consequence is that, for $n$ large enough, $\abs{\oplus|_n}_{\mathcal{U}^c}\leq n^d$ will be satisfied for the form of learning problem we defined.
        
        A second consequence, using the PAC-guarantee given in Proposition \ref{prop:pac-length}, is that the denominator $m^{\epsilon,\delta}_{\mathcal{U}^c}(f,\mathcal{P})\leq\frac{a_1}{\epsilon}(\log\frac{1}{\delta}+\abs{\oplus|_n}_{\mathcal{U}^c}+a_2)$ is less than a constant.
        
        Now for the numerator, for any $\gamma>0$ take any sampling of size less than $n^{1-\gamma}$. By Proposition \ref{prop:bc-expr} and Definition \ref{def:bool-circ}, we know that a Boolean circuit of size less than $bn^{1-\gamma}$ will be selected by $\mathit{MDL}^\mathcal{C}$ for some fixed $b>0$.
        
        For $n$ sufficiently large $bn^{1-\gamma}<n$, and thus the circuit selected by $\mathit{MDL}^\mathcal{C}$ will not depend on all the inputs' variables. Suppose without loss of generality that $x_n$ is one of these variables to which the circuit is not sensible. The accuracy of a function $C$ computed by such a selected circuit on our learning problem is
        \begin{equation}
            \begin{split}
                \mathit{acc}_{\oplus}^{U^n}(C) &= \frac{1}{2^n}\sum_{x\in\mathcal{B}^n} \oplus(x) = C(x_1,\ldots,x_{n-1},x_n)\\
                &= \frac{1}{2^n}\sum_{x^-\in\mathcal{B}^{n-1}}\sum_{x_n\in\mathcal{B}} \oplus(x^-,x_n) = C(x^-_1,\ldots,x^-_{n-1})\\
                &= \frac{1}{2^n} \sum_{x^-\in\mathcal{B}^{n-1}} 1\\
                &= \frac{1}{2}.
            \end{split}
        \end{equation}
        
        Since $\epsilon<\nicefrac{1}{2}$, it is impossible for $\mathit{MDL}^\mathcal{C}$ to get a sufficient accuracy with only $n^{1-\gamma}$ samples for $n$ sufficiently large, and so $m^{\epsilon,\delta}_\mathcal{C}(f,\mathcal{P})\in\Omega(n^{1-\gamma})$.
    \end{proof}

\ifistrue*
    \begin{proof}
        Let's suppose superlinear gains $\Tilde{G}^d_{\mathcal{U}^c\rightarrow \mathcal{C}}(\epsilon,\delta,n)\notin O(n^{1+\gamma})$ for some $\epsilon\in(0,\nicefrac{1}{2}),\,\delta\in(0,1),\, c,d\in\mathbb{N}^+$ and $\gamma>0$.
        
        Recall that by Definition \ref{def:G-tilde}, for some constants $a_1,\,a_2>0$ defined in Proposition \ref{prop:pac-length},
        \begin{equation}\label{eq:def-G-tilde-superlinear-true}
        \begin{aligned}
            \Tilde{G}^d_{\mathcal{U}^c\rightarrow\mathcal{C}}(\epsilon,\delta,n) =& \sup_{f\in H^n,\mathcal{P}\in\Delta(\mathcal{B}^n)} & & \frac{m^{\epsilon,\delta}_\mathcal{C}(f,\mathcal{P})}{\frac{a_1}{\epsilon}(\log\frac{1}{\delta}+\abs{f}_{\mathcal{U}^c}+a_2)}\\
            & \text{subject to} & & \abs{f}_{\mathcal{U}^c}\leq n^d.
        \end{aligned}
        \end{equation}
        
        We give the proof outline:
        \begin{enumerate}
            \item First, we prove that the premise of the theorem's statement implies the existence of an infinite sub-sequence of functions with description-length gains superlinear in the input-size. This result appears in Equation \ref{eq:uncomp-gain}.
            \item Second, we present a fixed program of polynomial computational complexity.
            \item Third, we show lower-bounds on some input-sizes for this program with the interpreter for circuits. These lower-bounds come from the developments in the first point.
            \item Fourth, we show from these lower-bounds that the description-length of the program with the circuit interpreter is at least superlinear in the input-size.
            \item Fifth, we conclude by showing superlinear circuit complexity for the presented program. 
        \end{enumerate}
        
        For any $n$, the sets $H_n$ are finite and thus, for any $n$, the supremum can be attained for some $f_n\in H_n$. Then there exists a sequence of functions $(f_n)=f_1\in H_1,\ldots,f_n\in H_n,\ldots$ such that the sequence in $n$
        \begin{equation}
            \sup_{\mathcal{P}\in\Delta(\mathcal{B}^n)} \frac{m^{\epsilon,\delta}_\mathcal{C}(f_n,\mathcal{P})}{\frac{a_1}{\epsilon}(\log\frac{1}{\delta}+\abs{f_n}_{\mathcal{U}^c}+a_2)}
        \end{equation}
        is not in $O(n^{1+\gamma})$, and, morevover, with $f_n\leq n^d$ for all $n$.
        
        By contraposition of the PAC-guarantee offered by Proposition \ref{prop:pac-length}, for all $n$ and $\mathcal{P}$, $\frac{a_1}{\epsilon}(\log\frac{1}{\delta}+\abs{f_n}_{\mathcal{C}}+a_2) \geq m^{\epsilon,\delta}_\mathcal{C}(f_n,\mathcal{P})$, and thus
        \begin{equation}
            \frac{\frac{a_1}{\epsilon}(\log\frac{1}{\delta}+\abs{f_n}_{\mathcal{C}}+a_2)}{\frac{a_1}{\epsilon}(\log\frac{1}{\delta}+\abs{f_n}_{\mathcal{U}^c}+a_2)} \notin O(n^{1+\gamma}).
        \end{equation}
        
        Since $\epsilon$ and $\delta$ are fixed, the sequence of function $(f_n)$ satisfies
        \begin{equation}\label{eq:superlinear-circ-length}
            \frac{\abs{f_n}_\mathcal{C}}{\abs{f_n}_{\mathcal{U}^c}}\notin O(n^{1+\gamma}).
        \end{equation}
        
        We now restrict $n$ to the indices that form a sub-sequence of $(f_n)$ such that
        \begin{equation}
            \frac{\abs{f_n}_\mathcal{C}}{\abs{f_n}_{\mathcal{U}^c}}\in\Omega(n^{1+\gamma/2}).
        \end{equation}
        Let $N\subseteq \mathbb{N}$ denote the set of such indices. Notice that such a restriction remove any function $f$ such that $\abs{f}_{\mathcal{U}^c}=+\infty$ from the sequence.
        
        By definition of the big-$\Omega$ notation, there exists some $b>0$ such that for all $n$ sufficiently large
        \begin{equation}\label{eq:uncomp-gain}
            \abs{f_n}_\mathcal{C}\geq b\abs{f_n}_{\mathcal{U}^c}n^{1+\gamma/2}.
        \end{equation}
        
        We will now define a Boolean function computable in polynomial-time and, using Equation \ref{eq:uncomp-gain}, prove a superlinear circuit complexity for it.
        
        This Boolean function will be noted $I$ and is defined by
        \begin{equation}\label{eq:def-I}
            I(\scalarprod{x_1}{x_2}) = \mathcal{U}^{c}(x_1,x_2),
        \end{equation}
        where $\scalarprod{.}{.}:\mathcal{B}^*\times\mathcal{B}^*\rightarrow \mathcal{B}|\,\scalarprod{x_1}{x_2}=\underbrace{0\ldots0}_{\abs{x_2}}1x_1x_2$. The encoding $\scalarprod{.}{.}$ is bijective and invertible in polynomial-time. Thus $I$ is computable in polynomial-time since $\mathcal{U}^c$ is also computable in polynomial-time.
        
        We prove a lower-bound on the description-length of $I$ with the Boolean circuit interpreter on some inputs' sizes as we show next by contradiction. We denote $I|_{2n+1+\abs{f_n}_{\mathcal{U}^c}}$ the Boolean function $I$ restricted to $2n+1+\abs{f_n}_{\mathcal{U}^c}$ sized-inputs. We will show $\abs{I|_{2n+1+\abs{f_n}_{\mathcal{U}^c}}}_\mathcal{C}\geq \abs{f_n}_\mathcal{C}$.
        
        Suppose there exists some Boolean circuit that computes $I|_{2n+1+\abs{f_n}_{\mathcal{U}^c}}$, and that its description-length is strictly lower than $\abs{f_n}_\mathcal{C}$, i.e. $\abs{I|_{2n+1+\abs{f_n}_{\mathcal{U}^c}}}_\mathcal{C}< \abs{f_n}_\mathcal{C}$.
        
        For all $n$, let $p_n\in\mathcal{B}^{\abs{f}_{\mathcal{U}^c}}$ be the Boolean string such that $\mathcal{U}^c(p_n,.)=f_n$. By definition of $I$ (Equation \ref{eq:def-I}) the function $I(\scalarprod{p_n}{.})$ computes $f_n$, i.e. for all $x\in\mathcal{B}^n$ we have $I(\scalarprod{p_n}{x})=f_n(x)$.
        
        For the supposed circuit, we hardwire the $n+2$ to the $n+2+\abs{f_n}_{\mathcal{U}^c}$ inputs' variables to $p_n$. This force the circuit to compute $f_n$ as shown.
        
        When we hardwire some of the inputs' variables, the Boolean circuit size can only diminish. By Definition \ref{def:bool-circ} of the Boolean circuit interpreter, the description-length of a circuit is an increasing monotone function of its size. So, when we hardwire $p_n$ in the input, the circuit can only diminish in description-length. This implies that $\abs{f_n}_{\mathcal{C}}\leq \abs{I|_{2n+1+\abs{f_n}_{\mathcal{U}^c}}}_\mathcal{C} <  \abs{f_n}_\mathcal{C}$, this inequality is a contradiction. Consequently, we must have
         \begin{equation}
            \abs{I|_{2n+1+\abs{f_n}_{\mathcal{U}^c}}}_\mathcal{C}\geq \abs{f_n}_\mathcal{C}.
        \end{equation}
        
        With our lower-bounds on circuits' description-lengths of Equation \ref{eq:uncomp-gain}, it gives, for all $n$ sufficiently large,
        \begin{equation}
            \abs{I|_{2n+1+\abs{f_n}_{\mathcal{U}^c}}}_\mathcal{C}\geq bn^{1+\gamma/2}\abs{f_n}_{\mathcal{U}^c}.
        \end{equation}
        
        We cannot conclude the Theorem directly from this, the fact that $\abs{f_n}_{\mathcal{U}^c}$ can vary with $n$ complexifies the analysis. The rest of the proof address this issue by identifying an infinite subsequence of $(f_n)$ for which the evolution of $\abs{f_n}_{\mathcal{U}^c}$ has a tight characterization.
        
        We distribute the sequence of learning problems' functions, $f_n$, in different sets. For some precision parameter $\nu>0$, we define $i^{\max}=\ceil{\frac{1}{\nu}}d$, the sequence of indices $i=1,\ldots, i^{\max}$, the following sets
        \begin{equation}\label{eq:def-Si}
            S_i = \left\{ n\in N|\, \frac{i-1}{i^{\max}}n^{\frac{i-1}{i^{\max}}d} < \abs{f_n}_{\mathcal{U}^c} \leq \frac{i}{i^{\max}}n^{\frac{i}{i^{\max}}d} \right\},
        \end{equation}
        and $S_0$ containing the unique possible description-length of $0$ function.
        
        We have the upper-bound on the description-length of Equation \ref{eq:def-G-tilde-superlinear-true}, for all $n$, $\abs{f_n}_{\mathcal{U}^c}\leq n^d$. Thus, the functions in the infinite sequence $(f_n)$ are well partitioned in the defined sets. By the pigeon-hole principle there exists an index $i^*$ such that $S_{i^*}$ is of infinite size.
        
        We now restrict all $n$ to be in $S_{i^*}$. We prove that the description-length of the program $I$ is superlinear in the input-size with the interpreter for circuits for this sub-sequence of indices.
        
        We distingish three cases that cover all the possibilities for $i^*$:
        \begin{enumerate}
            \item Zero-length $i^*=0$.
            
            By Definition \ref{def:turing-mach} of Turing machines, a Turing machine has at least two states. By Definition \ref{def:enc-turing} of the encoder for Turing machines, a Turing machine with at least two states has at least a description-length of two bits. From these two facts and by Definition \ref{def:l-res-turing} of $\mathcal{U}^c$, a zero-length description interpreted by $\mathcal{U}^c$ outputs $\bot$. Thus, for all $n$ and any function $f\in H^n$, $\abs{f}_{\mathcal{U}^c}>0$. Consequently, the set $S_0$ is empty and this case is not possible.
            
            \item Sub-linear $i^*\leq \ceil{1/\nu}=i^{\max}/d\rightarrow \frac{i^*}{i^{\max}}d\leq 1$.
            
            We provide a lower-bound on the power linking the input-size to the circuits' description-lengths lower-bounds given in Equation \ref{eq:uncomp-gain}.
            
            We note that we have, $0<\abs{f_n}_{\mathcal{U}^c}\leq \alpha n$ for $\alpha=i^*/i^{\max}$ by Equation \ref{eq:def-Si}.
            
            Using properties of the logarithm, for all $\kappa_1,\kappa_2>0$, and all $n$ sufficiently large,
            \begin{equation}
            \begin{split}
                \log_{2n+1+\abs{f_n}_{\mathcal{U}^c}} bn^{1+\gamma/2}\abs{f_n}_{\mathcal{U}^c}&\geq \log_{(2+\alpha)n+1}n^{1+\gamma/2} + \log_{2n+1+\abs{f_n}_{\mathcal{U}^c}}b + \log_{2n+1+\abs{f_n}_{\mathcal{U}^c}}\abs{f_n}_{\mathcal{U}^c}\\
                &\geq \frac{\log_n n^{1+\gamma/2}}{\log_n ((2+\alpha)n+1)}-\kappa_1+0\\
                &\geq \frac{1+\gamma/2}{1+\kappa_2}-\kappa_1.
            \end{split}
            \end{equation}
            Since $\gamma>0$, there exists $\kappa_1,\kappa_2$ small enough such that this lower bound is strictly greater than one.
            
            \item Superlinear $i^*>\ceil{1/\nu}\rightarrow i^*-1\geq \ceil{1/\nu}=i^{\max}/d\rightarrow \frac{i^*-1}{i^{\max}}d\geq 1$.
            
            We pose $q=\frac{i^*}{i^{\max}}d$, $\nu'=\frac{1}{\ceil{1/\nu}}\leq\nu$, and $\alpha_1=\frac{i^*-1}{i^{\max}},\alpha_2=\frac{i^*}{i^{\max}}$.
            
            The following holds, by definition of $S_{i^*}$ in Equation \ref{eq:def-Si}, $\alpha_1n^{q-\nu'}\leq \abs{f_n}_{\mathcal{U}^c}\leq \alpha_2n^q$. Also, we have $q\geq1$ and $q-\nu'= \frac{i^*-1}{i^{\max}}d \geq 1$.
            
            In this case the power linking the input-size to the circuits' description-lengths lower-bounds is, for all $\kappa_1,\kappa_2,\kappa_3>0$ and all sufficiently large $n$,
            \begin{equation}
            \begin{split}
                \log_{2n+1+\abs{f_n}_{\mathcal{U}^c}}bn^{1+\gamma/2}\abs{f_n}_{\mathcal{U}^c}
                &\geq \log_{2n+1+\abs{f_n}_{\mathcal{U}^c}} \alpha_1 bn^{1+\gamma/2+q-\nu'}\\    
                &\geq \log_{(2+\alpha_2)n^q+1} n^{1+\gamma/2+q-\nu'} + \log_{2n+1+\abs{f_n}_{\mathcal{U}^c}} \alpha_1b\\
                &\geq \frac{\log_{n^q} n^{1+\gamma/2+q-\nu'}}{\log_{n^q} ((2+\alpha_2)n^q+1)} -\kappa_1\\
                &\geq \frac{1+\frac{1+\gamma/2-\nu'}{q}}{1+\log_{n^q} (2+\alpha_2) + \kappa_2} -\kappa_1\\
                &\geq \frac{1+\frac{1+\gamma/2}{d}}{1+\kappa_2+\kappa_3} -\frac{\nu}{1+\kappa_2+\kappa_3} -\kappa_1.
            \end{split}
            \end{equation}
            Our reasoning can be taken with arbitrarily small $\nu$ and $\kappa_1,\kappa_2,\kappa_3$, such that the lower bound can be made strictly greater than one.
        \end{enumerate}
        
        All the possible cases have been treated.
        
        The proved bound on the Boolean circuits' description-length extends to their sizes. More precisely, any superlinear lower-bound of the type $n^{1+\iota}$, for some $\iota>0$, on the description-length of the Boolean circuits with interpreter $\mathcal{C}$ implies, for all $n$ sufficiently large, a similar lower-bound, $n^{1+\tau}$, for some $0<\tau<\iota$, for the Boolean circuits' sizes by Definition \ref{def:bool-circ}.
        
        This finishes the proof.
    \end{proof}

\ifisfalse*
    
    \begin{proposition}{\cite{pavan2006some}.}\label{th:avr-hard-PH}
        For any $k_1,k_2\in\mathbb{N}^+$, there exists a language $L\in \textbf{P}^{\sum_2^p}$ such that for every circuit sequence $(C_1,\ldots,C_n,\ldots)$ whose circuits' sizes are at most $n^{k_1}$, the following holds
        \begin{equation}
            \Pr\limits_{x\in U(\mathcal{B}^n)}\left[L(x)=C_n(x) \right] \leq 1/2 + 1/n^{k_2},
        \end{equation}
        where $U(\mathcal{B}^n)$ denotes the uniform distribution on $\mathcal{B}^n$.
    \end{proposition}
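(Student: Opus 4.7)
The plan is to prove the statement by combining a Kannan-style worst-case circuit lower bound inside the polynomial hierarchy with a worst-case to average-case hardness amplification that keeps the final language inside $\textbf{P}^{\Sigma_2^p}$. Fix the targets $k_1$ and $k_2$; I would first choose an auxiliary exponent $k' = k'(k_1, k_2)$ large enough that, after the amplification-induced size loss, we still beat circuits of size $n^{k_1}$ with advantage only $1/n^{k_2}$.

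First, I would establish a worst-case circuit lower bound inside $\Sigma_2^p$. By the usual counting argument, the number of circuits of size $m^{k'}$ on $m$ input bits is at most $2^{O(m^{k'} \log m)}$, strictly less than the total number $2^{2^m}$ of Boolean functions on $\mathcal{B}^m$, so some hard $f_m : \mathcal{B}^m \to \mathcal{B}$ exists for every sufficiently large $m$. Taking $f_m$ to be the lexicographically first truth table with circuit complexity exceeding $m^{k'}$, the predicate ``no circuit of size at most $m^{k'}$ computes the truth table $T$'' is $\forall C \, \exists x \, [C(x) \neq T_x]$, hence $\Pi_2$ in $T$; the lex-first selection together with a lookup of the $x$-th bit then fits into $\Sigma_2^p$ via a standard quantifier manipulation. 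This is essentially Kannan's theorem, which already places a worst-case hard language in $\Sigma_2^p \cap \Pi_2^p$.

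Second, I would apply worst-case to average-case hardness amplification on $f_m$. The textbook recipe is to first convert worst-case hardness into mild average-case hardness via a random self-reducible encoding (a low-degree Reed-Muller extension, or a Sudan-Trevisan-Vadhan-style concatenated code), and then to amplify the advantage with Yao's XOR lemma on $t = \Theta(n^{k_2})$ independent copies (or its Impagliazzo-Wigderson derandomized variant if one wants tighter parameters). Each bit of the amplified language on inputs of length $n = \Theta(m \cdot t)$ is computable in polynomial time by polynomially many evaluations of $f_m$, each of which is a single $\Sigma_2^p$ query, so the resulting language sits in $\textbf{P}^{\Sigma_2^p}$, and its average-case advantage against circuits of size $n^{k_1}$ is bounded by $1/n^{k_2}$ provided $k'$ was chosen large enough to absorb the circuit-size loss of the amplification chain.

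The main obstacle is the quantitative calibration: Yao's XOR lemma loses a multiplicative factor $\mathrm{poly}(1/\epsilon)$ in the circuit size it can rule out, so pushing the advantage down to $\epsilon = 1/n^{k_2}$ against circuits of size $n^{k_1}$ requires a starting worst-case hardness comfortably larger than $n^{k_1 + O(k_2)}$, and one must verify that the Reed-Muller encoding and the XOR composition do not inflate the oracle class from $\Sigma_2^p$ up to $\Sigma_3^p$. Managing these parameters so that the final language remains in $\Delta_3^p = \textbf{P}^{\Sigma_2^p}$ while simultaneously achieving the sharp $1/2 + 1/n^{k_2}$ average-case bound against $n^{k_1}$-size circuits is the delicate point; the careful bookkeeping is what the construction of \cite{pavan2006some} actually carries out.
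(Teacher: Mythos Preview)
The paper does not give its own proof of this proposition: it is simply quoted as a result of \cite{pavan2006some} and then used as a black box in the proof of Theorem~\ref{thm:if-false}. So there is no argument in the paper to compare against.

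Your sketch is, independently, the right outline and matches what the cited reference does: start from a Kannan-type worst-case lower bound against $n^{k'}$-size circuits for a language definable in $\Sigma_2^p$ (lex-first hard truth table), then run a worst-case to average-case amplification (local encoding plus XOR-type amplification) so that the amplified function is computable in polynomial time with oracle queries to the $\Sigma_2^p$ language, hence lies in $\textbf{P}^{\Sigma_2^p}$, and finally choose $k'$ large enough in $k_1,k_2$ to absorb the size blow-up of the amplification. You have also correctly identified the only genuinely delicate point, namely checking that the encoding and amplification steps stay at level $\Delta_3^p$ rather than climbing to $\Sigma_3^p$; that parameter and oracle-level bookkeeping is precisely what \cite{pavan2006some} carries out, and your proposal defers to it appropriately.
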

    
    \begin{proof}[Proof of Theorem \ref{thm:if-false}]
        By contraposition, we suppose $\textbf{P}=\textbf{NP}$, then the polynomial hierarchy collapses, $\textbf{P}=\textbf{PH}$, and thus in particular $\textbf{P}=\textbf{P}^{\sum_2^p}$.
        
        Implying with Proposition \ref{th:avr-hard-PH} that for all $k_1$ and $k_2$ there exists a language in $\textbf{P}$ such that for all $n$ there does not exist a Boolean circuit of size smaller than $n^{k_1}$ which approximate the language with accuracy at least $\frac{1}{2}+\frac{1}{n^{k_2}}$ under the uniform distribution.
        
        Fix $\epsilon=\nicefrac{1}{4}$, any $\delta\in(0,1)$, any $d\in\mathbb{N}^+$, and $\gamma=1$.
        
        Take $k_1=2$ and $k_2=1$. For any $n\geq4$, to select a function of error rate at most $\nicefrac{1}{4}$ all circuits of size lower than $n^2$ must be eliminated. By Definition \ref{def:bool-circ} of the Boolean circuits' interpreter and Proposition \ref{prop:bc-expr}, the link between circuits' size and description-length is an increasing monotonic function, and thus, at least $n^2/b$ samples will be necessary to eliminate all these circuits for the learning algorithm $\mathit{MDL}^\mathcal{C}$, for some fixed constant $b>0$.
        
        Moreover, by Definition \ref{def:l-res-turing} of $\mathcal{U}^c$, since the language is in $\textbf{P}$ there exists some $c\in\mathbb{N}^+$ and some $s\in\mathcal{B}^*$ such that the language is computed by $\mathcal{U}^c(s,.)$. For any $d\in\mathbb{N}^+$ and for all sufficiently large $n$, we have $\abs{s}\leq n^d$.
        
        Thus we have some combination of parameters for which, for some constant $\abs{s}$,
        \begin{equation}
            G^d_{\mathcal{U}^c\rightarrow\mathcal{C}}(\epsilon,\delta,n)\geq \frac{n^2/b}{\frac{a_1}{\epsilon}(\log\frac{1}{\delta}+\abs{s}+a_2)}\in\Omega(n^2).
        \end{equation}
    \end{proof}

\underassumption*
    \begin{proposition}{\cite{arora2009computational}.}\label{prop:wrs-to-avrg}
        Let $S:\mathbb{N}\rightarrow\mathbb{N}$ and $f\in\textbf{E}$ such that Boolean circuits that decide $f|_n$ are at least of sizes $S(n)$ for every $n$. Then there exists a function $g\in\textbf{E}$ and a constant $b>0$ such that approximating $g|_n$ under the uniform distribution with accuracy at least $0.99$ requires Boolean circuits of sizes at least $S(n/b)/n^b$ for every sufficiently large $n$.
    \end{proposition}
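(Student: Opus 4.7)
The plan is to combine the worst-case to average-case hardness amplification of Proposition \ref{prop:wrs-to-avrg} with a padding argument. The goal is, for each $\gamma>0$, to construct a family of target functions $(h_N)_N$ that are uniformly computed by a single polynomial-time Turing machine (so have constant description length under $\mathcal{U}^c$ for an appropriate $c$), yet every Boolean circuit approximating $h_N$ under the uniform distribution requires size at least $N^{1+\gamma}$.

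I would begin by applying Proposition \ref{prop:wrs-to-avrg} with $S(n)=2^{\iota n}$: this produces $g\in\textbf{E}$ and a constant $b>0$ such that, for all sufficiently large $n$, every Boolean circuit approximating $g|_n$ to accuracy at least $0.99$ under $U^n$ has size at least $2^{\iota n/b}/n^b$. Fix any target $\gamma>0$ and choose a real parameter $k$ with $0<k<\iota/((1+\gamma)b)$, set $n(N)\equaldef\lfloor (\log_2 N)/k\rfloor$, and define the padded target $h_N\in H^N$ by $h_N(z_1,\ldots,z_N)\equaldef g(z_1,\ldots,z_{n(N)})$, discarding the remaining $N-n(N)$ bits. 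Since $g$ is decided by a fixed Turing machine in time $2^{O(n)}$, the family $h_N$ is decided by one fixed Turing machine in time $2^{O(n(N))}=N^{O(1/k)}$, polynomial in $N$. Choosing $c$ as a sufficiently large integer (depending on $k$), Definition \ref{def:l-res-turing} supplies a single string $s$ with $\mathcal{U}^c(s,\cdot)=h_N$ for all large $N$, so $\abs{h_N}_{\mathcal{U}^c}\leq K$ for a constant $K$ independent of $N$, and in particular the constraint $\abs{h_N}_{\mathcal{U}^c}\leq N^d$ holds eventually for every $d\in\mathbb{N}^+$.

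Next I would transfer the circuit lower bound from $g$ to $h_N$ by a hardwiring argument. Write an $N$-bit input as $(x,y)$ with $|x|=n(N)$. Any circuit $C$ satisfying $\Pr_{U^N}[C(x,y)=g(x)]\geq 0.99$ admits, by averaging over $y$, a fixing $y^\star$ with $\Pr_{U^{n(N)}}[C(x,y^\star)=g(x)]\geq 0.99$; hardwiring $y^\star$ yields a circuit of size no larger than that of $C$. Hence every circuit approximating $h_N$ to accuracy $\geq 0.99$ has size at least $2^{\iota n(N)/b}/n(N)^b\geq N^{\iota/(kb)}/(\log_2 N)^{O(1)}$, which dominates $N^{1+\gamma}$ for $N$ large enough by our choice of $k$. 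Now set $\epsilon\equaldef 0.01$ and any $\delta\in(0,1)$. Proposition \ref{prop:pac-length} applied to the constant bound on $\abs{h_N}_{\mathcal{U}^c}$ gives $m^{\epsilon,\delta}_{\mathcal{U}^c}(h_N,U^N)=O(1)$. The $\mathit{MDL}^\mathcal{C}$ sample-lower-bound argument of Theorems \ref{thm:exp-gain} and \ref{thm:linear-gain} (which invokes Proposition \ref{prop:bc-expr} to force elimination of every circuit below the approximation threshold, at a cost linear in the threshold) then yields $m^{\epsilon,\delta}_\mathcal{C}(h_N,U^N)\in\Omega(N^{1+\gamma})$. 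Taking the ratio delivers $G^d_{\mathcal{U}^c\rightarrow\mathcal{C}}(\epsilon,\delta,N)\in\Omega(N^{1+\gamma})$, as required.

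The main obstacle is the joint calibration of $k$ and $c$, which pull in opposite directions: smaller $k$ amplifies the padding and boosts the transferred circuit lower bound's exponent $\iota/(kb)$ (helpful for clearing the $1+\gamma$ target after absorbing the $n^b$ polylogarithmic slack), but smaller $k$ also inflates the polynomial running-time exponent $a/k$ of $h_N$, forcing $c$ to grow in step. The remedy is to pick $k$ first, as any positive real strictly below $\iota/((1+\gamma)b)$, and then pick $c$ at least the running-time exponent plus the universal-simulation overhead. A secondary concern is making sure the averaging/hardwiring in the circuit reduction preserves the $0.99$ accuracy threshold cleanly (so that the exact constant $0.99$ from Proposition \ref{prop:wrs-to-avrg} is the one actually used in the MDL lower bound); this is routine, since expected accuracy under $U^N$ equals the $y$-averaged accuracy under $U^{n(N)}$, so an optimal fixing of $y$ suffices.
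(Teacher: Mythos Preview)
Your proposal does not address the stated Proposition~\ref{prop:wrs-to-avrg} at all: that proposition is a cited worst-case-to-average-case hardness result from \cite{arora2009computational}, and the paper does not prove it. What you have written is instead a proof of Theorem~\ref{thm:underassumption}, which \emph{uses} Proposition~\ref{prop:wrs-to-avrg} as a black box. If the task was literally to prove the proposition, you would need to reproduce the Impagliazzo--Wigderson/Sudan--Trevisan--Vadhan hardness amplification machinery (local list-decoding, XOR lemma, or the derandomized direct product), none of which appears in your write-up.

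Assuming the intended target was Theorem~\ref{thm:underassumption}, your argument is essentially the same as the paper's, with one structural difference. Both proofs pad: they define the learning target on $N$-bit inputs as $g$ applied to a logarithmic-length prefix, so that the $\textbf{E}$-time algorithm for $g$ becomes polynomial-time on the padded input and hence has constant $\mathcal{U}^c$-description length, while the exponential circuit lower bound for $g$ becomes a polynomial (of arbitrarily high degree, via the padding parameter) lower bound on $N$-bit circuits. The difference is in the learning distribution. The paper takes $\mathcal{P}=D_N$ concentrated on strings whose non-prefix bits are all zero, so the circuit lower bound for $g|_{a\log N}$ transfers to $g'|_N$ under $D_N$ immediately. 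You instead take $\mathcal{P}=U^N$ and recover the lower bound via an averaging/hardwiring step over the irrelevant coordinates. Both are correct; the paper's choice avoids the extra reduction step, while yours has the cosmetic advantage of using the uniform distribution throughout. Your calibration of $k$ versus $c$ matches the paper's informal ``fix $a$ large enough'' step, just made more explicit.
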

    \begin{proof}[Proof of Theorem \ref{thm:underassumption}]
         By Definition \ref{def:G}, the sample efficiency gain is
        \begin{equation}
        \begin{aligned}
            G^d_{\mathcal{U}^c\rightarrow\mathcal{C}}(\epsilon,\delta,n)=& \sup_{f\in H^n,\mathcal{P}\in\Delta(\mathcal{B}^n)} & & \frac{m^{\epsilon,\delta}_\mathcal{C}(f,\mathcal{P})}{m^{\epsilon,\delta}_{\mathcal{U}^c}(f,\mathcal{P})} \\
            & \text{subject to} & & \abs{f}_{\mathcal{U}^c}\leq n^d.
        \end{aligned}
        \end{equation}
        
        Let's define a sequence of learning problems entailing our theorem.
    
        By Proposition \ref{prop:wrs-to-avrg} and the assumption there exists a language $g\in \textbf{E}$ such that $g|_n$ can only be approximated with accuracy $0.99$ under the uniform distribution by Boolean circuits of size at least in $\Omega(2^{\Omega(n)}/n^b)$ for some constant $b>0$.
        
        We define $g'$ to be the application of $g$ on the $a\log n$ first variables of the input, for some constant $a>0$. For every $n$, we define $D_n$ a probability distribution on $\mathcal{B}^n$, where $x_1$ denotes the first $a\log n$ variables of the input and $x_2$ the others, $U(\mathcal{B}^N)$ is the uniform distribution on $\mathcal{B}^N$,
        \begin{equation}
            D_n[x_1,x_2]=\begin{cases}
                U(\mathcal{B}^{a\log n})[x_1]&\text{if }x_2=\textbf{0}^{n-a\log n}\\
                0&\text{else}.
            \end{cases}
        \end{equation}
        Finally, for every $n$, we define the learning problem $(g'|_n,D_n)$.
        
        There exists a polynomial-time Turing machine that decides $g'$. Also the condition $\abs{g'|_n}_{\mathcal{U}^c}\leq n^d$ will always be satisfied for all $n$ large enough since the learning problem's function corresponds to a program/Turing-machine of fixed description-length.
        
        Using the guarantee of Proposition \ref{prop:pac-length}, $m^{\epsilon,\delta}_{\mathcal{U}^c}(g'|_n,\mathcal{P})\leq \frac{a_1}{\epsilon}(\log\frac{1}{\delta}+\abs{g'|_n}_{\mathcal{U}^c}+a_2)$, we deduce that $m^{\epsilon,\delta}_{\mathcal{U}^c}(f,\mathcal{P})$ is upper-bounded by a constant independent of $n$.
        
        For all $n$ sufficiently large, all the Boolean circuits that approximates $g'|_n$ with accuracy at least $0.99$ under $D_n$ have sizes at least in $\Tilde{\Omega}(n^{a\Omega(1)})$.
        
        Thus, by Proposition \ref{prop:bc-expr}, at least $\Omega(n^{a\Omega(1)})$ samples are necessary to solve the learning problem $(g'|_n,D_n)$ by $\mathit{MDL}^{\mathcal{C}}$ for all $n$ sufficiently large.
        
        Consequently, for any $\gamma>0$ there exists $a$ fixed and large enough, such that there exists $\epsilon\in(0,\nicefrac{1}{2}),\,\delta\in(0,1)$ and $c>0$ satisfying $G^d_{\mathcal{U}^c}(\epsilon,\delta,n)\in\Omega(n^{1+\gamma})$.
    \end{proof}

\section{Description-length gains of Turing machines over circuits and neural networks}
    \label{section:descr-length-criterion}
This section highlights the results focusing on description-length instead of PAC-learning gains. The proofs are similar to the proofs of the last section, and sometimes a reference to the proofs of the last section will be made.

The structure of the results' presentation is the same as for the study of PAC-learning gains. 

\begin{theorem}
    There exists a constant $q\in\mathbb{R}^+$ such that for all $n\in\mathbb{N}^+$, 
    \begin{equation}
        \sup_{f\in H^n} \frac{\abs{f}_\mathcal{U}}{\abs{f}_\mathcal{C}} \leq q.
    \end{equation}
\end{theorem}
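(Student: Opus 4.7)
The plan is to follow essentially the same route as the proof of the sample-efficiency analog (the theorem bounding $\Tilde{G}^d_{\mathcal{C}\rightarrow\mathcal{U}}$), using the invariance/main theorem of Kolmogorov complexity in place of the PAC-length guarantee. First I would invoke the main theorem of Kolmogorov complexity (Proposition \ref{prop:KT} referenced earlier) to obtain a fixed constant $K\in\mathbb{N}$, independent of $n$ and of $f\in H^n$, such that
\begin{equation}
    \abs{f}_{\mathcal{U}} \leq \abs{f}_{\mathcal{C}} + K.
\end{equation}
Concretely, this is witnessed by a single Turing machine program that hard-codes the circuit interpreter $\mathcal{C}$ (a fixed object of size $K$): given any circuit description $h$ of length $\abs{f}_{\mathcal{C}}$, concatenating $h$ to this fixed simulator yields a description of $f$ under $\mathcal{U}$ of length at most $\abs{f}_{\mathcal{C}}+K$.

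Next I would exhibit a uniform positive lower bound on $\abs{f}_{\mathcal{C}}$. From the encoding specified in the background section, any valid Boolean-circuit description must at minimum record the input-size and the number of nodes, together with at least one properly typed output node; in particular $\abs{f}_{\mathcal{C}} \geq c_0$ for some absolute constant $c_0>0$, for every $f\in H^n$ and every $n\geq 1$. (One can take $c_0=1$ trivially, since the empty string never encodes a valid circuit.) Combining the two bounds gives
\begin{equation}
    \frac{\abs{f}_{\mathcal{U}}}{\abs{f}_{\mathcal{C}}} \;\leq\; 1 + \frac{K}{\abs{f}_{\mathcal{C}}} \;\leq\; 1 + \frac{K}{c_0},
\end{equation}
uniformly in $n$ and $f$, so taking the supremum proves the claim with $q = 1 + K/c_0$.

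I expect no real obstacle here: the computational-universality of $\mathcal{U}$ does all the work, exactly as it did in the $\Tilde{G}^d_{\mathcal{C}\rightarrow\mathcal{U}}$ argument, and the only subtlety is to rule out degenerate denominators by noting the coding-theoretic lower bound on $\abs{f}_{\mathcal{C}}$. The one point deserving care is to ensure that $K$ truly does not depend on $n$ — this is where one uses that the circuit interpreter $\mathcal{C}$ is itself a single Turing machine, so the simulator embedded in the universal machine has a fixed size, and the input-size $n$ is read off from the circuit encoding rather than being part of the prefix.
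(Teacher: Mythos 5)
Your proof is correct and matches the paper's argument essentially step for step: both invoke the invariance theorem (Proposition~\ref{prop:KT}) to get $\abs{f}_\mathcal{U}\leq\abs{f}_\mathcal{C}+K$ and then divide, using the observation that $\abs{f}_\mathcal{C}\geq 1$ to keep the denominator away from zero. The only cosmetic difference is that the paper immediately takes $c_0=1$ to obtain $q=K+1$, whereas you leave the lower bound abstract.
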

\begin{proof}
    A technical detail is that, by Definition \ref{def:bool-circ}, for any $n$ and function $f\in H^n$, $\abs{f}_\mathcal{C}>0$. So, the denominator is always non-zero.
    
    The Proposition \ref{prop:KT}, affirms that there exists a constant $K$ such that for any $n$ and any function $f\in H^n$, the following holds $\abs{f}_\mathcal{U}\leq\abs{f}_\mathcal{C}+K$.
    
    This fact, with the fact that the denominator is never null, implies
    \begin{equation}
        \sup_{f\in H^n} \frac{\abs{f}_\mathcal{U}}{\abs{f}_\mathcal{C}}\leq \frac{\abs{f}_\mathcal{C}+K}{\abs{f}_\mathcal{C}}\leq K+1.
    \end{equation}
\end{proof}

\begin{theorem}
    We have
    \begin{equation}
        \sup_{f\in H^n} \frac{\abs{f}_\mathcal{C}}{\abs{f}_\mathcal{U}}\in \Omega(2^n).
    \end{equation}
\end{theorem}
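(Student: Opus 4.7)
The plan is to mirror the construction in the proof of Theorem \ref{thm:exp-gain}, but to target a threshold on circuit description-length of order $2^n$ instead of $2^n/n$, using a plain Shannon counting argument rather than the accuracy-refined counting needed there.

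First I would describe a fixed Turing machine $T$ (of constant size, independent of $n$) which, given an input $x$, reads $n=|x|$, enumerates all Boolean functions $f\in H^n$ in a fixed order, and for each one computes $|f|_\mathcal{C}$ by enumerating candidate circuit encodings by increasing length and checking consistency against the truth table of $f$. The machine then outputs $f(x)$ for the first enumerated $f$ whose circuit description-length is at least $2^n-1$. This is a computable procedure (we are not restricting to polynomial time), and the program implementing it has a binary description whose length does not depend on $n$. Let $f_n$ denote the function computed by $T$ for input size $n$; then $|f_n|_\mathcal{U}\le K$ for a constant $K$ independent of $n$ by Proposition \ref{prop:KT} and the definition of $\mathcal{U}$.

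Next I would justify that the search in $T$ always succeeds, by a direct counting argument. The number of functions $f\in H^n$ admissible by the interpreter $\mathcal{C}$ with description-length at most $L$ is at most $\sum_{i=0}^{L}2^i = 2^{L+1}-1$. Taking $L=2^n-2$, this is at most $2^{2^n-1}-1<2^{2^n}=|H^n|$, so at least one function must satisfy $|f|_\mathcal{C}\ge 2^n-1$. Consequently $T$ halts with an output, and by construction $|f_n|_\mathcal{C}\ge 2^n-1$.

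Putting the two bounds together yields
\begin{equation}
    \sup_{f\in H^n}\frac{|f|_\mathcal{C}}{|f|_\mathcal{U}}\;\ge\;\frac{|f_n|_\mathcal{C}}{|f_n|_\mathcal{U}}\;\ge\;\frac{2^n-1}{K}\in\Omega(2^n),
\end{equation}
which is what we want. The only real subtlety, and probably the one worth spelling out in the write-up, is making sure that the enumeration of $H^n$ and of circuit encodings can indeed be carried out by a single Turing machine of constant size regardless of $n$ (i.e.\ that $n$ is computed from the input length and fed into the loops as a parameter); this is routine, exactly as in the analogous step of the proof of Theorem \ref{thm:exp-gain}, and once granted the counting argument closes the theorem immediately.
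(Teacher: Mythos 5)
Your proposal is correct and follows essentially the same route as the paper's proof: a constant-size Turing machine that diagonalizes against short circuit descriptions via a straightforward Shannon counting bound, yielding $\abs{f_n}_{\mathcal{U}} = O(1)$ while forcing $\abs{f_n}_{\mathcal{C}} \ge 2^n - 1$. The only differences are cosmetic (threshold $2^n-1$ vs.\ $2^n$, and you state the count $\sum_{i\le L}2^i = 2^{L+1}-1$ explicitly, whereas the paper's proof has a small slip in that tally that does not affect the conclusion).
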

\begin{proof}
    Let be the Turing machine $p^T$, computing function $p$, and let $p|_n$ denote function $p$ restricted to inputs of size $n$.
    
    \begin{itshape}
    For input $x$ of size $n$:
    \begin{enumerate}
        \item Compute the input size $n$.
        \item Enumerate all binary function in $H^n$ in some pre-defined fixed lexicographic order. For each function, $f$:
        \begin{enumerate}
            \item Compute the minimal description-length necessary to compute the function $f$ for a circuit according to interpreter $\mathcal{C}$:
            \begin{equation}
            \begin{aligned}
                &\min_{h\in \mathcal{B}^*} & & \abs{h} \\
                & \text{subject to} & & \mathcal{C}(h,y)=f(y)\quad\forall y\in\mathcal{B}^n.
            \end{aligned}
            \end{equation}
            \item If $\abs{h}\geq 2^n$ then return $f(x)$.
        \end{enumerate}
    \end{enumerate}
    \end{itshape}
    
    For any input-size $n$, there always exists a function that will satisfy the description-length condition that appears in step \textit{(b)}. We prove it by a counting argument, there are $2^{(2^n)}$ functions in $H^n$ and at most $2^{(2^n-1)}$ functions can be represented by a binary representation of length at most $2^n-1$.
    
    By construction and Definition \ref{def:univ-turing} of the universal Turing machine, the computed function is computable by the Turing machine $p^T$ and, thus, for any $n$, $\abs{p|_n}_\mathcal{U}\leq \alpha$ for some constant $\alpha$.
    
    Also by construction, for any $n$, the computed function is only computed by circuits of description-length at least $2^n$, $\abs{p|_n}_\mathcal{C}\geq 2^n$.
\end{proof}

\begin{theorem}
    For all $c\in\mathbb{N}^+$, we have
    \begin{equation}
        \sup_{f\in H^n} \frac{\abs{f}_{\mathcal{C}}}{\abs{f}_{\mathcal{U}^c}}\in O(n^c\log^2 n).
    \end{equation}
\end{theorem}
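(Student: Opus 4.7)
The plan is to mirror the proof of Theorem \ref{thm:circuit-power} but work directly with description-lengths, bypassing the PAC-learning machinery. Fix $c\in\mathbb{N}^+$ and let $f\in H^n$ be arbitrary with $\abs{f}_{\mathcal{U}^c}<+\infty$ (the only case contributing to the supremum). Let $h^*\in\mathcal{B}^{\abs{f}_{\mathcal{U}^c}}$ be a minimum-length program with $\mathcal{U}^c(h^*,x)=f(x)$ for every $x\in\mathcal{B}^n$.

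First, I would apply Proposition \ref{prop:hardwire} to hardwire $h^*$ into the Turing machine underlying $\mathcal{U}^c$, obtaining a single-argument Turing machine with at most $\rho\abs{f}_{\mathcal{U}^c}$ rules that computes $f$ on inputs of size $n$ within the inherited time bound $t=\beta n^c$, for constants $\rho,\beta>0$ independent of $f$ and $n$. Next, I would invoke the Pippenger--Schnorr simulation (Proposition \ref{prop:bc-tm}) on this hardwired machine to produce a Boolean circuit computing $f$ of size at most $\alpha\rho\abs{f}_{\mathcal{U}^c}\,t\log t$ for some $\alpha>0$ depending only on the alphabet and number of tapes of $\mathcal{U}^c$.

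Third, I would translate this size bound into a description-length bound through Definition \ref{def:bool-circ}, which is the same translation already carried out in Equation \ref{eq:upper-bound-circ-length} of the proof of Theorem \ref{thm:circuit-power}. That yields
\begin{equation*}
    \abs{f}_{\mathcal{C}}\;\leq\; 2\ceil{\log_2 n}+2+\alpha\rho\abs{f}_{\mathcal{U}^c}\,t\log t\bigl(3+\max\{2\ceil{\log_2\alpha\rho\abs{f}_{\mathcal{U}^c}t\log t},\ceil{\log_2 n}\}\bigr).
\end{equation*}
Dividing by $\abs{f}_{\mathcal{U}^c}\geq 1$ and substituting $t=\beta n^c$, the ratio is bounded, up to lower-order terms, by $\alpha\rho\,t\log t\bigl(3+\max\{2\ceil{\log_2\alpha\rho\abs{f}_{\mathcal{U}^c}t\log t},\ceil{\log_2 n}\}\bigr)$, which is exactly the expression controlled in Equation \ref{eq:circ-power-upper-B-on-length}.

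The main obstacle is the $\log\abs{f}_{\mathcal{U}^c}$ factor appearing inside the outer $\max$, since the present statement lacks the restriction $\abs{f}_{\mathcal{U}^c}\leq n^d$ used in Theorem \ref{thm:circuit-power} to absorb it into $O(\log n)$. I would resolve this by splitting into cases on the size of $\abs{f}_{\mathcal{U}^c}$: when $\abs{f}_{\mathcal{U}^c}$ is polynomially bounded in $n$, $\log\abs{f}_{\mathcal{U}^c}=O(\log n)$ and the above expression lies in $O(n^c\log^2 n)$ exactly as in Theorem \ref{thm:circuit-power}; when $\abs{f}_{\mathcal{U}^c}$ is super-polynomial, I would instead bound $\abs{f}_{\mathcal{C}}$ by the description-length of a trivial truth-table circuit, which is $O(2^n n)$, so that the ratio $\abs{f}_{\mathcal{C}}/\abs{f}_{\mathcal{U}^c}$ collapses well below $n^c\log^2 n$. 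Taking the supremum over $f\in H^n$ then yields the stated $O(n^c\log^2 n)$ bound.
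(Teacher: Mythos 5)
Your overall route is exactly the paper's: the paper dispatches this theorem by saying ``the proof is the same as the proof of Theorem \ref{thm:circuit-power} pruned of the PAC-learning related terms,'' and you correctly reproduce that reduction (hardwiring via Proposition \ref{prop:hardwire}, the Pippenger--Schnorr simulation of Proposition \ref{prop:bc-tm}, then the description-length conversion of Definition \ref{def:bool-circ}). You also correctly notice that something has to be said about the missing constraint $\abs{f}_{\mathcal{U}^c}\leq n^d$, which is present in $\tilde{G}$ but absent from the description-length statement; that is a real point that ``pruned of PAC-learning terms'' glosses over.

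However, your proposed fix for that gap is wrong as stated. In your Case 2 you claim that if $\abs{f}_{\mathcal{U}^c}$ is super-polynomial then, since $\abs{f}_\mathcal{C}=O(2^n)$ by a truth-table circuit (Proposition \ref{prop:circ-suff-size} plus Definition \ref{def:bool-circ}), the ratio ``collapses well below $n^c\log^2 n$.'' That does not follow: take $\abs{f}_{\mathcal{U}^c}=n^{\log n}$, which is super-polynomial, yet $2^n/n^{\log n}$ is still exponentially larger than $n^c\log^2 n$. You would need $\abs{f}_{\mathcal{U}^c}$ to be within a polynomial factor of $2^n$ for that argument to close, and ``super-polynomial'' is far too weak. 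The correct and much simpler observation, which the paper supplies (and explicitly invokes right after Theorem \ref{thm:dl-if-superlinear}), is Proposition \ref{prop:tur-lim-lim}: whenever $\abs{f}_{\mathcal{U}^c}<+\infty$ one automatically has $\abs{f}_{\mathcal{U}^c}\leq\beta n^c$, because $\mathcal{U}^c$ runs for at most $\beta n^c$ steps and hence cannot read a longer program. This makes your Case 2 vacuous, removes the need for a case split entirely, and absorbs $\log\abs{f}_{\mathcal{U}^c}$ into $O(\log n)$ just as the bound $n^d$ did in Theorem \ref{thm:circuit-power}. Replace your Case 2 reasoning with that citation and the proof is clean.
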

\begin{proof}
    The proof is the same as the proof of Theorem \ref{thm:circuit-power} pruned of the PAC-learning related terms.
\end{proof}

\begin{theorem}
    For all $1<c\in\mathbb{N}^+$, we have
    \begin{equation}
        \sup_{f\in H^n} \frac{\abs{f}_\mathcal{C}}{\abs{f}_{\mathcal{U}^c}}\in\Omega(n\log n).
    \end{equation}
\end{theorem}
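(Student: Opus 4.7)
The plan is to reuse the parity construction from the proof of Theorem~\ref{thm:linear-gain} but to bound description lengths directly rather than going through PAC-learning. Let $f = \oplus|_n$ denote the parity function on $n$ bits. Parity is decided in linear time by a fixed multi-tape Turing machine, so by Definition~\ref{def:l-res-turing} there exists, for any $c>1$, a fixed program $h\in\mathcal{B}^*$ of constant length with $\mathcal{U}^c(h,\cdot)=\oplus|_n$ for all sufficiently large $n$. This yields $\abs{\oplus|_n}_{\mathcal{U}^c}\leq \alpha$ for some constant $\alpha>0$ independent of $n$, taking care of the denominator.

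Next I would establish the lower bound $\abs{\oplus|_n}_\mathcal{C}\in\Omega(n\log n)$. The core observation is that parity is sensitive to every input position: flipping any one coordinate flips the output. Consequently, any Boolean circuit that computes $\oplus|_n$ must contain, for each $i\in\{1,\dots,n\}$, at least one input node associated with variable $x_i$, since otherwise the circuit's output would be independent of $x_i$, contradicting sensitivity. Hence every such circuit has at least $n$ input nodes. Under the encoding specified in Definition~\ref{def:bool-circ}, the description of each input node must record which of the $n$ input variables it reads, costing $\Omega(\log n)$ bits per input node (one cannot name $n$ distinct variables using $o(\log n)$ bits per name on average, by an information-theoretic argument). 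Summing over the $n$ required input nodes gives $\abs{\oplus|_n}_\mathcal{C}\in\Omega(n\log n)$.

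Combining these two estimates,
\begin{equation*}
    \sup_{f\in H^n}\frac{\abs{f}_\mathcal{C}}{\abs{f}_{\mathcal{U}^c}} \geq \frac{\abs{\oplus|_n}_\mathcal{C}}{\abs{\oplus|_n}_{\mathcal{U}^c}}\in \Omega(n\log n),
\end{equation*}
which is the desired conclusion. The only delicate ingredient is the per-input-node lower bound of $\Omega(\log n)$ bits; this I would verify directly against the concrete encoding described in the appendices, where the variable assignment of an input node is one of $n$ possibilities and must be stored explicitly. Everything else follows by repeating the small-Turing-machine construction used for parity in the proof of Theorem~\ref{thm:linear-gain}.
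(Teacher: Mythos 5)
Your proposal is correct and follows essentially the same route as the paper's proof: pick parity, observe that a fixed linear-time Turing machine gives $\abs{\oplus|_n}_{\mathcal{U}^c} = O(1)$, observe that sensitivity to all coordinates forces any circuit for $\oplus|_n$ to have at least $n$ nodes, and then invoke the per-node cost of $\Omega(\log n)$ bits in the encoding of Definition~\ref{def:bool-circ} to conclude $\abs{\oplus|_n}_\mathcal{C} \in \Omega(n\log n)$. The paper is slightly terser (it just says "at least $n$ nodes" and appeals to the size-to-description-length formula), while you spell out why each input node costs $\Omega(\log n)$ bits; that elaboration is sound and matches the concrete encoding, which allots $2+\max\{2\ceil{\log_2|C|},\ceil{\log_2 n}\}$ bits per vertex.
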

\begin{proof}
    Consider the parity function $\oplus(x)=\sum_ix_i\mod 2$. For any $n$, let $\oplus|_n$ denote the function $\oplus$ restricted to inputs of size $n$.
    
    The parity function can be computed by a Turing machine in linear time, and thus, for some constant $\alpha$ and all $n$ sufficiently large, $\abs{\oplus|_n}_{\mathcal{U}^c}\leq \alpha$ according to Definition \ref{def:l-res-turing}.
    
    Moreover, for any $n$, if a circuit compute $\oplus|_n$ then it must have at least $n$ nodes to depend on all the input's variables. Then, by Definition \ref{def:bool-circ} linking the circuit's size to its description-length, $\abs{\oplus|_n}_\mathcal{C}\in \Omega(n\log n)$.
\end{proof}

\begin{theorem}\label{thm:dl-if-superlinear}
    If there exists $c,d\in\mathbb{N}^+$, and $\gamma>0$ such that
    \begin{equation}
    \begin{aligned}
        \left[\sup_{f\in H^n} \frac{\abs{f}_\mathcal{C}}{\abs{f}_{\mathcal{U}^c}}\quad \text{such that}\quad \abs{f}_{\mathcal{U}^c}\leq n^d\right] \notin O(n^{1+\gamma})
    \end{aligned}
    \end{equation}
    then there exists a language in $\textbf{P}$ not computable by a sequence of Boolean circuits whose sizes are in $O(n^{1+\tau})$ for some $\tau>0$.
\end{theorem}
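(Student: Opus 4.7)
The plan is to mirror the proof of Theorem \ref{thm:if-true}, skipping its opening step of translating from PAC-learning sample complexity to description length since we are already given a description-length gap. From the hypothesis, for every $n$ the supremum is attained at some $f_n \in H^n$ with $|f_n|_{\mathcal{U}^c} \leq n^d$. Since the ratio $|f_n|_\mathcal{C}/|f_n|_{\mathcal{U}^c}$ is not in $O(n^{1+\gamma})$, I can extract an infinite subsequence $N \subseteq \mathbb{N}$ along which $|f_n|_\mathcal{C} \geq b\,|f_n|_{\mathcal{U}^c} n^{1+\gamma/2}$ for some $b > 0$ and all sufficiently large $n \in N$. Note that restricting to $N$ automatically excludes any $f_n$ with $|f_n|_{\mathcal{U}^c} = +\infty$.

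Next, I would define the polynomial-time computable Boolean function $I(\langle x_1, x_2\rangle) = \mathcal{U}^c(x_1, x_2)$ exactly as in the proof of Theorem \ref{thm:if-true}, using a self-delimiting pairing $\langle\cdot,\cdot\rangle$ that is polynomially invertible; this places $I$ in $\mathbf{P}$. For each $n \in N$, let $p_n \in \mathcal{B}^{|f_n|_{\mathcal{U}^c}}$ satisfy $\mathcal{U}^c(p_n, \cdot) = f_n$. Hardwiring $p_n$ into the coordinates of $I$ restricted to inputs of size $2n+1+|f_n|_{\mathcal{U}^c}$ would produce a circuit for $f_n$, and since hardwiring only shrinks circuits (hence shortens descriptions under $\mathcal{C}$, by Definition \ref{def:bool-circ}), we obtain
\begin{equation}
\bigl|I|_{2n+1+|f_n|_{\mathcal{U}^c}}\bigr|_\mathcal{C} \geq |f_n|_\mathcal{C} \geq b\,|f_n|_{\mathcal{U}^c}\, n^{1+\gamma/2}.
\end{equation}

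Finally, to conclude that $I$ has superlinear circuit complexity in its own input length $m = 2n+1+|f_n|_{\mathcal{U}^c}$, I would deploy the same pigeonhole partition used in Theorem \ref{thm:if-true}. Partitioning $N$ by the power at which $|f_n|_{\mathcal{U}^c}$ sits in $[1, n^d]$ (via the buckets $S_i$), one infinite class must exist; on that class the exponent linking $m$ to the lower bound $b\,|f_n|_{\mathcal{U}^c} n^{1+\gamma/2}$ is strictly greater than one, yielding $|I|_m|_\mathcal{C} \in \Omega(m^{1+\iota})$ for some $\iota > 0$. Since the description length of a circuit is $\Theta(S \log S)$ in its size $S$ under $\mathcal{C}$, this implies a circuit-size lower bound of $\Omega(m^{1+\tau})$ for some $0 < \tau < \iota$, contradicting $O(n^{1+\tau})$-size circuits for $I$. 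The main obstacle is the pigeonhole step: the scaling of $|f_n|_{\mathcal{U}^c}$ with $n$ is \emph{a priori} unconstrained inside $[1, n^d]$, so one has to separately verify the sub-linear, linear, and super-linear regimes to guarantee that the lower bound on $|I|_m|_\mathcal{C}$ really is superlinear in $m$, as done by the three cases in the proof of Theorem \ref{thm:if-true}.
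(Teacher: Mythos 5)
Your proposal is correct and matches the paper's proof exactly: the paper itself proves this theorem by pointing to the proof of Theorem \ref{thm:if-true} starting from Equation \ref{eq:superlinear-circ-length}, which is precisely the point at which you begin (defining $I$, hardwiring $p_n$, and running the pigeonhole case analysis on the growth of $\abs{f_n}_{\mathcal{U}^c}$). You have correctly identified that the PAC-to-description-length reduction step of Theorem \ref{thm:if-true} is unnecessary here because the hypothesis already supplies the description-length gap.
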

\begin{proof}
    Take the proof of Theorem \ref{thm:if-true} beginning in Equation \ref{eq:superlinear-circ-length}.
\end{proof}

We note that the upper-bound $n^d$ on the description-length $\abs{f}_{\mathcal{U}^c}$ is not necessary for Theorem \ref{thm:dl-if-superlinear} to hold. Using Proposition \ref{prop:tur-lim-lim}, for any function relevant in the proof of the theorem, an upper-bound on the function description-length with $\mathcal{U}^c$ polynomial in the input-size holds. This fact can replace the bound in $n^d$ in the proof of Theorem \ref{thm:if-true}.

\begin{theorem}\label{thm:descr-length-if-false}
    If for all $c\in\mathbb{N}^+$, and all $\gamma>0$,
    \begin{equation}
        \sup_{f\in H^n} \frac{\abs{f}_\mathcal{C}}{\abs{f}_{\mathcal{U}^c}}\in O(n^{1+\gamma})
    \end{equation}
    then $\textbf{P}\neq\textbf{NP}$.
\end{theorem}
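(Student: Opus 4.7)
The plan is to prove the contrapositive: assuming $\textbf{P}=\textbf{NP}$, I will exhibit a sequence of Boolean functions whose description-length ratio $\abs{f_n}_\mathcal{C}/\abs{f_n}_{\mathcal{U}^c}$ grows faster than $n^{1+\gamma}$ for some fixed $\gamma>0$ and some fixed $c$, contradicting the hypothesis. This is the direct analogue of Theorem \ref{thm:if-false} but noticeably simpler, since the conclusion concerns description-lengths rather than PAC-learning sample complexities, so the entire machinery linking circuit size to the $m^{\epsilon,\delta}_{\mathcal{C}}$ quantity is bypassed.

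First, I would invoke the collapse of the polynomial hierarchy: $\textbf{P}=\textbf{NP}$ forces $\textbf{PH}=\textbf{P}$, and in particular $\textbf{P}^{\sum_2^p}=\textbf{P}$. Then, for an arbitrary target exponent $k_1\in\mathbb{N}^+$ and (say) $k_2=1$, Proposition \ref{th:avr-hard-PH} supplies a language $L\in\textbf{P}^{\sum_2^p}$, which under the assumption lies in $\textbf{P}$, such that no Boolean circuit of size at most $n^{k_1}$ agrees with $L|_n$ on more than a $1/2+1/n$ fraction of inputs. In particular, exact computation of $L|_n$ (accuracy one) requires a circuit of size strictly greater than $n^{k_1}$ for all sufficiently large $n$.

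Next, I would translate this size lower bound into a description-length gap. On the Turing-machine side, because $L\in\textbf{P}$, Definition \ref{def:l-res-turing} yields a fixed $c$ and a fixed program $s$ with $\mathcal{U}^c(s,\cdot)$ deciding $L$, so $\abs{L|_n}_{\mathcal{U}^c}\leq\abs{s}$, a constant independent of $n$. On the circuit side, because the encoding of Definition \ref{def:bool-circ} is monotone increasing in the circuit size, a size lower bound of $n^{k_1}$ immediately gives $\abs{L|_n}_\mathcal{C}\in\Omega(n^{k_1})$. Hence for this particular $c$ we obtain $\sup_{f\in H^n}\abs{f}_\mathcal{C}/\abs{f}_{\mathcal{U}^c}\in\Omega(n^{k_1})$.

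Since $k_1$ was arbitrary, picking any $k_1>1+\gamma$ for the $\gamma$ in the hypothesis yields the desired contradiction with the assumed $O(n^{1+\gamma})$ bound, completing the contrapositive. The argument has no real obstacle: it is essentially the proof of Theorem \ref{thm:if-false} stripped of the sample-complexity layer, so the only delicate points are quoting Proposition \ref{th:avr-hard-PH} with the appropriate $k_1,k_2$ and observing that the circuit-encoding is monotone enough for a size lower bound to transfer directly to a description-length lower bound.
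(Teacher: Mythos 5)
Your proof is correct, but it takes a genuinely different route than the paper's. You import Proposition \ref{th:avr-hard-PH}, the \emph{average-case} circuit lower bound against $\textbf{P}^{\sum_2^p}$, and then downgrade it to a worst-case statement (accuracy one requires size $>n^{k_1}$). The paper instead invokes Proposition \ref{prop:kannan}, Kannan's theorem, which gives directly a language in $\sum_2^p$ not decidable by $O(n^k)$-size circuits for any fixed $k$ --- a \emph{worst-case} lower bound, which is exactly what the description-length gap requires. Under $\textbf{P}=\textbf{NP}$ the collapse places this language in $\textbf{P}$, and the rest (constant $\abs{f|_n}_{\mathcal{U}^c}$, and description-length $\notin O(n^2)$ via monotonicity of the encoding in Definition \ref{def:bool-circ}) is identical to what you do. Your approach is essentially the proof of Theorem \ref{thm:if-false} with the sample-complexity layer stripped off, which is a fine and even clarifying way to see the implication; the trade-off is that average-case hardness is strictly more than the statement needs, whereas Kannan's theorem is the minimal black box and avoids the (mild) step of turning an average-case bound into a worst-case one. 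Both proofs fix the conclusion by choosing the circuit-size exponent ($k_1$ in yours, $k=2$ in the paper's) large enough relative to $1+\gamma$; your remark that $k_1$ is arbitrary makes this slightly more flexible, though the paper's fixed $k=2$ already suffices since one only needs to violate the $O(n^{1+\gamma})$ bound for a single pair $(c,\gamma)$.
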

\begin{proposition}{\citet{kannan1982circuit}.}\label{prop:kannan}
    For any nonnegative integer $k$, there exists a language $L\in\sum^p_2$ such that $L$ is not computable by a sequence of circuits whose sizes are in $O(n^k)$, where $n$ is the input-size.
\end{proposition}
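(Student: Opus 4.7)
The plan is to prove Kannan's classical theorem using the two-case argument based on whether $\mathbf{NP} \subseteq \mathbf{P/poly}$, combined with a counting argument and the Karp-Lipton theorem.

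First, I would establish by a standard counting argument that for every $k$ and every sufficiently large $n$, most Boolean functions on $n$ variables require circuits of size strictly greater than $n^k$: there are $2^{2^n}$ such functions, whereas the number of circuits of size $\leq n^k$ is at most $2^{O(n^k \log n)}$, which is much smaller than $2^{2^n}$. Consequently, the lexicographically first truth table $T_n \in \{0,1\}^{2^n}$ of a Boolean function on $n$ variables requiring circuits of size greater than $n^k$ is well defined, and it defines a language $L^*$ by letting $L^*$ contain a string $x$ of length $n$ iff the bit of $T_n$ indexed by $x$ is $1$.

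Next, I would place $L^*$ in the polynomial hierarchy by writing: $x \in L^*$ iff there exists a truth table $T$ of length $2^n$ such that (i) $T$ is not computed by any circuit of size $\leq n^k$, (ii) every lexicographically smaller truth table $T' < T$ \emph{is} computed by some circuit of size $\leq n^k$, and (iii) the bit of $T$ indexed by $x$ is $1$. Since the predicate "there is no circuit of size $\leq n^k$ computing $T$" is a $\Pi_1^p$ statement (universal over circuits) and the inner condition in (ii) is $\Sigma_1^p$, the whole description is a $\Sigma_3^p$ (or at worst $\Sigma_4^p$) sentence, so $L^* \in \Sigma_4^p$. By construction $L^*$ has no polynomial-size circuits of size $n^k$.

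Now comes the case split, which is the heart of the argument. If $\mathbf{NP} \not\subseteq \mathbf{P/poly}$, then SAT itself is a language in $\mathbf{NP} \subseteq \Sigma_2^p$ that is not computable by any sequence of polynomial-size circuits, in particular not by circuits of size $O(n^k)$, and we are done. Otherwise $\mathbf{NP} \subseteq \mathbf{P/poly}$, and the Karp-Lipton theorem gives the collapse $\mathbf{PH} = \Sigma_2^p$; in this case $L^* \in \Sigma_4^p = \Sigma_2^p$, and $L^*$ is not computable by circuits of size $O(n^k)$ by construction.

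The main obstacle is the careful bookkeeping in the quantifier description of $L^*$ to confirm it lies in $\Sigma_4^p$ (or some fixed level of $\mathbf{PH}$ that collapses to $\Sigma_2^p$ under Karp-Lipton), since the "lexicographically smallest" requirement introduces an extra alternation and one must check that the circuit-size witness can itself be encoded and verified in polynomial time. Everything else—the counting bound, the $\mathbf{NP} \subseteq \mathbf{P/poly}$ case analysis, and the Karp-Lipton collapse—is standard and can be invoked cleanly.
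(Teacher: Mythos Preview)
The paper does not prove this proposition; it is simply cited from Kannan (1982) and used as a black box in the proof of Theorem~\ref{thm:descr-length-if-false}. So there is no paper proof to compare against, but your sketch is worth commenting on.

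Your overall strategy---diagonalize to get a hard language somewhere in $\mathbf{PH}$, then split on whether $\mathbf{NP}\subseteq\mathbf{P/poly}$, using SAT in one case and the Karp--Lipton collapse in the other---is exactly Kannan's argument. However, your execution of the diagonalization has a real gap. You write ``$x\in L^*$ iff there exists a truth table $T$ of length $2^n$ such that\ldots'', but $|T|=2^n$ is exponential in $|x|=n$. Quantifiers in $\Sigma_k^p$ range over strings of length polynomial in the input, so this description does not place $L^*$ at any finite level of $\mathbf{PH}$; as written it only shows $L^*\in\mathbf{NEXP}$ with oracle access, which is useless for the collapse step. The ``careful bookkeeping'' you flag as the main obstacle is in fact a structural problem, not just bookkeeping.

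The standard repair is to diagonalize on a polynomial-size slice of inputs. Fix $m=n^{k+1}$ (any $m$ exceeding $c\,n^k\log n$ works), look only at the first $m$ strings of length $n$ in lexicographic order, and let $L^*$ at length $n$ be the lex-first pattern on those $m$ points not matched by any size-$\le n^k$ circuit, extended by $0$ elsewhere. Such a pattern exists by counting: there are at most $2^{O(n^k\log n)}<2^m$ circuits of size $\le n^k$. Now the witness $T$ has length $m=\mathrm{poly}(n)$, your quantifier analysis goes through, $L^*$ lands at a fixed level of $\mathbf{PH}$, and the Karp--Lipton case yields $L^*\in\Sigma_2^p$ as required.
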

\begin{proof}[Proof of Theorem \ref{thm:descr-length-if-false}]
    By contraposition, suppose $\textbf{P}=\textbf{NP}$; then the polynomial hierarchy collapses and $\textbf{P}=\textbf{PH}=\sum^p_2$.
    
    By Proposition \ref{prop:kannan}, with $k=2$, $\textbf{P}$ has a language not computable by any circuit sequence whose sizes are in $O(n^2)$. Denote the function representing this language by $f$, and $f|_n$ its restriction to size $n$ inputs.
    
    Then, by Definition \ref{def:bool-circ} of $\mathcal{C}$, a sequence of circuits that computes the function does not have their description-length in $O(n^2)$. 
    
    Moreover, by Definition \ref{def:l-res-turing} of $\mathcal{U}^c$, there exists constants $c$ and $\alpha$ such that, for all $n$ sufficiently large, $\abs{f|_n}_{\mathcal{U}^c}\leq \alpha$ since there is a fixed Turing machine able to compute the function for all $n$.
\end{proof}

\begin{theorem}\label{thm:descr-length-underassumption}
    If there exist a language $g\in\textbf{E}$ such that $g|_n$ can only be computed by circuits of sizes at least $2^{\epsilon n}$ for some $\epsilon>0$; then for all $\gamma>0$ there exists $c\in\mathbb{N}^+$ such that
    \begin{equation}
        \sup_{f\in H^n} \frac{\abs{f}_\mathcal{C}}{\abs{f}_{\mathcal{U}^c}}\in\Omega(n^{1+\gamma}).
    \end{equation}
\end{theorem}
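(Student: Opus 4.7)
The plan is to follow the template of the proof of Theorem~\ref{thm:underassumption}, with the simplification that description-length concerns exact computation, so Proposition~\ref{prop:wrs-to-avrg} (worst-case to average-case) is not required; the hypothesized worst-case circuit lower bound on $g$ can be exploited directly. For a parameter $a>0$ to be fixed at the end, I will introduce the function $g'\in H^n$ defined by $g'(x_1,\ldots,x_n) = g(x_1,\ldots,x_{\ceil{a\log n}})$, which ignores all but the first $\ceil{a\log n}$ input bits before applying $g$.

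The first step is to bound $\abs{g'|_n}_{\mathcal{U}^c}$ by a constant. Since $g\in\textbf{E}$ runs in time $2^{O(n)}$, evaluating $g$ on an input of length $\ceil{a\log n}$ takes time $n^{O(a)}$, so $g'\in\textbf{P}$. By Definition~\ref{def:l-res-turing} of $\mathcal{U}^c$, for a suitable $c\in\mathbb{N}^+$ depending on $a$ there is a fixed program $s$ with $\mathcal{U}^c(s,\cdot)=g'$ on every input size, giving $\abs{g'|_n}_{\mathcal{U}^c}\leq\abs{s}$ for all sufficiently large $n$.

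The second step is to lower bound $\abs{g'|_n}_{\mathcal{C}}$. For any Boolean circuit $C$ computing $g'|_n$, hardwiring the variables $x_{\ceil{a\log n}+1},\ldots,x_n$ to any fixed constants produces a circuit of size at most that of $C$ which computes $g|_{\ceil{a\log n}}$. The hypothesis then forces $|C|\geq 2^{\epsilon\ceil{a\log n}}\geq n^{\epsilon a}$. Since the encoding of Definition~\ref{def:bool-circ} spells out each node individually, this yields $\abs{g'|_n}_{\mathcal{C}}\in\Omega(n^{\epsilon a})$. Given any $\gamma>0$, choosing $a$ so that $\epsilon a\geq 1+\gamma$ and taking the associated $c$ gives
\begin{equation*}
\sup_{f\in H^n}\frac{\abs{f}_{\mathcal{C}}}{\abs{f}_{\mathcal{U}^c}}\;\geq\;\frac{\abs{g'|_n}_{\mathcal{C}}}{\abs{g'|_n}_{\mathcal{U}^c}}\;\in\;\Omega(n^{1+\gamma}).
\end{equation*}

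The main obstacle I anticipate is bookkeeping around two standard monotonicity claims under the encoding used: (i) hardwiring input variables to constants does not increase a circuit's size, and (ii) a size lower bound $S$ implies an $\Omega(S)$ description-length lower bound via Definition~\ref{def:bool-circ}. Both are routine, and are the only non-trivial ingredients beyond the hypothesis. A minor subtlety is that $c$ depends on $\gamma$ through $a$, which the theorem statement explicitly permits, so no additional uniformity argument is needed.
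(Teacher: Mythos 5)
Your proof is correct and takes essentially the same approach as the paper's: the paper also defines $f_n$ as $g$ applied to the first $a\log n$ input bits, invokes the circuit lower bound to get $\abs{f_n}_\mathcal{C}\in\Omega(n^{a\Omega(1)})$, notes that $f_n$ is decidable in polynomial time so $\abs{f_n}_{\mathcal{U}^c}\in O(1)$ for large enough $c$, and then fixes $a$ large enough. You are also right that the worst-case-to-average-case step (Proposition~\ref{prop:wrs-to-avrg}) is unnecessary here, exactly as in the paper's own proof of this description-length variant; your writeup simply spells out the hardwiring and encoding-monotonicity details that the paper leaves implicit.
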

\begin{proof}
    For any $n$ pose $f_n$ to be $g$ applied to the first $a\log n$ variables of the input, for some $a>0$. By construction $\abs{f_n}_\mathcal{C}\in\Omega(n^{a\Omega(1)})$, and since there exists a polynomial-time Turing machine deciding $(f_n)$, $\abs{f_n}_{\mathcal{U}^c}\in O(1)$, for $c$ large enough.
    
    Fix $a$ large enough to complete the proof.
\end{proof}

\section{Interpreters}\label{section:interpreters}
    \subsection{Universal Turing Machine}
        We restrict our Turing machines to binary-valued outputs in the whole work. 
        
        The definitions of this sub-section are given with the number of working-tape let as a variable in some cases. The results of this research are correct for any value of this parameter.
        
        \begin{definition}{\emph{Turing machines}.}\label{def:turing-mach}
            We first define one-tape Turing machines before generalizing the definition to multi-tape Turing machines.
            
            A binary-valued one-tape Turing machine is defined by
            \begin{itemize}
                \item $Q$ a finite set of states;
                \item $A=\{0,1,b\}$ the Turing machine's alphabet;
                \item $q_0\in Q$ the initial state in which the Turing machine begins;
                \item $\{\mathit{accept},\mathit{reject}\}=F\subset Q$ the set of the two final states which determine the output of the Turing machine: if the Turing machine stops in the $\mathit{accept}$ state then the output is $1$, else if it stops in the final state $\mathit{reject}$ then the output is $0$, else it is $\bot$;
                \item a mapping from $((Q\backslash F)\times A)$ to $(Q\times S)$ defining the transition of the Turing machine, where $S=A\cup\{L,R\}$ which correspond to either writing the element of $A$ to the current head place or move the head to the $L$:left or $R$:right.
            \end{itemize}
            The partial computable function implemented by the Turing machine is defined by setting the binary input on the unique tape in contiguous cells (the $b$ symbol being assigned to the other cells); positioning the head on the first cell; set the Turing machine in the $q_0$ state; then to apply recursively the mapping of the Turing that determines the change of states, writing on the tape, and head movements; finally the output is obtained either when a final state in $F$ is obtained, or else by $\bot$.
            
            For $k_1,k_2\in\mathbb{N}^+$, we define $k_1$-input-tape, $k_2$-working-tape Turing machines. These machines have $k_1$ binary-inputs that are placed on the $k_1$ inputs' tapes. There is one head by tape and these tapes are read-only. There are also $k_2$-working-tapes with one head by tape, they are blank at the start, and are read and write. 
            
            These machines are determined in a similar way to one-tape Turing machine: the elements $Q$, $A$, $q_0$, $F$, and $S$ are defined in the same way. The mapping is adapted, the mapping goes from $((Q\backslash F)\times A^{k_1+k_2})$ to $(Q\times \{L,R\}^{k_1}\times S^{k_2})$, with the natural interpretation.
            
            The partial computable function implemented by such Turing machines follows from a natural generalization of one-tape Turing machines.
            
        \end{definition}
        
        \begin{definition}{\emph{Turing machines encoding $E(.)$}, \cite{li2019introduction}.}\label{def:enc-turing}
            Following Definition \ref{def:turing-mach}, any Turing machine, $T$, can be fully described by a set of states $Q$, the initial state $q_0$, and a mapping from $((Q\backslash F)\times A)$ to $(Q\times S)$.
            
            The mapping and $T$ can be described by a list of quadruples $[(p_i,t_i,q_i,s_i)]_{i=1}^r$ where $r$ is the number of rules and for all $i$, $p_i,q_i\in Q,\,t_i\in A,\,s_i\in S$. Each element can be identified with $s=\ceil{\log(\abs{Q}+5)}$ bits. Be $e(.):Q\cup S\rightarrow\mathcal{B}^s$ this encoding. By convention this encoding will satisfy the following constraint, the states $q_0$, $\mathit{accept}$, and $\mathit{reject}$ will be encoded to predefined arbitrary values (the three first elements in the Boolean lexicographic order of the output for example).
            
            We define the encoding of $T$ to be
            \begin{equation}
                E(T) = \underbrace{0\ldots0}_{s}1\underbrace{0\ldots0}_{r}1[e(p_i)e(t_i)e(s_i)e(q_i)]^r_{i=1}.
            \end{equation}
            
            This encoding completely defines the Turing machine $T$.
            
            The encoding is prefix-free: no encoding is the prefix of another.
            
            To define an encoding for multiple-tape Turing machines, generalize the encoding to their mappings defined in Definition \ref{def:turing-mach}.
        \end{definition}
        
        \begin{definition}{\emph{Universal Turing machine $\mathcal{U}$}.}\label{def:univ-turing}
            For any $k\in\mathbb{N}^+$, the interpreter $\mathcal{U}:\mathcal{B}^*\times \mathcal{B}^*\rightarrow \mathcal{B}\cup\{\bot\}$ computes $T(u,x)$ on input $([E(T),u],x)$, where $E$ follows Definition \ref{def:enc-turing} for two-input-tape and $k-$working-tape Turing machines. If the input has not a form that encodes a Turing machine then $\bot$ is the output.
            
            Note that the decomposition of the first argument in $E(T)$ and $u$ is well defined since the encoding $E$ is prefix-free.
        \end{definition}
        
        \begin{definition}{\emph{Polynomial-time universal Turing machines $\mathcal{U}^c$}.}\label{def:l-res-turing}
    
            For any $k,c\in\mathbb{N}^+$, we define an interpreter $\mathcal{U}^c:\mathcal{B}^*\times \mathcal{B}^*\rightarrow \mathcal{B}\cup\{\bot\}$. It is a $2$-input-tape, $3$-working-tape Turing machine.
            
            On input $([E(T),u],x\in\mathcal{B}^*)$, for $T$ a $2$-input-tape $k$-working tape Turing machine, $E$ the encoding in Definition \ref{def:enc-turing}, and $u\in\mathcal{B}^*$; the following operations are performed:
            \begin{enumerate}
                \item On the third working tape, the interpreter computes the input-size, $n$, of the second input, $x$.
                \item Still on the third work-tape, it computes $n^c$.
                \item The interpreter computes in at most $n^c$ steps that the form of the first input corresponds to the encoding of a Turing machine. If it does not correspond to a Turing machine or if the number of steps limit is reached, it outputs $\bot$.
                \item Then, it computes a simulation of the behavior of the Turing machine $T$ on input $(u,x)$ with the two first work-tapes using the construction of \cite{hennie1966two}, whose result is given in Proposition \ref{prop:efficient-turing}. Simultaneously, the interpreter computes the number of steps dedicated to the simulation on the third work-tape. (Note that it is well the number of steps dedicated to the simulation and not the number of simulated steps that are counted.)
                \item In the computed simulation if a final state in $F$ is reached then enters this state for the universal Turing machine. If the limit of computation for the simulation, $n^c$, is attained without entering a final state of $F$ in the computed simulation then enter the state $\mathit{reject}$.
            \end{enumerate}
            
            For all these operations, the total number of steps for the first input, $[E(T),u]$, fixed can be made in $\beta n^c$, for some fixed $\beta>0$.%\footnote{Note for the reviewer: there is a difference of a constant with what is being said in the main part of the paper at lines 134-135. The lines 134-135 will be corrected in the next allowed submission.}
            
            Using Proposition \ref{prop:efficient-turing}, for any $\delta>0$, any Turing machine with computational complexity in $O(n^{c-\gamma})$ can be simulated, for some $h\in\mathcal{B}^*$ and all $n$ sufficiently large, by $\mathcal{U}^c(h,.)$.
        \end{definition}
        
        \begin{proposition}{Efficient universal Turing machines, \cite{hennie1966two}, \cite{arora2009computational}.}\label{prop:efficient-turing}
            There exists a universal Turing machine which, for any Turing machine $T$, on inputs $E(T)$ and $x$ computes $T(x)$.
            
            Furthermore, for some $\alpha>0$, if the Turing machine $T$ on input $x$ stops in $t$ steps then the universal Turing machine stops in $\alpha t\log t$.
        \end{proposition}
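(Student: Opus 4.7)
The plan is to exhibit a fixed few-tape universal machine $\mathcal{U}$ that reads $E(T)$ and $x$ from its input tape and carries out a step-by-step simulation of $T$, then to optimize the per-step overhead so that it is amortized $O(\log t)$ rather than the $\Theta(t)$ of a naive implementation, yielding the desired $O(t \log t)$ total.

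First, I would sketch the naive layout. The machine $\mathcal{U}$ keeps the $k$ simulated tapes of $T$ on one work tape, interleaved as $k$ tracks sharing the same cell index, with markers indicating the current head positions. A simulated step is performed by reading the symbols under the $k$ markers, consulting the transition table stored on the input tape (a scan costing $O(|E(T)|)$), writing the new symbols, and updating the markers. Since the markers of $T$ may drift arbitrarily far apart, updating them by literal head movements would cost $\Theta(t)$ per simulated step in the worst case, giving only an $O(t^2)$ bound overall.

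Next, I would apply the central Hennie--Stearns idea: instead of moving the markers, fix each simulated head of $T$ at a designated \emph{home} position on the work tape and, when $T$ moves one of its heads, shift the corresponding track's contents in the opposite direction. To make these shifts cheap in amortized terms, organize each side of the home position into consecutive blocks $B_1, B_2, B_3, \ldots$, where $B_i$ has capacity $\Theta(2^i)$, and maintain the invariant that each block is empty, half-full, or full. A shift then propagates outward from $B_1$: when $B_i$ overflows or underflows, its contents are rebalanced with $B_{i+1}$ at cost $O(2^i)$.

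The main obstacle is the amortized analysis. I would introduce a potential function measuring how far each block is from balanced and argue that a level-$i$ rebalancing occurs at most once per $\Theta(2^i)$ simulated steps, so summing over $i \leq \log_2 t$ yields an amortized cost of $O(\log t)$ per simulated step. Folding in the $O(|E(T)|)$ factor for transition-table lookup --- which is a constant once $T$ is fixed --- into $\alpha$ gives the bound $\alpha t \log t$. To finish, $\mathcal{U}$ verifies that its first argument is a well-formed encoding $E(T)$, initializes the simulated tracks with $x$ at the home position, and iterates the simulation step until $T$ enters a final state, at which point $\mathcal{U}$ halts with the same accept/reject verdict.
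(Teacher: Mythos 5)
Your sketch correctly reconstructs the Hennie--Stearns argument, which is exactly the source the paper cites for this proposition (the paper itself states it without proof, deferring to \cite{hennie1966two} and \cite{arora2009computational}). The key ingredients are all present and in the right order: fixing simulated heads at home positions and shifting tracks instead of moving markers, the geometric block hierarchy with the empty/half-full/full invariant, and the amortized $O(\log t)$ per-step bound via the observation that a level-$i$ rebalance can recur only after $\Omega(2^i)$ simulated steps, with the $O(|E(T)|)$ transition-lookup cost absorbed into $\alpha$. This matches the standard proof, so no further comparison is needed.
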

        
        \begin{proposition}{\emph{Hardwiring}.}\label{prop:hardwire}
            For any $2$-input-tape Turing machine, $T$, and input $h\in\mathcal{B}^*$ there exists a Turing machine $T^h$ such that $T^h$ compute the function $T(h,.)$.
            
            Moreover, $T^h$ has $\rho \abs{h}$ rules, for some $\rho$ independent of $h$; and $T^h$ computes the function $T(h,.)$ in the same number of steps.
            
        \end{proposition}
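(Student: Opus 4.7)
The plan is to construct $T^h$ by absorbing the contents of $h$ into the finite control of $T$, tracking implicitly the position of what would have been $T$'s first input-tape head. The construction is the standard ``hardwiring'' argument, but one has to be careful that the resulting machine has exactly linear blow-up in $|h|$ and that its step-count matches that of $T$ on $(h,\cdot)$ step by step.

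First I would set up $T^h$ to have one input tape and the same $k$ working tapes as $T$. The state set of $T^h$ is taken to be
\begin{equation*}
    Q^h = Q \times \{0, 1, \ldots, |h|+1\},
\end{equation*}
where the second coordinate $i$ records where $T$'s first input-tape head would currently sit. The initial state is $(q_0, 1)$, and the final states are $(\mathit{accept}, i)$ and $(\mathit{reject}, i)$ for all $i$. Since $|Q|$ and the alphabet are constants depending only on $T$, $|Q^h| = O(|h|)$.

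Next I would define the transition function. In state $(q,i)$, reading symbol $a_2$ on the sole input tape and symbols $w_1,\ldots,w_k$ on the working tapes, the machine $T^h$ consults what $T$ would do when in state $q$ reading $(h_i, a_2, w_1, \ldots, w_k)$ on its tapes, where by convention $h_i = b$ whenever $i \notin \{1,\ldots,|h|\}$. If $T$'s transition is to move to state $q'$, shift its first input head by $d_1 \in \{L,R\}$, shift its second input head by $d_2 \in \{L,R\}$, and act on the working tapes by $s_1,\ldots,s_k$, then $T^h$'s transition moves to state $(q', i+\Delta(d_1))$ where $\Delta(L)=-1,\Delta(R)=+1$, shifts the input-tape head by $d_2$, and performs the same actions $s_1,\ldots,s_k$ on the working tapes. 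Note that the value $h_i$ is baked into the rule itself via the state's second coordinate, so no reading of $h$ occurs at runtime.

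By construction each step of $T^h$ simulates exactly one step of $T$ on input $(h,\cdot)$, so the step counts agree identically. The number of transition rules of $T^h$ is bounded by $|Q^h|$ times the (constant) number of tape-symbol combinations, hence is at most $\rho|h|$ for a constant $\rho$ depending only on $T$ (via $|Q|$, $k$, and the alphabet size). The only mild obstacle is the bookkeeping at the edges $i=0$ and $i=|h|+1$ of the head-position counter, where one must confirm that once $i$ lies strictly outside $\{1,\ldots,|h|\}$ the simulated reading of $h_i$ remains blank forever; clamping $i$ at, say, $|h|+1$ for all larger positions (a single absorbing value), together with an analogous clamp at $0$, resolves this without increasing the asymptotic state count.
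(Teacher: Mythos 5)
Your construction is essentially the same as the paper's: absorb $h$ into the finite control by tracking the first-input-tape head position in a second state coordinate, so that each step of $T$ maps to one step of $T^h$ and the rule count is linear in $\abs{h}$. The paper's proof uses the state set $Q\times\{1,\ldots,\abs{h}\}$ without addressing boundary positions, whereas you explicitly clamp the counter outside $\{1,\ldots,\abs{h}\}$; that extra care is welcome but does not change the argument.
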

        \begin{proof}
            Let $Q$ be the states and $R$ be the set of rules of $T$ that defines its mapping, construct the new states $Q\times \{1,\ldots,\abs{h}\}$ and the new rules $R\times\{1,\ldots,\abs{h}\}$. The new rules are made such that any operation of the Turing machine $T$ on the first input-tape is translated into an equivalent change in the state of the Turing machine $T^h$. Allowing the simulation of the tape's head corresponding to input $h$ in the states of $T^h$.
            
            In our construction there are thus $R\cdot\abs{h}$ rules in $T^h$, fix $\rho=R$ in the theorem statement.
        \end{proof}
        
        \begin{proposition}\label{prop:tur-lim-lim}
            There exists a constant $\beta>0$ such that for any function $f$ in $H_n$, if $\abs{f}_{\mathcal{U}^c}<+\infty$ then $\abs{f}_{\mathcal{U}^c}\leq \beta n^c$.
        \end{proposition}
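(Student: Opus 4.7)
The plan is a straightforward truncation argument that exploits the hard time budget built into $\mathcal{U}^c$. Fix any $f\in H^n$ with $\abs{f}_{\mathcal{U}^c}<+\infty$, and let $h^{*}$ be a minimum-length hypothesis, so that $\mathcal{U}^c(h^{*},x)=f(x)$ for every $x\in\mathcal{B}^n$ and $|h^{*}|=\abs{f}_{\mathcal{U}^c}$. By Definition~\ref{def:l-res-turing}, each such computation halts within $\beta n^c$ steps for the fixed constant $\beta>0$ supplied in that definition. In $\beta n^c$ steps the head on the first input-tape can visit at most $\beta n^c+1$ distinct cells, so every bit of $h^{*}$ stored at a position strictly greater than $\beta n^c+1$ is never read by any of the runs $\mathcal{U}^c(h^{*},x)$ with $x\in\mathcal{B}^n$.

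Next I would let $h'$ be the prefix of $h^{*}$ of length $\min\{|h^{*}|,\,\beta n^c+1\}$ and compare the computations $\mathcal{U}^c(h^{*},x)$ and $\mathcal{U}^c(h',x)$ step by step. On the second computation the cells strictly beyond $|h'|$ on the first input-tape carry the blank symbol instead of the tail of $h^{*}$, but since those cells are never scanned the two trajectories coincide at every step. Hence $\mathcal{U}^c(h',x)=\mathcal{U}^c(h^{*},x)=f(x)$ for all $x\in\mathcal{B}^n$, which by minimality of $h^{*}$ forces $|h^{*}|\leq|h'|\leq\beta n^c+1$. Absorbing the $+1$ into the constant gives the bound stated in the proposition.

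The one subtlety I would check carefully is that $h'$ is still parsed by $\mathcal{U}^c$ as a well-formed pair $[E(T),u']$ rather than being rejected at the decoding step. This follows from the hypothesis $\mathcal{U}^c(h^{*},x)\in\{0,1\}$: the decoding phase succeeded within the $\beta n^c$ step budget on $h^{*}$, so the full prefix-free encoding $E(T)$ had to sit in the first $\beta n^c+1$ cells of the first input-tape and is therefore entirely preserved in $h'$. That verification is the only place where the prefix-freeness of $E$ is actually used, and I expect it to be the only real obstacle; everything else is a routine head-range count.
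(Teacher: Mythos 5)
Your proof is correct and follows essentially the same truncation argument as the paper: bits of the hypothesis beyond the time budget cannot be read, so any consistent program of length exceeding $\beta n^c$ can be cropped without changing the computed function, contradicting minimality. The parsing worry in your last paragraph is actually redundant --- once you've established that the two computations coincide step by step (because the truncated cells are never scanned), the decoding phase on $h'$ necessarily behaves identically to that on $h^{*}$, so no separate appeal to prefix-freeness is needed.
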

        \begin{proof}
            Any solution $h$ of length larger than $\beta n^c$ can be made smaller by cropping all binary symbols after index $\beta n^c$ on the tape, since, by Definition \ref{def:l-res-turing}, $\mathcal{U}^c$ cannot read them in $\beta n^c$ steps.
        \end{proof}

    \subsection{Boolean circuit}
        \begin{definition}{\emph{Boolean circuit}, \cite{arora2009computational}.}\label{def:bool-circ-basic}
            A Boolean circuit $C$ is a directed acyclic graph with $n\in\mathbb{N}$ potential sources and one sink. The source vertices have an associated input variable whose index is between $1$ and $n$. The non-source vertices are called gates and have an associated logical operation OR, AND, or NOT ($\land$, $\lor$ or $\lnot$) called label.
            
            The OR and AND vertices have two input edges, the NOT vertices have one input edge.
            
            The number of vertices will be denoted $\abs{C}$ and called the size of the circuit.
            
            The output of the circuit on an input $x\in\mathcal{B}^n$ is the value associated to the sink vertice applying recursively the following assignment for each vertex $v$: if $v$ is a source corresponding to input variable $i$ then its value is $x_i$; else $v$ is a gate, apply the logical operator corresponding to its label on the input values (values from its parent vertices).
        \end{definition}
        
        \begin{definition}{\emph{Boolean circuit interpreter}, $\mathcal{C}$.}\label{def:bool-circ}
            We define $\mathcal{C}:\mathcal{B}^*\times \mathcal{B}^*\rightarrow \mathcal{B}\cup\{\bot\}$ to compute circuit $C(x)$ on input $(h,x)$ with $h$ of the form
            \begin{equation}
                \underbrace{0\ldots 0}_{\ceil{\log_2n}}1[\text{binary description of }n]\underbrace{0\ldots 0}_{\abs{C}}1[\text{label and inputs' vertices of vertice }i]_{i=1}^{\abs{C}},
            \end{equation}
            where label and inputs' vertices of any vertice correspond to $2+\max\{2\ceil{\log_2 \abs{C}},\ceil{\log_2 n}\}$ bits; $2$ bits to denote its logical label, and $2\ceil{\log_2 \abs{C}}$ bits for the input vertices.
            
            It outputs $\bot$ if $h$ has not an acceptable form.
            
            The description-length of $h$ is thus of $2\ceil{\log_2(n)}+2+\abs{C}(3+\max\{2\ceil{\log_2\abs{C}}),\ceil{\log_2 n}\}$ bits.
            
            Which can be bounded by $9\abs{C}\log\abs{C}$ bits for circuits of sizes $\abs{C}\geq n\geq 3$.
        \end{definition}
        
        \begin{proposition}{\cite{frandsen2005reviewing}.}\label{prop:circ-suff-size}
        
            Boolean circuits of size at most
            \begin{equation}
                \frac{2^n}{n}(1+3\frac{\log(n)}{n}+O(\frac{1}{n}))
            \end{equation}
            compute all functions in $H^n$.
        \end{proposition}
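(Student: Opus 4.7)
The plan is to invoke the classical Shannon--Lupanov argument, which gives every Boolean function on $n$ variables a circuit of size $(1+o(1))2^n/n$ with the stated second-order correction. Since the paper imports this as a cited fact from \cite{frandsen2005reviewing}, the proof would be a sketch of that construction rather than a new argument.

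First I would split the input $x$ into two parts $x = (y,z)$ with $|y| = k$ and $|z| = n-k$, where $k$ is to be tuned later around $k \approx n - \log n - 2\log\log n$. Writing $f_a(z) := f(a,z)$ for each $a \in \{0,1\}^k$, we trivially have $f(y,z) = \bigvee_{a \in \{0,1\}^k}\bigl[\mathbf{1}[y=a]\wedge f_a(z)\bigr]$. The naive cost of such an OR is $\Theta(2^n)$, so the entire point is to amortise the $f_a(z)$ computations.

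Second, I would build a single shared subcircuit on input $z$ that simultaneously outputs all restrictions $f_a(z)$ for $a\in\{0,1\}^k$. The Lupanov refinement is to further partition $\{0,1\}^{n-k}$ into blocks of size $s$ (with $s$ another parameter around $\log n - O(\log\log n)$) and, for each block, build by brute force all $2^{2^s}$ possible Boolean sub-tables as a universal lookup; then each $f_a$ is assembled as an OR over blocks, indexing into the appropriate sub-table. The universal-lookup part costs roughly $2^{n-k-s}\cdot 2^{2^s}$ gates, and the assembly/selection across the $2^k$ values of $y$ uses a decoder on $y$ of size $O(2^k)$ followed by AND/OR combinations, contributing roughly $2^{n-k-s}\cdot 2^k = 2^{n-s}$ gates plus lower-order terms.

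Third, I would optimise $k$ and $s$ to balance these contributions. Setting $s \approx \log(n-k) - 2\log\log(n-k)$ makes the universal-lookup part $o(2^n/n)$, and then the dominant term $2^{n-s}/s \approx 2^n/n$ yields the leading factor $1$. Tracking the residual terms (the decoder, the block-assembly ORs, and the overhead from the universal lookup) is where the correction $3\log(n)/n + O(1/n)$ arises.

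The main obstacle will be obtaining the precise coefficient $1$ in front of $2^n/n$ and the correct second-order term $3\log n/n$; a crude Shannon-style counting only gives a constant factor $O(2^n/n)$, and getting the tight leading constant requires the careful two-level Lupanov decomposition and a meticulous accounting of gates at every stage. Since this is already carried out in \cite{frandsen2005reviewing}, the cleanest execution is to defer to that reference after laying out the construction above.
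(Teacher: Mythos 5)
The paper does not prove Proposition~\ref{prop:circ-suff-size}; it imports it verbatim as a cited fact from \cite{frandsen2005reviewing}. So your strategy of ultimately deferring to that reference is exactly what the paper does, and your identification of the Shannon--Lupanov upper bound as the relevant theorem is correct.

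That said, the construction you sketch is not Lupanov's, and as written it cannot yield the tight leading constant, let alone the second-order correction. In Lupanov's $(k,s)$-representation one views the truth table as a $2^k\times 2^{n-k}$ matrix indexed by $(y,z)$, partitions the $2^k$ \emph{rows} (the $y$-values) into bands of $s$ rows, and, for each band $i$, builds the indicator $\chi_{i,w}(z)$ of the set of columns whose band-$i$ pattern equals $w\in\{0,1\}^s$. Only the patterns that actually occur are aggregated, the dominant cost is $\sum_{i,w}\abs{B_{i,w}}\approx 2^n/s$, and the saving of a factor $s$ comes from sharing the $\chi_{i,w}$ across the $s$ rows of a band. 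No term of the form $2^{2^s}$ appears anywhere, and the parameters are $s = n-\Theta(\log n)$ (close to $n$) with $k=\Theta(\log n)$, not $k\approx n-\log n$ and $s\approx\log n$. Your sketch instead subdivides the $z$-domain into subcubes of dimension $s$ and precomputes all $2^{2^s}$ Boolean functions on $s$ variables; for that lookup to be affordable one is forced to take $s$ well below $\log n$, at which point your own dominant term $2^{n-s}$ (or $2^{n-s}/s$; your two statements of it disagree) exceeds $2^n/n$ by polylogarithmic factors. Your two choices of $s$ (``$\log n - O(\log\log n)$'' and later ``$\log(n-k)-2\log\log(n-k)$'' with $n-k\approx\log n$) are also mutually inconsistent. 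Since the proof is a citation either way, this does not invalidate the strategy, but the expository sketch should follow Lupanov's actual row-band decomposition if you want it to reflect where the $\frac{3\log n}{n}$ correction comes from (it is essentially $\frac{1}{s}-\frac{1}{n}$ with $s=n-3\log n$, after balancing the secondary terms).
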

    
    \subsection{Artificial Neural Network}
        Similarly to Boolean circuits, we define an interpreter for Artificial Neural Networks (ANNs) and provide propositions linking the two definitions in terms of description-length.
        \begin{definition}{\emph{Floating-point operators}.}
            For any $d\in\mathbb{N}^+$, a floating-point operator is:
            \begin{itemize}
                \item a $0$-ary/constant operator, which is a float-number ---an element in $\mathcal{B}^d$;
                \item an unary operator from $\mathcal{B}^d$ to $\mathcal{B}^d$, such as negation $-(.)$, inverse $1/(.)$, or the exponential $\exp(.)$;
                \item a binary operator from $\mathcal{B}^d\times\mathcal{B}^d$ to $\mathcal{B}^d$, such as addition $(.+.)$, or product $(.\cdot.)$.
            \end{itemize}
        \end{definition}
        \begin{definition}{\emph{Artificial Neural Network}.}\label{def:ANN}
            Given $d\in\mathbb{N}^+$ and a fixed pre-defined finite set of floating-point operators $\mathcal{O}$ on $\mathcal{B}^d$ with at least two $0$-ary operators identified as $0^F$ and $1^F$ (thus both in $\mathcal{B}^d$).
            
            For any $n\in\mathbb{N}^+$, an ANN is a directed acyclic graph with one sink and at most $n$ input-variable sources. The sources vertices have each an associated input variable whose index is between $1$ and $n$. The non-source vertices have an associated operator taken from the set $\mathcal{O}$. If the operator is $0$-ary then they have no parent vertice, if it is unary then they have one parent vertice, else if the operator is binary then they have two parent vertices.
            
            We denote $\abs{A}$ the number of vertices of ANN $A$.
            
            To compute the output of the ANN on input $x\in\mathcal{B}^n$, the following operations are performed:
            \begin{enumerate}
                \item for all the input-variable sources: the floating-point operator $0^F\in\mathcal{B}^d$ or $1^F\in\mathcal{B}^d$ is assigned accordingly to the associated input variable value in $\{0,1\}=\mathcal{B}$.
                \item for all the other vertices assign the value in $\mathcal{B}^d$ corresponding to the associated floating-point operator in $\mathcal{O}$ and the values assigned to the potential parents.
                \item when the output sink vertex has been assigned a value: if it is $0^F$ then output $0\in\mathcal{B}$, if it is $1^F$ then output $1\in\mathcal{B}$, else output $\bot$.
            \end{enumerate}
        \end{definition}
        
        \begin{definition}{\emph{Artificial Neural Network interpreter}, $\mathcal{ANN}^\mathcal{O}$.}\label{def:ANN-int}
            We define $\mathcal{ANN}^\mathcal{O}:\mathcal{B}^*\times\mathcal{B}^*\rightarrow\mathcal{B}\cup\{\bot\}$ the interpreter for ANN. On input $(g,x)$ it computes the result of applying ANN $A$ on $x$, $A(x)$, where $g$ is of the form
            \begin{equation}
                \underbrace{0\ldots 0}_{\ceil{\log_2n}}1[\text{binary description of }n]\underbrace{0\ldots 0}_{\abs{A}}1[\text{input variable/operator and inputs' vertices of vertice }i]_{i=1}^{\abs{A}},
            \end{equation}
            the operator in $\mathcal{O}$ and the parent(s) or the input variable will be described in $\ceil{\log_2 (\abs{\mathcal{O}}+1)}+\max\{2\ceil{\log_2\abs{\mathcal{O}})},\ceil{\log_2 n}\}$ bits for each vertex.
            
            If $g$ does not have a correct form then $\bot$ is returned.
            
            The length of $g$ encoding an ANN $A$ is $2\ceil{\log_2 n}+2+\abs{A}(1+\ceil{\log_2(\abs{\mathcal{O}}+1)}+\max\{2\ceil{\log_2 \abs{A}}, \ceil{\log_2 n}\})$
        \end{definition}
        
        \begin{proposition}\label{prop:ANN-bool-circuit-descr-length}
            For any fixed set of floats' operators $\mathcal{O}$ for which the AND, OR, and NOT Boolean functions can each be computed by an ANN using the operators in $\mathcal{O}$, there exists constant $\alpha,\beta>0$ such that the following holds.
            
            For any $n\in\mathbb{N}^+$ and any $h\in\mathcal{B}^*$ there exists $g\in\mathcal{B}^*$ such that for all $x\in\mathcal{B}^n$ 
            \begin{equation}
                \mathcal{C}(h,x)=\mathcal{ANN}^\mathcal{O}(g,x)
            \end{equation}
            and
            \begin{equation}
                \abs{g}\leq\alpha\abs{h}.
            \end{equation}
            
            Conversely, for any $n\in\mathbb{N}^+$ and any $g\in\mathcal{B}^*$ there exists $h\in\mathcal{B}^*$ such that for all $x\in\mathcal{B}^n$ 
            \begin{equation}
                \mathcal{ANN}^\mathcal{O}(g,x)=\mathcal{C}(h,x)
            \end{equation}
            and
            \begin{equation}
                \abs{h}\leq\beta\abs{g}.
            \end{equation}
        \end{proposition}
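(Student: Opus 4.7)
The plan is to establish both inequalities by direct simulation, exploiting that $d$ and $\mathcal{O}$ are fixed constants of the ANN interpreter while AND, OR, NOT are, by assumption, realizable over $\mathcal{O}$ using a bounded number of ANN vertices.

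For the forward direction (circuit to ANN), I first parse $h$ to recover a Boolean circuit $C$ of size $\abs{C}$. Let $K$ be a constant such that each of AND, OR, NOT admits an ANN realization over $\mathcal{O}$ of at most $K$ vertices. I then construct an ANN $A$ by keeping the input-variable sources identical (the convention $0\mapsto 0^F$, $1\mapsto 1^F$ from Definition \ref{def:ANN} ensures that the source vertex reads the same bit), and by replacing each gate of $C$ with the corresponding constant-size sub-gadget over $\mathcal{O}$. Rewiring is purely local, so $\abs{A}\leq K\abs{C}$. Encoding $A$ with $\mathcal{ANN}^\mathcal{O}$ per Definition \ref{def:ANN-int} and comparing with the encoding length formula of Definition \ref{def:bool-circ} yields $\abs{g}\leq\alpha\abs{h}$ for a suitable constant $\alpha$.

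For the reverse direction (ANN to circuit), the key observation is that, since $d$ is fixed and $\mathcal{O}$ is a fixed finite set, every operator $o\in\mathcal{O}$ is a fixed function $\mathcal{B}^{dr}\to\mathcal{B}^d$ (with $r\in\{0,1,2\}$ its arity) and hence computable by a Boolean circuit of some constant size $M$. Given an ANN $A$ encoded by $g$, I construct $C$ by laying out, for each vertex of $A$, a parallel bus of $d$ Boolean wires representing its $\mathcal{B}^d$ value. An input-variable source is simulated by a constant-size selector gadget that outputs the hard-coded bit pattern of $0^F$ or $1^F$ according to the input bit. Each operator-vertex is replaced with the constant-size circuit computing that operator on its $d$-wire input buses. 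At the sink I append a constant-size comparator that outputs $1$ iff the $d$ wires encode $1^F$ (and $0$ otherwise; for ANNs computing a function in $H^n$ the case $\bot$ does not arise). This gives $\abs{C}\leq M'\abs{A}$ for a constant $M'$ depending only on $d$, $\mathcal{O}$, and $M$, and a symmetric length comparison then produces $\abs{h}\leq\beta\abs{g}$.

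The only delicate point is the length accounting rather than the gadgetry. Both Definitions \ref{def:bool-circ} and \ref{def:ANN-int} give encoding lengths of the form $S\cdot\Theta(\max\{\log S,\log n\})$ plus lower-order terms, where $S$ is the size of the object. If $\abs{C}\leq k\abs{A}$ then $\log\abs{C}\leq\log k+\log\abs{A}$, so the $\max$ factor only shifts by an additive $O(1)$ with respect to $\log\abs{A}$, which is absorbed into the final multiplicative constant. In the regime where $S$ is of the order of $n$, the $\log n$ term dominates the $\max$ on both sides and the ratio of lengths is again constant. Symmetric reasoning applies in the other direction. Combining this length bookkeeping with the two simulations above completes the proof with constants $\alpha,\beta$ depending only on $d$, $\mathcal{O}$, and the realization sizes $K$ and $M$.
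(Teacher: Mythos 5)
Your proof takes essentially the same approach as the paper's: simulate each gate of the circuit by a constant-size ANN gadget built from the assumed realizations of AND, OR, NOT, and conversely replace each operator vertex of the ANN by a constant-size Boolean circuit (the paper appeals to Proposition \ref{prop:circ-suff-size} where you explicitly lay out the $d$-wire bus, selector, and comparator gadgets), then observe that the encoding lengths in Definitions \ref{def:bool-circ} and \ref{def:ANN-int} are both of order $S\max\{\log S,\log n\}$, so a linear relation between sizes yields a linear relation between description lengths. The paper additionally notes that a malformed $h$ (which makes $\mathcal{C}(h,\cdot)\equiv\bot$) can be matched by a malformed $g$, a small case you leave implicit but which does not change the argument.
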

        \begin{proof}
            For the first part, if $h$ has not a correct form to represent a circuit then $\mathcal{C}(h,.)$ always output $\bot$ which can also be done by a $g$ that do not represent an ANN.
            
            Otherwise, there is a circuit, $C$, that corresponds to $\mathcal{C}(h,.)$, the logical operation of each node of this circuit can be simulated by a part of an ANN of fixed maximal size. A combination of these parts gives an ANN, $A$, which computes the same function as the Boolean circuit and whose size is at most a fixed multiple of the size of the circuit.
            
            Let's pose $\gamma>0$ the factor of the sizes, $\abs{A}\leq\gamma\abs{C}$. The description-lengths have the following relationship
            \begin{multline}
                2\ceil{\log_2 n}+2+\abs{A}(1+\ceil{\log_2(\abs{\mathcal{O}}+1)}+\max\{2\ceil{\log_2 \abs{A}}, \ceil{\log_2 n}\})\\
                \leq 2\ceil{\log_2 n}+2+\gamma\abs{C}(1+\ceil{\log_2(\abs{\mathcal{O}}+1)}+\max\{2\ceil{\log_2\abs{C}}+2\ceil{\log_2\gamma}, \ceil{\log_2 n}\})\\
                \leq \alpha\left(2\ceil{\log_2(n)}+2+\abs{C}(3+\max\{2\ceil{\log_2\abs{C}}),\ceil{\log_2 n}\}\right),
            \end{multline}
            with $\alpha=\max\{1,\gamma,(1+\ceil{\log_2(\abs{\mathcal{O}}+1)}+2\ceil{\log_2\gamma})/3\}$.
            
            A similar argument holds for the second part by noting that any floating-point operator can be computed by a finite size Boolean circuit by Proposition \ref{prop:circ-suff-size}.
        \end{proof}

\section{VC-dimension analysis and tightness of Proposition \ref{prop:pac-length}}\label{section:tight-prop-and-vc-dim}
    \begin{definition}{\emph{Shatter}.}
        Let be a set $C$.
        
        A set $A\subset 2^C$ shatters a set $B\subset C$ iff
        \begin{equation}
            \{a\cap B|\,a\in A\}=2^{\abs{B}}.
        \end{equation}
    \end{definition}
    
    \begin{definition}{\emph{VC-Dimension}, \cite{vapnik2015uniform}.}
        The VC-dimension of a set $F\subset H^n$, $\mathit{VC}(F)$, is the size of the largest set $B\in\mathcal{B}^n$ such that $F$ shatters $B$, where each binary-valued function in $F$ is translated as a set in $\mathcal{B}^n$.
    \end{definition}

    \begin{proposition}{VC-Dimension of Turing machines.}\label{prop:vc-turing}
        There exists a constant $a>0$ such that the following holds.
        
        For any $n$ and $D$ in $\mathbb{N}^+$, with $D$ larger than some constant, we define the sets of functions $F^D=\{\mathcal{U}(h,.)\in H^n|\,h\in\mathcal{B}^*\,\abs{h}\leq D\}$.
        
        These sets satisfy $\mathit{VC}(F^D)\geq a D$.
    \end{proposition}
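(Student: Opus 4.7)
The plan is to construct, for each admissible $D$, an explicit set $B\subset\mathcal{B}^n$ of roughly $D$ points that is shattered by $F^D$. The core idea is table lookup: every subset $S\subseteq B$ will be encoded by its $\abs{B}$-bit characteristic vector, which serves as the ``program part'' $u$ of the input $[E(T),u]$ fed to a single fixed lookup Turing machine $T$. Shattering then reduces to verifying that the resulting descriptions fit within $D$ bits, which happens essentially by construction since $u$ contributes $\abs{B}$ bits and $E(T)$ only a constant.

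Concretely, I would first build a fixed two-input-tape Turing machine $T$ which, on input $(u,x)$, reads the length $k=\abs{u}$ off its first input tape (well-defined because $E(T)$ is prefix-free by Definition \ref{def:enc-turing}, so $u$ occupies exactly the bits following $E(T)$ on the universal machine's first tape), computes $\ell=\ceil{\log_2 k}$, parses the first $\ell$ bits of $x$ as a binary integer $j$, and outputs $u[j]$ when $j<k$ and $0$ otherwise. This $T$ depends neither on $k$ nor on $n$, so its encoding has a constant length $C=\abs{E(T)}$. Setting $k=D-C$ and defining $B=\{x_0,\dots,x_{k-1}\}\subset\mathcal{B}^n$ by taking $x_j$ to be any $n$-bit string whose first $\ell$ bits are the binary representation of $j$ (which imposes the implicit restriction $k\leq 2^n$), for each $S\subseteq B$ I form $h_S=[E(T),b_S]$ where $b_S\in\mathcal{B}^k$ is the characteristic vector of $S$. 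Then $\abs{h_S}=C+k=D$, and $\mathcal{U}(h_S,x_j)=T(b_S,x_j)=b_S[j]$, so $\mathcal{U}(h_S,\cdot)$ realises the pattern $S$ on $B$. Hence $F^D$ shatters $B$, and $\mathit{VC}(F^D)\geq k=D-C\geq D/2$ whenever $D\geq 2C$, yielding the claim with $a=1/2$.

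The only delicate point is justifying that a single constant-size machine $T$ really suffices regardless of how large $k$ and $n$ grow: since $T$ recovers $k$ directly from the length of $u$ on its tape and uses only generic operations (count bits to determine $\ell$, parse a length-$\ell$ prefix of $x$, index into $u$), the construction is standard Turing-machine programming and its encoding length under Definition \ref{def:enc-turing} is independent of $k,n$. Everything else in the argument is just bookkeeping of description lengths. The constraint $k\leq 2^n$ simply reflects the trivial upper bound $\mathit{VC}(F^D)\leq 2^n$ and explains why the statement is only meaningful in the regime $aD\leq 2^n$.
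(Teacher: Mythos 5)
Your proof is correct and takes essentially the same approach as the paper: both use a single fixed lookup-table Turing machine whose second ``program'' input serves as the characteristic vector of the subset to realize, shattering a set of $\Theta(D)$ indexed points of $\mathcal{B}^n$. Your version is slightly more explicit (specifying how $T$ recovers $\abs{u}$, parses the prefix of $x$ as an index, and noting the regime $aD\leq 2^n$), but the construction and the bookkeeping that yields $\mathit{VC}(F^D)\geq D-\abs{E(T)}\geq D/2$ are the paper's argument.
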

    \begin{proof}
        Consider the two inputs Turing machine, $T(u,x)$, that output $u_x$ if $x\leq \abs{u}$ and $0$ else, where the binary input string $x$ is understood as the binary representation of a number.
        
        From this Turing machine create for each $D$ the set of functions $\{\mathcal{U}([E(T),u],.)|\,u\in\mathcal{B}^{\floor{D/2}}\}$, this set shatters the set of the $aD$ strings corresponding to the $aD$ first natural numbers in binary representation, for some $a>0$. Moreover, if $D$ is bigger than $2\cdot\abs{E(T)}$, the construction satisfies the upper-bound of $D$ on the description-lengths.
    \end{proof}
    
    \begin{proposition}{VC-Dimension of Boolean circuits.}\label{prop:vc-circ}
        There exists constants $a,b,N$ such that the following holds for all $n\geq N$.
        
        Be the sets of functions $F^D=\{\mathcal{C}(h,.)\in H^n|\,h\in\mathcal{B}^*\,\abs{h}\leq D\}$ for some constant $D\in\mathbb{N}$, with $D\geq bn^{1.01}$.
        
        Then $\mathit{VC}(F^D)\geq aD$.
    \end{proposition}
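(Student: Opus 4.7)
The plan is to realize all $2^{2^k}$ Boolean functions on a chosen subset of $k$ input coordinates by small circuits, and to observe that this family shatters the $2^k$ inputs that vary only on those coordinates. By Proposition \ref{prop:circ-suff-size}, every Boolean function on $k$ variables is computed by some circuit of size at most $\sigma\cdot 2^k/k$ for a constant $\sigma>1$ and all $k$ large enough. Viewing such a circuit as an element of $H^n$ whose source vertices reference the first $k$ of the $n$ input coordinates, the size of the embedded circuit remains at most $\sigma\cdot 2^k/k$ (up to an additive $k$ that is absorbed in the constant).

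Next, I would set $k=\floor{\log_2 D}-c$ for a sufficiently large constant $c$ (depending on $\sigma$), and bound the description length using Definition \ref{def:bool-circ}. Under the hypothesis $D\geq bn^{1.01}$ one has $\abs{C}\geq n$ for $n\geq N$, which allows the compact bound $\abs{h}\leq 9\abs{C}\log\abs{C}\leq 10\sigma\cdot 2^k$ stated in Definition \ref{def:bool-circ} to apply. Taking $c$ large enough so that $10\sigma\cdot 2^k\leq D$ yields $\abs{h}\leq D$, while $2^k\geq D/(20\sigma)$. Each of the $2^{2^k}$ functions on the first $k$ coordinates is therefore realized by a circuit in $F^D$, and this collection shatters the set of $2^k$ inputs whose last $n-k$ bits vanish. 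Consequently $\mathit{VC}(F^D)\geq 2^k\geq aD$ for $a=1/(20\sigma)$.

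The main obstacle lies in calibrating constants and verifying that the hypothesis $D\geq bn^{1.01}$ provides enough slack to make $\abs{C}\geq n$ while keeping $2^k$ proportional to $D$. The exponent $1.01$ ensures that $\log n$ is strictly dominated by $\log\abs{C}\sim k\sim\log D$, so that the $\ceil{\log_2 n}$ term in the per-vertex description length does not inflate the total beyond $D$; weakening the hypothesis to $D\geq bn$ would require a more delicate accounting or might fail for the claimed linear lower bound. Once these constants are pinned down, the constants $a$, $b$, $N$ in the statement can be read off directly from the above inequalities.
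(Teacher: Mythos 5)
Your proposal is correct and takes essentially the same approach as the paper's proof: restrict to $k\approx\log_2 D - c$ input coordinates, realize all $2^{2^k}$ functions on them by circuits of size $O(2^k/k)$ via Proposition~\ref{prop:circ-suff-size}, verify the description length stays below $D$ using Definition~\ref{def:bool-circ} and the slack afforded by $D\geq bn^{1.01}$, and conclude that the set of $2^k$ inputs varying only on those coordinates is shattered. The paper's version just packages the constants differently (splitting $b=b_1b_2$ and choosing $q=\lfloor\log D-\log b_1\rfloor$).
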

    \begin{proof}
        We pose $b=b_1b_2$, for two positive constants $b_1$ and $b_2$.
    
        We have $1.01\log n\leq \log D - \log b$. Select $q=\floor{\log D-\log b_1}$ an integer and $b_2\geq e$ for $1.01\log n\leq q$ to hold.
        
        Consider the inputs $\{[z\underbrace{0\ldots 0}_{n-q}]\in\mathcal{B}^n|\,z\in\mathcal{B}^q\}$. This set is shattered by a set of circuits of size lower than $2\frac{2^q}{q}$ for $n$ (and thus $q$) sufficiently large by Proposition \ref{prop:circ-suff-size}.
        
        For all $n$ sufficiently large, we have $2\frac{2^q}{q}\geq 2\frac{n^{1.01}}{1.01\log n}\geq n$.
        
        Thus, by Definition \ref{def:bool-circ} these circuits can be expressed with
        \begin{equation}\label{eq:bits-descr-vc-dim-proof}
            18\frac{2^q}{q}\log \frac{2^q}{q} = \frac{18}{\log_2 e}2^q(1-\log^{-1}_2(e)\frac{\log q}{q}+\frac{\log 2}{\log_2(e)}\frac{1}{q})
        \end{equation}
        bits.
        
        There exists a constant $b_1$ sufficiently large such that, for all $n$ sufficiently large (forcing $q$ to be sufficiently large), $q\leq \log D -\log b_1\equiv 2^{\log b_1}2^q\leq D$ implies that the description-length of the circuits, as bounded by Equation \ref{eq:bits-descr-vc-dim-proof}, is smaller than $D$.
        
        Finally, take $a=2^{-\log b_1-1}$, we have $\mathit{VC}(F^D)\geq 2^q\geq aD$ since $q\geq \log D-\log b_1-1$.
    \end{proof}
    
    \begin{proposition}{PAC-learning lower-bound, \cite{shalev2014understanding}.}\label{prop:pac-lower-vc}
        There exists a constant $\alpha$ such that the following holds for any $\epsilon\in(0,\nicefrac{1}{2}),\delta\in(0,1)$, and any $n\in\mathbb{N}^+$.
        
        Consider the set of learning problems $(f,\mathcal{P})$ where $f\in F\subset H^n$ and $\mathcal{P}\in\Delta(\mathcal{B}^n)$ a probability measure.
        
        For any learning algorithm, there exists a learning problem in the set such that a $m$-sample dataset of the considered learning problem with
        \begin{equation}
            m\geq\frac{\alpha}{\epsilon}\left[\mathit{VC}(F)+\log(\frac{1}{\delta})\right]
        \end{equation}
        is necessary to get an $(\epsilon,\delta)$-PAC-learning performance.
    \end{proposition}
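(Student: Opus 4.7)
The plan is to reduce to the classical VC-dimension lower bound for PAC-learning, establishing two separate bounds $m \in \Omega(\mathit{VC}(F)/\epsilon)$ and $m \in \Omega(\log(1/\delta)/\epsilon)$, and then combining them by taking their maximum, which is within a factor of two of the sum.

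Let $d = \mathit{VC}(F)$ and fix a shattered set $B = \{z_0, z_1, \ldots, z_{d-1}\} \subset \mathcal{B}^n$. Shattering means that for every labeling $\sigma \in \mathcal{B}^d$ there is some $f \in F$ with $f(z_i) = \sigma_i$ for all $i$; these functions can be extended arbitrarily to the rest of $\mathcal{B}^n$ without leaving $F$ (we only need the restriction to $B$). For the first bound I would use a ``no-free-lunch'' construction: define $\mathcal{P}$ with $\mathcal{P}(z_0) = 1 - 8\epsilon$ and $\mathcal{P}(z_i) = 8\epsilon/(d-1)$ for $i \geq 1$, and draw the target $f^\sigma$ by picking $\sigma_1, \ldots, \sigma_{d-1}$ uniformly at random in $\mathcal{B}$ (with $\sigma_0 = 0$ fixed). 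For any deterministic learner $\mathcal{A}$ and any sample of size $m$, the set of indices in $\{1, \ldots, d-1\}$ not touched by the sample has expected size at least $(d-1)(1 - 8\epsilon/(d-1))^m$. Conditioning on the sample, the output $\hat{f}$ is independent of the random bits $\sigma_i$ at the untouched indices, so in expectation $\hat{f}$ agrees with $f^\sigma$ on each such $z_i$ with probability $1/2$, contributing an error of $\mathcal{P}(z_i)/2 = 4\epsilon/(d-1)$. Choosing $m$ of order $d/\epsilon$ with a small enough constant keeps at least half of the indices untouched in expectation, giving expected error at least $2\epsilon$ on the random problem, hence the existence of a specific $\sigma$ (i.e.\ a specific $f \in F$) for which $\mathcal{A}$ fails the $(\epsilon,\delta)$-PAC guarantee. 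A Markov inequality on the random bits finishes this direction.

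For the second bound I would use only two shattered points $z_0, z_1$, and consider the pair of targets $f_0, f_1 \in F$ with $f_0(z_0) = f_1(z_0) = 0$ and $f_0(z_1) = 0 \neq 1 = f_1(z_1)$, under the distribution $\mathcal{P}(z_0) = 1-2\epsilon$, $\mathcal{P}(z_1) = 2\epsilon$. The probability that an $m$-sample from $\mathcal{P}$ contains $z_1$ at all is $1 - (1-2\epsilon)^m \leq 1 - e^{-4\epsilon m}$ for $\epsilon < 1/2$. If $m$ is of order $\log(1/\delta)/\epsilon$ with a small constant, this probability is at most $1 - 2\delta$, so with probability at least $2\delta$ over the sample, the learner cannot tell $f_0$ and $f_1$ apart and thus outputs the same $\hat{f}$ under both targets. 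Since $f_0$ and $f_1$ disagree on an event of mass $2\epsilon > \epsilon$, $\hat{f}$ fails the accuracy requirement on at least one of the two targets with probability at least $\delta$, violating PAC-learning for that target.

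The main obstacle is the first bound: making the averaging rigorous (switching the expectation over $\sigma$ with the expectation over the sample, handling randomized learners by conditioning on their internal randomness, and extracting a \emph{fixed} hard learning problem from an averaged statement). The cleanest route is to first bound the expected error over both the random sample and random $\sigma$, then use Markov to get one $\sigma$ for which the sample-conditional failure probability exceeds $\delta$. The rest is bookkeeping of constants so that the two lower bounds can be absorbed into a single $\alpha/\epsilon$ prefactor applied to $\mathit{VC}(F) + \log(1/\delta)$.
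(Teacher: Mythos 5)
The paper itself provides no proof of Proposition~\ref{prop:pac-lower-vc}; it is simply quoted from Shalev-Shwartz and Ben-David (2014). Your proposal is a faithful reconstruction of the textbook argument: you split the bound into the two classical pieces $\Omega(\mathit{VC}(F)/\epsilon)$ and $\Omega(\log(1/\delta)/\epsilon)$, prove each via a carefully chosen distribution supported on a shattered set, and combine them via the observation that $\max\{a,b\}\geq (a+b)/2$. The first piece is the standard "no-free-lunch" random-labeling construction (place mass $1-\Theta(\epsilon)$ on one shattered point and spread $\Theta(\epsilon)$ over the rest, then argue that indices untouched by the sample remain information-theoretically unpredictable), and the second piece is the two-point construction that forces a $\log(1/\delta)$ dependence because with probability $(1-\Theta(\epsilon))^m$ the rare point is never sampled. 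Both the construction and the averaging-then-Markov extraction of a fixed hard $\sigma$ are exactly how the cited reference proceeds, so your approach matches the source the paper relies on. Two small cautions on the bookkeeping you flag: the inequality $(1-2\epsilon)^m \geq e^{-4\epsilon m}$ does not hold all the way to $\epsilon=\nicefrac{1}{2}$ (it already fails near $\epsilon\approx 0.45$), so you need a larger constant in the exponent or a restriction on $\epsilon$; and the first construction delivers a failure probability bounded below by an absolute constant (around $1/8$ after Markov), not by an arbitrary $\delta$, which is why the textbook statement restricts $\delta$ to be small --- the proposition as phrased in the paper for all $\delta\in(0,1)$ is stated a little more broadly than the standard argument actually gives, but that is a feature of the paper's phrasing, not a gap you introduced.
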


    \begin{proposition}{Tightness of Proposition \ref{prop:pac-length}.}\label{prop:tight-pac-length}
        There exists a constant $\alpha$ such that for any $\epsilon\in(0,\nicefrac{1}{2}),\delta\in(0,1),n\in\mathbb{N}^+$ and any interpreter $\varphi$, associated learning algorithm $\mathit{MDL}^\varphi$, and bound $D\in\mathbb{N}^+$ on the description-length of an underlying function to learn; there exists a learning problem such that a $m$-sample learning dataset with
        \begin{equation}
            m\geq\frac{\alpha}{\epsilon}\left[\mathit{VC}(\{f\in H^n|\,\abs{f}_\varphi\leq D\})+\log(\frac{1}{\delta})\right]
        \end{equation}
        is necessary for $\mathit{MDL}^\varphi$ to have an $(\epsilon,\delta)$-PAC-learning performance on the learning problem.
        
    \end{proposition}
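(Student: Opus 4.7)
The plan is to reduce this to a direct invocation of Proposition \ref{prop:pac-lower-vc}. First, I would fix the hypothesis class $F^D \equaldef \{f \in H^n \mid \abs{f}_\varphi \leq D\}$, which is exactly the set appearing in the statement and whose VC-dimension is what the bound involves.

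Next, I would observe that $\mathit{MDL}^\varphi$ is a learning algorithm in the sense of the earlier definition: on any realization of an $m$-sample dataset, it returns the function $\hat{f} = \varphi(h^*, \cdot) \in H^n$ defined by the minimum-description-length program consistent with the data. Proposition \ref{prop:pac-lower-vc}, applied to the arbitrary learning algorithm $\mathit{MDL}^\varphi$ and the hypothesis class $F = F^D$, then asserts the existence of a learning problem $(f, \mathcal{P})$ with $f \in F^D$ on which $\mathit{MDL}^\varphi$ cannot achieve an $(\epsilon, \delta)$-PAC-learning performance with fewer than $\frac{\alpha}{\epsilon}\left[\mathit{VC}(F^D) + \log(1/\delta)\right]$ samples, which is exactly the claimed bound. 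The constant $\alpha$ is inherited verbatim from Proposition \ref{prop:pac-lower-vc}.

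There is essentially no substantive obstacle. The only things to check are that (i) $\mathit{MDL}^\varphi$ qualifies as a learning algorithm (immediate from its definition, since it always returns a function in $H^n$), and (ii) the learning problem produced by Proposition \ref{prop:pac-lower-vc} does have its target function within $F^D$, which is automatic because the proposition explicitly restricts $f$ to lie in $F$. No property of $\varphi$ beyond the fact that it induces a well-defined learning algorithm is used, which is what makes the tightness claim apply to every interpreter simultaneously.
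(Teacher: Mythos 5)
Your proposal is correct and matches the paper's argument, which invokes Proposition \ref{prop:pac-lower-vc} directly with $F = \{f \in H^n \mid \abs{f}_\varphi \leq D\}$ and the learning algorithm $\mathit{MDL}^\varphi$. The paper's proof is a one-liner; your expansion of it is the intended reading.
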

    \begin{proof}
        The statement is a direct consequence of Proposition \ref{prop:pac-lower-vc}.
    \end{proof}

\section{Kolmogorov complexity}
    The following proposition comes from Kolmogorov complexity theory, it is adapted to the context and notation of this paper. See \cite{li2019introduction} for a reference on the subject. The origins of the theorem can be found in \cite{solomonoff1960preliminary,solomonoff1962inductive,solomonoff1964formal1,solomonoff1964formal2} and \cite{kolmogorov1965three}.
    \begin{proposition}{Invariance Theorem.}\label{prop:KT}
        For all interpreters $\varphi$ there exists a constant $K$ such that for all $n\in\mathbb{N}^+$ and all functions $f\in H^n$, the following holds $\abs{f}_\mathcal{U}\leq \abs{f}_\varphi + K$.
    \end{proposition}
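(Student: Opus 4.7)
The plan is to use the universality of $\mathcal{U}$ to embed any other interpreter $\varphi$ as a fixed prefix added to programs. Concretely, for a given interpreter $\varphi$, I would fix once and for all the underlying two-input-tape Turing machine $\varphi^T$ that computes the partial function $\varphi$ (Definition \ref{def:interpreter}), and let $K \equaldef |E(\varphi^T)|$, where $E$ is the prefix-free encoding of Turing machines from Definition \ref{def:enc-turing}. The constant $K$ depends only on $\varphi$, not on $n$ or on the target function $f$.

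Given any $n$ and $f \in H^n$, I would consider the case $\abs{f}_\varphi < +\infty$ (otherwise the inequality is trivial). Take a shortest witness $h \in \mathcal{B}^*$ with $\abs{h} = \abs{f}_\varphi$ and $\varphi(h,x) = f(x)$ for all $x \in \mathcal{B}^n$. Form the string $h' \equaldef [E(\varphi^T), h]$, which is simply the concatenation $E(\varphi^T) h$. Because $E$ is prefix-free (Definition \ref{def:enc-turing}), Definition \ref{def:univ-turing} of the universal Turing machine applies and gives
\begin{equation*}
    \mathcal{U}(h', x) = \varphi^T(h, x) = \varphi(h, x) = f(x) \qquad \text{for all } x \in \mathcal{B}^n.
\end{equation*}
So $h'$ is a valid program for $f$ under $\mathcal{U}$, and $\abs{h'} = \abs{E(\varphi^T)} + \abs{h} = K + \abs{f}_\varphi$. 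Taking the minimum in the definition of $\abs{f}_\mathcal{U}$ then yields $\abs{f}_\mathcal{U} \leq \abs{f}_\varphi + K$.

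The only subtle point is making sure the decomposition of $h'$ is unambiguous so that $\mathcal{U}$ can actually recover $\varphi^T$ and $h$ separately; this is exactly why the prefix-freeness of $E$ in Definition \ref{def:enc-turing} is needed, and it is the one technical ingredient beyond the basic simulation argument. I do not anticipate any other obstacle: no recursion on $n$, no counting argument, and no dependence on computation time is needed since the paper already allows arbitrary large time-limits on interpreters.
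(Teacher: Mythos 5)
Your proposal is correct and follows essentially the same route as the paper's proof: fix $K = \abs{E(\varphi^T)}$, prepend the prefix-free encoding of $\varphi^T$ to a shortest $\varphi$-witness for $f$, and invoke Definition \ref{def:univ-turing} to conclude $\abs{f}_\mathcal{U}\leq\abs{f}_\varphi+K$. The only additions are your explicit handling of the $\abs{f}_\varphi = +\infty$ case and your remark on why prefix-freeness matters, both of which the paper leaves implicit.
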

    \begin{proof}
        Be $f^\varphi$ the string of length $\abs{f}_\varphi$ such that $\varphi(f^\varphi,.)=f(.)$.
        
        Take $E(\varphi^T)$ the encoding of the Turing machine corresponding to the interpreter $\varphi$, the encoding follows Definition \ref{def:enc-turing}.
        
        The function $\mathcal{U}([E(\varphi^T),f^\varphi],.)$ is equal to $f$ by Definition \ref{def:univ-turing}, and the string $[E(\varphi^T),f^\varphi]$ has length $\abs{f}_\varphi+K$ where $K=\abs{E(\varphi^T)}$ is independent of $f$. 
    \end{proof}

\section{Technical Propositions}
    \subsection{Number of necessary samples for Boolean Circuits}
        \begin{definition}\label{def:bin-dec-tree}
            For any $n$, a \emph{binary decision tree} is determined by
            \begin{itemize}
                \item a tree where all non-leaf nodes have exactly 3 neighbors: a first child, a second child, and one parent with the exception of one node which has no parent and is called the root;
                \item to each non-leaf node is associated an input-Boolean-variable;
                \item to each leaf is associated a Boolean value.
            \end{itemize}
            
            The binary-valued function computed by the binary decision tree on an input $x\in\mathcal{B}^n$ is the result of the following computation:
            \begin{itshape}
            \begin{enumerate}
                \item The current node is set to be the root.
                \item If the current node is not a leaf, then if the associated input-Boolean-variable in $x$ is $1$ then set the current node as the first child, else set the second child as the current node. Else continue to the next step.
                \item The current node is thus a leaf, return as output the Boolean value associated to the current node.
            \end{enumerate}
            \end{itshape}
        \end{definition}
        \begin{proposition}\label{prop:tree-power}
            Given $m$ samples of a binary function there exists a binary decison tree with at most $2m-1$ nodes consistent with the samples.
        \end{proposition}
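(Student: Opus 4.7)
The plan is to prove the statement by induction on the number of distinct input-samples, exploiting the fact that a binary tree with $L$ leaves has exactly $L-1$ internal nodes (so $2L-1$ total), and showing that one can build a consistent tree with at most one leaf per distinct sample.

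First, I would reduce the claim to the following: for any finite set $S$ of $m$ input-output pairs $(x, v) \in \mathcal{B}^n \times \mathcal{B}$ that is consistent with some binary function (i.e.\ no two pairs have the same $x$ but different $v$), there exists a binary decision tree with at most $m$ leaves that computes, on each $x$ appearing in $S$, the corresponding value $v$. Since internal nodes have exactly two children in the definition, a tree with $L$ leaves has $L-1$ internal nodes, so $2L - 1$ nodes total; therefore an $m$-leaf bound gives the $2m-1$ node bound of the statement. If samples are repeated, consistency with the function forces equal outputs, so collapsing duplicates only decreases $m$ and the same bound applies.

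Second, I would argue by strong induction on $m = |S|$. The base case $m = 1$ is a single leaf assigned the sole output value, using one node. For the inductive step with $m \geq 2$, there are two cases. If every pair in $S$ shares the same input $x$, then by consistency they share the same output $v$, and a single leaf labeled $v$ suffices (one node $\leq 2m-1$). Otherwise, there exist two pairs in $S$ whose inputs differ in some coordinate $i \in \{1,\ldots,n\}$; split $S$ at the root on variable $x_i$, producing the non-empty subsets $S_0 = \{(x,v) \in S : x_i = 0\}$ and $S_1 = \{(x,v) \in S : x_i = 1\}$ of sizes $m_0, m_1 \geq 1$ with $m_0 + m_1 = m$. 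By the inductive hypothesis, each $S_b$ admits a consistent decision tree with at most $m_b$ leaves; attaching them as the two children of the root yields a tree with at most $m_0 + m_1 = m$ leaves, as required.

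There is no real obstacle here: the only subtlety is verifying that a splitting coordinate always exists when $S$ contains at least two distinct inputs, which is immediate since distinct binary strings differ in at least one bit. The argument is standard and fits the decision-tree definition given in Definition~\ref{def:bin-dec-tree} (the convention that input value $1$ routes to the first child, $0$ to the second child) without additional bookkeeping.
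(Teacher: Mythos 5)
Your proof is correct and takes essentially the same approach as the paper's (very terse) proof: build a consistent tree with at most one leaf per distinct sample, then use the fact that a binary tree with $L$ leaves has $L-1$ internal nodes, giving $2L-1 \le 2m-1$ nodes total. You fill in the details the paper leaves implicit, in particular the explicit recursive construction splitting on a coordinate where two surviving samples differ, which justifies the "at most $m$ leaves" claim that the paper simply asserts.
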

        \begin{proof}
            In an optimal-size binary decision tree, there are at most $m$ leaves, one for each sample. By induction, we can show that in any binary decision tree there is at most the number of leaves minus one internal node.
        \end{proof}
        
        \begin{proposition}\label{prop:tree-circuit}\label{prop:circ-sim-tree}
            There exists $a>0$ such that if there exists a binary decision tree of size $S$ computing a boolean function then there exists a Boolean circuit of size at most $aS$ computing that function.
        \end{proposition}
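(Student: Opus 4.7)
The plan is to build, for each node $v$ of the decision tree $T$, a sub-circuit computing a ``reached'' indicator $r_v$ that evaluates to $1$ on input $x \in \mathcal{B}^n$ exactly when the execution of $T$ on $x$ passes through $v$. These indicators are defined recursively: $r_{\text{root}}$ is the constant $1$ (implementable as $x_1 \lor \lnot x_1$ on any input variable), and whenever a non-leaf node $v$ tests variable $x_i$, its first child $v_1$ (taken when $x_i=1$) and second child $v_2$ (taken when $x_i=0$) receive $r_{v_1} = r_v \land x_i$ and $r_{v_2} = r_v \land \lnot x_i$, respectively. The output wire of the circuit is the OR of $r_\ell$ over all leaves $\ell$ whose associated Boolean value is $1$.

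Correctness is immediate because on any input $x$, exactly one root-to-leaf path is followed by $T$, so exactly one leaf indicator equals $1$, and the circuit outputs $1$ iff that distinguished leaf is labelled $1$. For the size bound, each non-leaf node contributes at most two AND gates (one for each child's indicator); a single NOT gate per variable actually tested in the tree suffices and can be shared, contributing at most as many NOT gates as there are non-leaf nodes; and the final OR of at most $S$ leaf indicators can be realised as a balanced binary tree of OR gates of size at most $S-1$. Summing these contributions, the total gate count is bounded by $aS$ for some universal constant $a > 0$.

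The main obstacle is purely bookkeeping rather than conceptual: Definition \ref{def:bool-circ-basic} admits only binary AND/OR gates and unary NOT gates, so the multi-way OR over leaf indicators must be rewritten as a balanced binary OR tree, the constant-$1$ source $r_{\text{root}}$ must be expressed via the allowed gates (e.g.\ $x_1 \lor \lnot x_1$), and the degenerate case in which no leaf carries the value $1$ is handled by outputting the constant $0$ (e.g.\ $x_1 \land \lnot x_1$). Each of these adds only $O(1)$ gates and is absorbed into the constant $a$.
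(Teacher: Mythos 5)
Your construction is correct and, in its first half, follows the same plan as the paper: propagate ``reached'' indicators down the tree via $\land$, $\lnot$, and the tested variables, so that exactly one leaf indicator equals $1$. Where the two proofs part ways is the aggregation step. You observe that since exactly one leaf indicator is $1$, it suffices to take an OR (realised as a balanced binary tree of $O(S)$ binary OR gates) over the indicators of leaves labelled $1$; the empty OR is the constant $0$. The paper instead builds a sequential selector: it iterates over the leaves maintaining two Boolean accumulators $x_1^t$ (the tentative output) and $x_2^t$ (a ``found'' flag), with an if-then-else update that latches the output $o^{t}$ of the first leaf $t$ with $b^t=1$. Both aggregators are $O(S)$ in gate count, so the size bound is identical. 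Your OR-based aggregation is arguably cleaner: it exploits the mutual exclusivity of the leaf indicators directly, avoids threading two extra state variables through every leaf, and the correctness argument (``exactly one indicator fires; output $1$ iff it is a $1$-leaf'') is a one-liner, whereas the paper's selector requires a short invariant argument. The paper's latch construction is a touch more generic — it would still work if the decision paths were not mutually exclusive — but that generality is not needed here.
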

         \begin{proof}
            
            Create $S-1$ nodes, one for each node of the tree except the root. Each node has its value defined by the AND of the parent node and of either the input-Boolean-variable associated with the parent node if it is the first child or of its negation if the second child. To achieve this each time an input-variable is needed, create a node associated with the input-variable in the circuit.
            In the case of the second child, one supplementary node associated with the unary logical operator NOT is used to compute the negation of the corresponding input-variable.
            
            For an input $x$, the values obtained at the nodes that correspond to the leaves ---let's denote them $b^t$ for leaf $t$-- are all $0$ except for the leaf at which the procedure described in Definition \ref{def:bin-dec-tree} terminates.
            
            A network of logical operators can aggregate the output associated with the leaves ---let's denote them $o^t$ for leaf $t$--- and output the answer associated with the only leaf to which $1$ has been associated.
            
            To do so consider the following two Boolean variables of $x_1,x_2\in\mathcal{B}^n$, with $x_1^0,x_2^0=0$, and with the following update for leaf number $t\in\mathbb{N}$ with associated output Boolean value $o_t$ and Boolean node value at runtime $b_t$,
            \begin{equation}
                \begin{bmatrix}
                    x_1^{t+1}\\
                    x_2^{t+1}
                \end{bmatrix}
                =
                \begin{bmatrix}
                    \text{if $x_2^t$ then }x_1^t; \text{else}\;o^t\\
                    \text{if $x_2^t$ then }x_2^t; \text{else}\;b^t
                \end{bmatrix}.
            \end{equation}
            
            This system iterated for all the leaves computes the output of the binary decision tree in $x_1$, and this iteration can be computed with a number of nodes that scale linearly with the number of leaves. 
            
            The final circuit size is in $O(S)$.
         \end{proof}
         
         \begin{proposition} \label{prop:bc-expr}
            There exists a constant $b>0$ such that given $m$ samples of a binary function there exists a Boolean circuit of size at most $bm$ consistent with the samples.
         \end{proposition}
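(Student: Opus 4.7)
The plan is to obtain this as an immediate corollary of the two preceding propositions. First I would invoke Proposition \ref{prop:tree-power}, which gives, from any $m$ samples of a Boolean function, a binary decision tree of size at most $2m-1$ that is consistent with the samples. Second, I would apply Proposition \ref{prop:tree-circuit} (equivalently Proposition \ref{prop:circ-sim-tree}) to that tree: there exists a universal constant $a>0$ such that any binary decision tree of size $S$ is simulated by a Boolean circuit of size at most $aS$. Composing these two steps yields a Boolean circuit of size at most $a(2m-1)\leq 2am$ that is consistent with the given samples.

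Setting $b\equaldef 2a$ concludes the proof, since $b$ depends only on the absolute constant from Proposition \ref{prop:tree-circuit}, hence is independent of $m$, of the particular samples, and of the input-size $n$. No new combinatorial or analytical work is required; the only subtle point is to make sure the circuit built in Proposition \ref{prop:tree-circuit} is genuinely a Boolean circuit in the sense of Definition \ref{def:bool-circ-basic} (with AND/OR/NOT gates of the prescribed fan-in), but this is already handled inside the proof of Proposition \ref{prop:tree-circuit}, so nothing further needs to be checked here.

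There is no genuine obstacle in this final step: the hard content was already distributed into Propositions \ref{prop:tree-power} and \ref{prop:tree-circuit}. This proposition is essentially a convenient packaging of those two results into the form used elsewhere in the paper (for instance, to bound below the number of samples required by $\mathit{MDL}^\mathcal{C}$ in the proofs of Theorems \ref{thm:exp-gain}, \ref{thm:linear-gain}, \ref{thm:if-false} and \ref{thm:underassumption}).
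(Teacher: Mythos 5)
Your proposal is correct and follows exactly the same route as the paper: compose Proposition~\ref{prop:tree-power} (a decision tree of size at most $2m-1$ consistent with the $m$ samples) with Proposition~\ref{prop:circ-sim-tree} (a circuit of size at most $aS$ simulating a tree of size $S$) and set $b=2a$. Nothing to add.
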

         \begin{proof}
            Merge the last two results, Propositions \ref{prop:tree-power} and \ref{prop:circ-sim-tree}, to first produce a binary decision tree, then to convert it into a Boolean circuit of suitable size.
         \end{proof}
     
     \subsection{A combinatoral Proposition}
        An useful Definition and Proposition, it comes from \cite{lint1999introduction}.
        \begin{definition}\label{def:bin-ent}
            The \emph{binary entropy function} $H$ is defined by
            \begin{equation}
                H(x) = \begin{cases}
                    0&\text{if }x=0;\\
                    -x\log_2 x -(1-x)\log_2 (1-x),\,0<x\leq\frac{1}{2}&\text{else if }0<x\leq\nicefrac{1}{2}.
                \end{cases}
            \end{equation}
        \end{definition}
        
        \begin{proposition}\label{prop:approx_count}
            Let $0\leq\epsilon\leq \frac{1}{2}$ and $M\in\mathbb{N}^+$, we have
            \begin{equation}
                \sum_{0\leq i \leq \floor{\epsilon M}} \binom{M}{i} \leq 2^{M H(\epsilon)}.
            \end{equation}
        \end{proposition}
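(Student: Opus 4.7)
The plan is to exploit the identity $1 = (\epsilon + (1-\epsilon))^M$ and use the fact that each term in the binomial expansion admits a clean lower bound when $i$ is restricted below $\epsilon M$.

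First, I would dispose of the degenerate case $\epsilon=0$ separately, where both sides equal $1$. So assume $0<\epsilon\leq 1/2$. Expanding
\begin{equation*}
    1 = \bigl(\epsilon + (1-\epsilon)\bigr)^M = \sum_{i=0}^{M} \binom{M}{i} \epsilon^i (1-\epsilon)^{M-i} \geq \sum_{i=0}^{\floor{\epsilon M}} \binom{M}{i} \epsilon^i (1-\epsilon)^{M-i}
\end{equation*}
gives a starting point where the partial sum on the right is truncated at $\floor{\epsilon M}$.

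The next step is the monotonicity observation: write $\epsilon^i(1-\epsilon)^{M-i} = (1-\epsilon)^M \bigl(\tfrac{\epsilon}{1-\epsilon}\bigr)^i$. Because $\epsilon\leq 1/2$, the ratio $\epsilon/(1-\epsilon)\leq 1$, so this quantity is non-increasing in $i$. Hence for every $i\leq\floor{\epsilon M}\leq \epsilon M$,
\begin{equation*}
    \epsilon^i(1-\epsilon)^{M-i}\geq \epsilon^{\epsilon M}(1-\epsilon)^{(1-\epsilon)M}.
\end{equation*}
Plugging this uniform lower bound back in and factoring out the common factor yields
\begin{equation*}
    1 \geq \epsilon^{\epsilon M}(1-\epsilon)^{(1-\epsilon)M}\sum_{i=0}^{\floor{\epsilon M}}\binom{M}{i}.
\end{equation*}

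Finally, rearranging and taking logs in base $2$, one observes that $\epsilon^{-\epsilon M}(1-\epsilon)^{-(1-\epsilon)M} = 2^{-M(\epsilon \log_2 \epsilon + (1-\epsilon)\log_2(1-\epsilon))} = 2^{MH(\epsilon)}$ by Definition \ref{def:bin-ent}. This concludes the bound. There is no serious obstacle here; the only subtle point is verifying the monotonicity direction and remembering to use $i\leq\epsilon M$ rather than $i\leq\floor{\epsilon M}$ to apply the inequality cleanly.
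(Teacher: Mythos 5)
Your proof is correct: the binomial-theorem expansion of $1=(\epsilon+(1-\epsilon))^M$, truncation, the monotonicity of $\epsilon^i(1-\epsilon)^{M-i}=(1-\epsilon)^M(\epsilon/(1-\epsilon))^i$ in $i$ when $\epsilon\le 1/2$, and the final rearrangement into $2^{MH(\epsilon)}$ all go through without issue, and the separate treatment of $\epsilon=0$ correctly avoids $\log 0$. The paper itself does not prove this proposition but simply cites \cite{lint1999introduction}; the argument you give is the standard one found there, so there is no discrepancy to report.
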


\end{document}